\renewcommand*{\backref}[1]{}
\renewcommand*{\backrefalt}[4]{\ifcase#1\relax\or(page #2)\else(pages #2)\fi}
\newcommand{\blue}[1]{\textcolor{blue}{#1}}
\newcommand{\red}[1]{\textcolor{red}{#1}}
\newcommand{\typocorrect}[1]{#1}
\newtheorem{thm}{Theorem}
\newtheorem{proposition}{Proposition}
\newtheorem{property}{Property}
\title{Learning Probabilistic Symmetrization for Architecture Agnostic Equivariance}
\author{%
    \parbox{0.7\linewidth}{
    \centering
    \vspace{-0.1cm}
    Jinwoo Kim\hspace{1.5em}
    Tien Dat Nguyen\hspace{1.5em}
    Ayhan Suleymanzade\hspace{1.5em}
    Hyeokjun An\hspace{1.5em}
    Seunghoon Hong
    }\vspace{0.12cm}\\
    KAIST
    \vspace{-0.1cm}
}
\begin{document}

\maketitle

\vspace{-0.2cm}
\begin{abstract}
\vspace{-0.2cm}
    We present a novel framework to overcome the limitations of equivariant architectures in learning functions with group symmetries.
    In contrary to equivariant architectures, we use an arbitrary base model such as an MLP or a transformer and symmetrize it to be equivariant to the given group by employing a small equivariant network that parameterizes the probabilistic distribution underlying the symmetrization.
    The distribution is end-to-end trained with the base model which can maximize performance while reducing sample complexity of symmetrization.
    We show that this approach ensures not only equivariance to given group but also universal approximation capability in expectation.
    We implement our method on various base models, including patch-based transformers that can be initialized from pretrained vision transformers, and test them for a wide range of symmetry groups including permutation and Euclidean groups and their combinations.
    Empirical tests show competitive results against tailored equivariant architectures, suggesting the potential for learning equivariant functions for diverse groups using a non-equivariant universal base architecture.
    We further show evidence of enhanced learning in symmetric modalities, like graphs, when pretrained from non-symmetric modalities, like vision.
    Code is available at \url{https://github.com/jw9730/lps}.
\vspace{-0.4cm}
\end{abstract}

\section{Introduction}\label{sec:introduction}

Many perception problems in machine learning involve functions that are invariant or equivariant to certain symmetry group of transformations of data.
Examples include learning on sets and graphs, point clouds, molecules, proteins, and physical data, to name a few~\cite{bogatskiy2022symmetry, bronstein2021geometric}.
Equivariant architecture design has emerged as a successful approach, where every building block of a model is carefully restricted to be equivariant to a symmetry group of interest~\cite{battaglia2018relational, bronstein2021geometric, deng2021vector}.
However, equivariant architecture design faces fundamental limitations, as individual construction of models for each group can be laborious or computationally expensive~\cite{thomas2018tensor, maron2019on, morris2019weisfeiler, kim2021transformers}, the architectural restrictions often lead to limited expressive power~\cite{xu2019how, maron2019provably, zhang2021rethinking, joshi2023on}, and the knowledge learned from one problem cannot be easily transferred to others of different symmetries as the architecture would be incompatible.

This motivates us to seek a \textbf{symmetrization} solution that can achieve group invariance and equivariance with general-purpose, group-agnostic architectures such as an MLP or a transformer.
As a basic form of symmetrization, any parameterized function $f_\theta:\mathcal{X}\to\mathcal{Y}$ on vector spaces $\mathcal{X}$, $\mathcal{Y}$ can be made invariant or equivariant by group averaging~\cite{yarotsky2018universal, murphy2019janossy}, \emph{i.e.}, averaging over all possible transformations of inputs $\mathbf{x}\in\mathcal{X}$ and outputs $\mathbf{y}\in\mathcal{Y}$ by a symmetry group $G=\{g\}$:
\begin{align}\label{eq:group_averaging}
    \phi_{\theta}(\mathbf{x}) = \frac{1}{|G|}\sum_{g\in G}g\cdot f_{\theta}(g^{-1}\cdot \mathbf{x}),
\end{align}
where $\phi_{\theta}$ is equivariant or invariant to the group $G$.
An important advantage is that the symmetrized function $\phi_\theta$ can leverage the expressive power of the base function $f_\theta$; it has been shown that $\phi_\theta$ is a universal approximator of invariant or equivariant functions if $f_\theta$ is a universal approximator~\cite{yarotsky2018universal}, which includes an MLP~\cite{hornik1989multilayer} or a transformer~\cite{yun2020are}.
On the other hand, an immediate challenge is that for many practical groups involving permutation and rotations, the cardinality of the group $|G|$ is large or infinite, so the exact averaging is intractable.
Due to this, existing symmetrization approaches often focus on small finite groups~\cite{basu2022equi, mouli2021neural, vanderpol2020mdp, kicki2021a}, manually derive smaller subsets of the entire group \emph{e.g.}, a frame~\cite{puny2022frame} to average over~\cite{murphy2019janossy, murphy2019relational}, or implement a relaxed version of equivariance~\cite{kaba2022equivariance, sannai2021equivariant}.

An alternative method for tractability is to interpret Eq.~\eqref{eq:group_averaging} as an expectation with uniform distribution $\textnormal{Unif}(G)$ over the compact group $G$~\cite{mezzadri2006how}, and use sampling-based average to estimate it~\cite{yarotsky2018universal, murphy2019janossy, murphy2019relational}:
\begin{align}\label{eq:mc_group_averaging}
    \phi_\theta(\mathbf{x}) = \mathbb{E}_{g\sim \textnormal{Unif}(G)}\left[g\cdot f_{\theta}(g^{-1}\cdot \mathbf{x})\right],
\end{align}
where $g\cdot f_{\theta}(g^{-1}\cdot \mathbf{x})$ serves as an unbiased estimator of $\phi_\theta(\mathbf{x})$.
While simple and general, this approach has practical issues that the base function $f_\theta$ is burdened to learn all equally possible group transformations, and the expectedly high variance of the estimator can lead to challenges in sampling-based training due to large variance of gradients as well as sample complexity of inference.

Our key idea is to replace the uniform distribution $\textnormal{Unif}(G)$ for the expectation in Eq.~\eqref{eq:mc_group_averaging} with a parameterized distribution $p_\omega(g|\mathbf{x})$ in a way that equivariance and expressive power are always guaranteed, and train it end-to-end with the base function $f_\theta$ to directly minimize task loss.
We show that the distribution $p_\omega(g|\mathbf{x})$ only needs to satisfy one simple condition to guarantee equivariance and expressive power: it has to be probabilistically equivariant~\cite{bloem-reddy2020probabilistic}.
This allows us to generally implement $p_\omega(g|\mathbf{x})$ as a noise-outsourced map \typocorrect{$(\mathbf{x}, \boldsymbol{\epsilon})\mapsto g$} with an invariant noise \typocorrect{$\boldsymbol{\epsilon}$} and a small equivariant network \typocorrect{$q_\omega$}, which enables gradient-based training with reparameterization~\cite{kingma2014auto-encoding}.
As $p_\omega$ is trained, it can enhance the learning of $f_\theta$ by producing group transformations of lower variance compared to $\textnormal{Unif}(G)$ so that $f_\theta$ is less burdened, and coordinating with $f_\theta$ to maximize task performance.
We refer to our approach as probabilistic symmetrization.
An overview is provided in Figure~\ref{fig:overview}.

\begin{figure}[!t]
    \centering
    \vspace{-0.5cm}
    \includegraphics[width=0.87\textwidth]{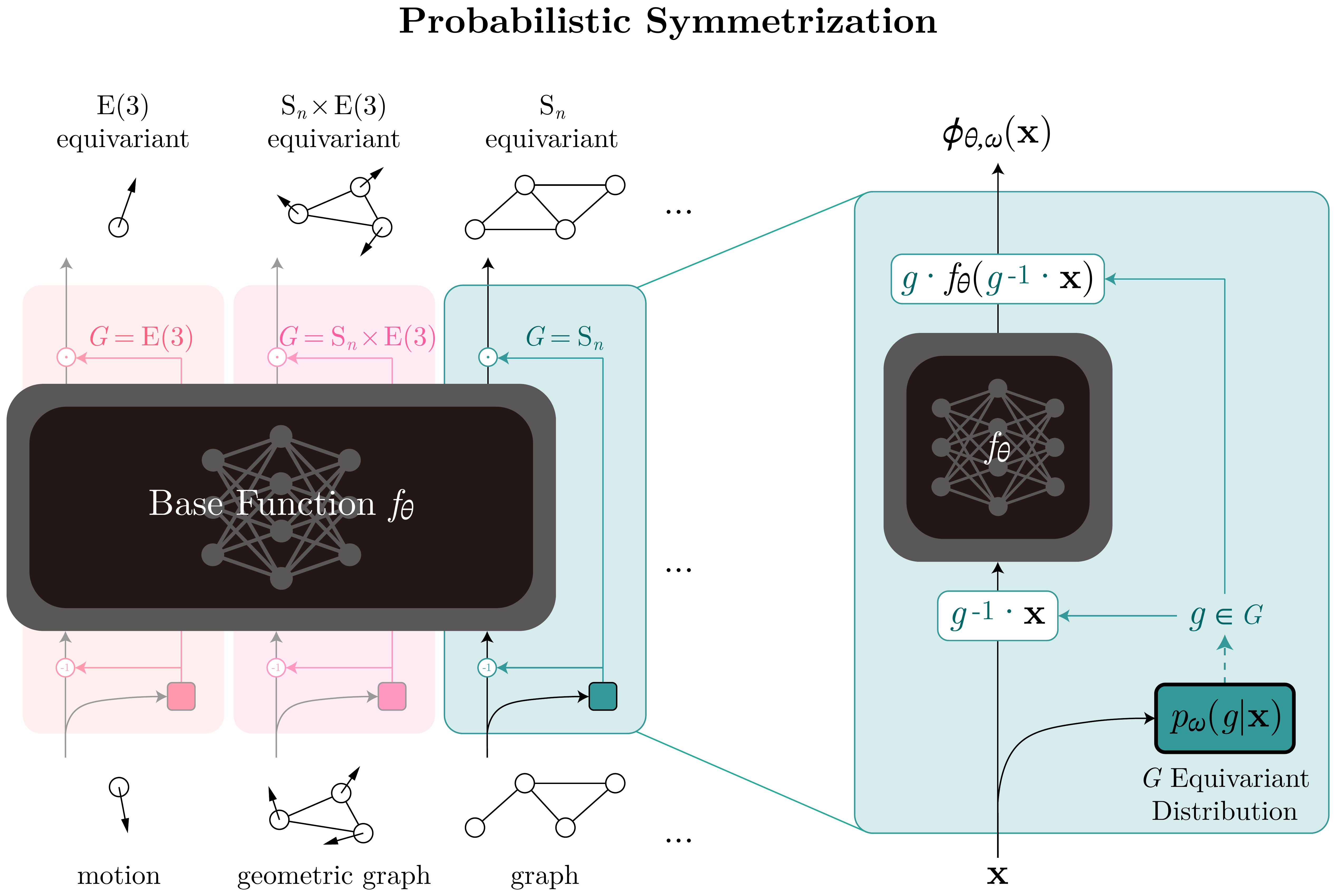}
    \vspace{-0.1cm}
    \caption{Overview of probabilistic symmetrization.
    We symmetrize an unconstrained base function~$f_\theta$ into an equivariant function $\phi_{\theta,\omega}$ for group $G$ using a learned equivariant distribution $p_\omega(g|\mathbf{x})$.}
    \label{fig:overview}
    \vspace{-0.3cm}
\end{figure}

We implement and test our method with two general-purpose architectures as the base function~$f_\theta$: MLP and transformer.
In particular, our transformer backbone is architecturally identical to patch-based vision transformers~\cite{dosovitskiy2021an}, which allows us to initialize most of its parameters from ImageNet-21k pretrained weights~\cite{steiner2021how} and only replace the input and output projections to match task dimensions.
We implement the conditional distribution $p_\omega(g|\mathbf{x})$ for a wide range of practical symmetry groups including permutation ($\textnormal{S}_n$) and Euclidean groups ($\textnormal{O}/\textnormal{SO}(d)$ and $\textnormal{E}/\textnormal{SE}(d)$) and their product combinations (\emph{e.g.}, $\textnormal{S}_n\times\textnormal{O}(3)$), all of which are combinatorial or infinite groups.
Empirical tests on a wide range of invariant and equivariant tasks involving graphs and motion data show competitive results against tailored equivariant architectures as well as existing symmetrization methods~\cite{puny2022frame, kaba2022equivariance}.
This suggests the potential for learning invariant or equivariant functions for diverse groups with a group-agnostic general-purpose backbone.
We further show evidence that pretraining from non-symmetric modality (vision) leads to enhanced learning in symmetric modality (graphs).

\section{Probabilistic Symmetrization for Equivariance}\label{sec:method}
We introduce and analyze our approach called probabilistic symmetrization, which involves an equivariant distribution $p_\omega$ and group-agnostic base function $f_\theta$, in Section~\ref{sec:probabilistic_symmetrization}.
We then describe implementation of~$p_\omega$ for practical groups including permutations and rotations in Section~\ref{sec:equivariant_distribution}.
Then, we describe our choice of base function $f_\theta$ focusing on MLP and, transformers in particular, in Section~\ref{sec:base_function}.
All proofs \typocorrect{can be found} in Appendix~\ref{sec:apdx_proofs}.

\paragraph{Problem Setup}
In general, our goal is to construct a function $\phi:\mathcal{X}\to\mathcal{Y}$ on finite vector spaces $\mathcal{X},\mathcal{Y}$ that is invariant or equivariant to symmetry specified by a \typocorrect{group} $G=\{g\}$\footnote{In this paper, we assume \typocorrect{the group} $G$ to be compact.}.
This is formally described by specifying how the group act as transformations on the input and output.
A group representation $\rho:G\to\textnormal{GL}(\mathcal{X})$, where $\textnormal{GL}(\mathcal{X})$ is the set of all invertible matrices on $\mathcal{X}$, associates each group element $g\in G$ to an invertible matrix $\rho(g)$ that transforms a given vector $\mathbf{x}\in\mathcal{X}$ through $\mathbf{x}\mapsto g\cdot\mathbf{x} = \rho(g)\mathbf{x}$.
Given that, a function $\phi:\mathcal{X}\to\mathcal{Y}$ is $G$ equivariant if:
\begin{align}\label{eq:equivariance}
    \phi(\rho_1(g)\mathbf{x})=\rho_2(g)\phi(\mathbf{x}),
    \ \ \ \ 
    \forall \mathbf{x}\in\mathcal{X}, g\in G,
\end{align}
where the representations $\rho_1$ and $\rho_2$ are on the input and output, respectively.
$G$~invariance is a special case of equivariance when the output representation is trivial, $\rho_2(g)=\mathbf{I}$.

\subsection{Probabilistic Symmetrization}\label{sec:probabilistic_symmetrization}
To construct a $G$ equivariant function $\phi_\theta$, group averaging symmetrizes an arbitrary base function $f_\theta:\mathcal{X}\to\mathcal{Y}$ by taking expectation with uniform distribution over the group~(Eq.~\eqref{eq:mc_group_averaging}).
Instead, we propose to use an input-conditional parameterized distribution $p_\omega(g|\mathbf{x})$ and symmetrize $f_\theta$ as follows:
\begin{align}\label{eq:probabilistic_symmetrization}
    \phi_{\theta,\omega}(\mathbf{x}) = \mathbb{E}_{p_\omega(g|\mathbf{x})}\left[\rho_2(g)f_\theta(\rho_1(g)^{-1}\mathbf{x})\right],
\end{align}
where the distribution $p_\omega(g|\mathbf{x})$ itself satisfies probabilistic $G$ equivariance:
\begin{align}\label{eq:probabilistic_equivariance}
    p_\omega(g|\mathbf{x}) = p_\omega(g'g|\rho_1(g')\mathbf{x}),
    \ \ \ \ 
    \forall \mathbf{x}\in\mathcal{X}, g, g'\in G.
\end{align}
Importantly, we show that probabilistic symmetrization with equivariant $p_\omega$ guarantees equivariance as well as expressive power of the symmetrized $\phi_{\theta,\omega}$.

\begin{thm}\label{thm:probabilistic_symmetrization_equivariance}
    If $p_\omega$ is $G$ equivariant, then $\phi_{\theta,\omega}$ is $G$ equivariant for arbitrary $f_\theta$.
\end{thm}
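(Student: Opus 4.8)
The plan is to verify the equivariance identity $\phi_{\theta,\omega}(\rho_1(g')\mathbf{x}) = \rho_2(g')\phi_{\theta,\omega}(\mathbf{x})$ for all $g'\in G$ directly from the definition in Eq.~\eqref{eq:probabilistic_symmetrization}, by performing a change of variables inside the expectation that exploits the probabilistic equivariance of $p_\omega$ in Eq.~\eqref{eq:probabilistic_equivariance}. Invariance then follows immediately as the special case $\rho_2(g)=\mathbf{I}$.

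First I would expand $\phi_{\theta,\omega}(\rho_1(g')\mathbf{x}) = \mathbb{E}_{g\sim p_\omega(g|\rho_1(g')\mathbf{x})}\left[\rho_2(g)f_\theta(\rho_1(g)^{-1}\rho_1(g')\mathbf{x})\right]$. The condition $p_\omega(g|\mathbf{x}) = p_\omega(g'g|\rho_1(g')\mathbf{x})$ says precisely that the law of $g$ under $p_\omega(\cdot|\rho_1(g')\mathbf{x})$ is the pushforward of $p_\omega(\cdot|\mathbf{x})$ along the left-translation map $g\mapsto g'g$. Substituting accordingly, the expectation rewrites as $\mathbb{E}_{g\sim p_\omega(g|\mathbf{x})}\left[\rho_2(g'g)f_\theta(\rho_1(g'g)^{-1}\rho_1(g')\mathbf{x})\right]$.

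Next I would use that $\rho_1$ and $\rho_2$ are group representations, hence homomorphisms: $\rho_2(g'g) = \rho_2(g')\rho_2(g)$ and $\rho_1(g'g)^{-1} = \rho_1(g)^{-1}\rho_1(g')^{-1}$, so that $\rho_1(g'g)^{-1}\rho_1(g') = \rho_1(g)^{-1}$ and the argument of $f_\theta$ collapses back to $\rho_1(g)^{-1}\mathbf{x}$. Pulling the constant matrix $\rho_2(g')$ out of the expectation by linearity then yields $\rho_2(g')\,\mathbb{E}_{g\sim p_\omega(g|\mathbf{x})}\left[\rho_2(g)f_\theta(\rho_1(g)^{-1}\mathbf{x})\right] = \rho_2(g')\phi_{\theta,\omega}(\mathbf{x})$, which is the claimed equivariance.

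The computation itself is short, so the only real subtlety — the main obstacle — is making the change-of-variables step rigorous: it should be phrased as an identity of pushforward probability measures on $G$ (valid for any compact $G$, not merely for densities against Haar measure), and one should note that the expectations are well defined because $G$ is compact and $\rho_1,\rho_2$ are continuous, so $g\mapsto\rho_2(g)f_\theta(\rho_1(g)^{-1}\mathbf{x})$ is bounded. No regularity assumption on $f_\theta$ beyond existence of this expectation is needed, which is why the statement holds for arbitrary $f_\theta$.
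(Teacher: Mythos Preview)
Your proposal is correct and follows essentially the same argument as the paper's own proof: expand the expectation at the transformed input, perform the left-translation change of variables $g\mapsto g'g$ (the paper writes this as introducing $h=g'^{-1}g$), invoke the equivariance of $p_\omega$ to rewrite the distribution, simplify via the homomorphism property of $\rho_1,\rho_2$, and factor out $\rho_2(g')$. Your added remarks on phrasing the change of variables as a pushforward of measures and on well-definedness of the expectation are sound but go slightly beyond what the paper makes explicit.
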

\begin{proof}
    The proof can be found in Appendix~\ref{sec:apdx_proof_thm_probabilistic_symmetrization_equivariance}.
\end{proof}

\begin{thm}\label{thm:probabilistic_symmetrization_universality}
    If $p_\omega$ is $G$ equivariant and $f_\theta$ is a universal approximator, then $\phi_{\theta,\omega}$ is a universal approximator of $G$ equivariant functions.
\end{thm}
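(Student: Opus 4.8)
The plan is to show that the symmetrized function $\phi_{\theta,\omega}$ can approximate any target $G$ equivariant function $\psi$ arbitrarily well, by choosing $f_\theta$ close to $\psi$ itself. The key observation is that group averaging (or its probabilistic variant) acts as the identity on functions that are already equivariant: if $f_\theta = \psi$ with $\psi$ equivariant, then for any $g$ in the support of $p_\omega(\cdot|\mathbf{x})$ we have $\rho_2(g)\psi(\rho_1(g)^{-1}\mathbf{x}) = \rho_2(g)\rho_2(g)^{-1}\psi(\mathbf{x}) = \psi(\mathbf{x})$, so the expectation in Eq.~\eqref{eq:probabilistic_symmetrization} collapses to $\psi(\mathbf{x})$ regardless of the distribution. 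Hence the symmetrization map $f \mapsto \phi_{\cdot,\omega}[f]$ is a projection whose image contains all equivariant functions.

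First I would fix a compact domain $K \subseteq \mathcal{X}$ and a tolerance $\varepsilon > 0$, and let $\psi: \mathcal{X} \to \mathcal{Y}$ be the target $G$ equivariant function. Using that $f_\theta$ is a universal approximator, I would pick parameters $\theta$ so that $\sup_{\mathbf{x} \in K'} \|f_\theta(\mathbf{x}) - \psi(\mathbf{x})\| < \delta$ on a suitably enlarged compact set $K' = \{\rho_1(g)^{-1}\mathbf{x} : \mathbf{x} \in K, g \in G\}$; this set is compact because $G$ is compact and the action is continuous, and it is exactly the set of points at which $f_\theta$ gets evaluated inside the expectation. Then I would bound
\begin{align}
\|\phi_{\theta,\omega}(\mathbf{x}) - \psi(\mathbf{x})\|
&= \left\| \mathbb{E}_{p_\omega(g|\mathbf{x})}\left[\rho_2(g)\left(f_\theta(\rho_1(g)^{-1}\mathbf{x}) - \psi(\rho_1(g)^{-1}\mathbf{x})\right)\right] \right\| \nonumber \\
&\le \mathbb{E}_{p_\omega(g|\mathbf{x})}\left[\|\rho_2(g)\|_{\mathrm{op}}\, \|f_\theta(\rho_1(g)^{-1}\mathbf{x}) - \psi(\rho_1(g)^{-1}\mathbf{x})\|\right] \le C\delta, \nonumber
\end{align}
where the first equality uses $\rho_2(g)\psi(\rho_1(g)^{-1}\mathbf{x}) = \psi(\mathbf{x})$ (equivariance of $\psi$) to pull $\psi(\mathbf{x})$ inside the expectation, and $C = \sup_{g \in G}\|\rho_2(g)\|_{\mathrm{op}} < \infty$ since $G$ is compact and $\rho_2$ is continuous. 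Choosing $\delta = \varepsilon / C$ gives the claim.

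The main obstacle — really the only subtlety — is making the universal approximation hypothesis interact cleanly with the enlarged evaluation domain $K'$ and with whatever topology of approximation is intended (uniform on compacts, or $L^p$ against a measure). I would need to state the universality of $f_\theta$ in a form strong enough that the approximation holds uniformly over $K'$, then transport the bound back to $K$; the compactness of $G$ is what guarantees $K'$ is compact and that the operator norms $\|\rho_2(g)\|_{\mathrm{op}}$ are uniformly bounded, so no issue arises there. One should also note that Theorem~\ref{thm:probabilistic_symmetrization_equivariance} already guarantees $\phi_{\theta,\omega}$ is genuinely $G$ equivariant, so the constructed approximant lies in the correct function class; combined with the $\varepsilon$-closeness above, this establishes that $\phi_{\theta,\omega}$ is a universal approximator of $G$ equivariant functions.
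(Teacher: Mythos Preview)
Your proposal is correct and follows essentially the same approach as the paper's proof: both exploit that the symmetrization operator fixes equivariant functions, pull $\psi(\mathbf{x})$ inside the expectation via $\rho_2(g)\psi(\rho_1(g)^{-1}\mathbf{x}) = \psi(\mathbf{x})$, bound $\|\rho_2(g)\|_{\mathrm{op}}$ uniformly using compactness of $G$, and apply universal approximation of $f_\theta$ on the enlarged compact set $K' = \bigcup_{g\in G}\rho_1(g)^{-1}K$. The only cosmetic difference is that the paper pulls the operator-norm bound outside before the expectation whereas you first move the norm inside the expectation and then bound; the structure and all key ingredients are the same.
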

\begin{proof}
    The proof can be found in Appendix~\ref{sec:apdx_proof_thm_probabilistic_symmetrization_universality}.
\end{proof}

While the base function $f_\theta$ that guarantees universal approximation \typocorrect{can be} chosen in a group-agnostic manner \emph{e.g.}, an MLP~\cite{hornik1989multilayer, cybenko1989approximation} or a transformer on token sequences~\cite{yun2020are}, the distribution $p_\omega$ needs to be instantiated group specifically to satisfy $G$ equivariance.
A simplistic choice is using uniform distribution $\textnormal{Unif}(G)$ for all inputs~$\mathbf{x}$ with no parameterization (reducing to group averaging), which is technically equivariant and therefore guarantees equivariance and universality.
However, appropriately parameterizing and learning $p_\omega$ can provide distinguished advantages compared to the uniform distribution, as it can \textbf{(1)} learn from data to collaborate with (pre-trained) base function $f_\theta$ to maximize task performance and \textbf{(2)} learn to produce \typocorrect{more consistent} samples $g\sim p_\omega(g|\mathbf{x})$ that can offer more stable gradients for the base function $f_\theta$ during \typocorrect{early training}.

We now provide a generic blueprint of $G$ equivariant distribution $p_\omega(g|\mathbf{x})$ for any compact group~$G$.
Our goal is to sample $g\sim p_\omega(g|\mathbf{x})$ to obtain group representation~$\rho(g)$ for symmetrization (Eq.~\eqref{eq:probabilistic_symmetrization}) in a differentiable manner so that $p_\omega$ can be trained end-to-end.
Since we only need sampling \typocorrect{and there is} no need to evaluate likelihoods, we simply implement $p_\omega(g|\mathbf{x})$ as a noise-outsourced, differentiable transformation $q_\omega(\mathbf{x}, \boldsymbol{\epsilon})$ of a noise variable $\boldsymbol{\epsilon}\in\mathcal{E}$ that directly outputs a group representation $\rho(g)$:
\begin{align}\label{eq:reparameterization}
    \rho(g) = q_\omega(\mathbf{x}, \boldsymbol{\epsilon}),
    \ \ \ \ 
    \boldsymbol{\epsilon}\sim p(\boldsymbol{\epsilon}),
\end{align}
where $q_\omega$ is $G$ equivariant and $p(\boldsymbol{\epsilon})$ is $G$ invariant under an appropriate representation $\rho'$:
\begin{align}
    q_\omega(\rho_1(g)\mathbf{x}, \rho'(g)\boldsymbol{\epsilon}) = \rho(g)q_\omega(\mathbf{x},\boldsymbol{\epsilon}),
    \ \ \ \ 
    p(\boldsymbol{\epsilon}) = p(\rho'(g)\boldsymbol{\epsilon}),
    \ \ \ \  
    \forall\mathbf{x}\in\mathcal{X},
    \boldsymbol{\epsilon}\in\mathcal{E},
    g\in G.
\end{align}
Given above implementation, we can show the $G$ equivariance of $p_\omega$:
\begin{thm}\label{thm:distribution_equivariance}
    If $q_\omega$ is $G$ equivariant and $ p(\boldsymbol{\epsilon})$ is $G$ invariant under representation~$\rho'$ that $|\det{\rho'
    (g)}|=1\forall g\in G$, the distribution $p_\omega(g|\mathbf{x})$ characterized by $q_\omega:(\mathbf{x}, \boldsymbol{\epsilon})\mapsto\rho(g)$ is $G$ equivariant.
\end{thm}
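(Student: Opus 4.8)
The plan is to characterize $p_\omega(g\mid\mathbf{x})$ as the pushforward of the noise distribution $p(\boldsymbol{\epsilon})$ along the map $\boldsymbol{\epsilon}\mapsto q_\omega(\mathbf{x},\boldsymbol{\epsilon})$, and then verify the probabilistic equivariance condition of Eq.~\eqref{eq:probabilistic_equivariance} directly by a change of variables in the noise space. Concretely, for any bounded measurable test function $h$ on (the image of) $\rho$, the reparameterization in Eq.~\eqref{eq:reparameterization} gives $\mathbb{E}_{p_\omega(g\mid\mathbf{x})}[h(g)] = \mathbb{E}_{\boldsymbol{\epsilon}\sim p(\boldsymbol{\epsilon})}[h(q_\omega(\mathbf{x},\boldsymbol{\epsilon}))]$, and I would use this identity as the working definition of $p_\omega$, so that checking Eq.~\eqref{eq:probabilistic_equivariance} reduces to an equality of expectations.

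Next I would fix $g'\in G$ and compute $\mathbb{E}_{p_\omega(g\mid\rho_1(g')\mathbf{x})}[h(g)] = \mathbb{E}_{\boldsymbol{\epsilon}\sim p(\boldsymbol{\epsilon})}[h(q_\omega(\rho_1(g')\mathbf{x},\boldsymbol{\epsilon}))]$. The key manipulation is the substitution $\boldsymbol{\epsilon}\mapsto\rho'(g')\boldsymbol{\epsilon}$: since $p(\boldsymbol{\epsilon})$ is $G$ invariant under $\rho'$ and $|\det\rho'(g')|=1$, this change of variables leaves the integral unchanged, i.e. $\mathbb{E}_{\boldsymbol{\epsilon}\sim p(\boldsymbol{\epsilon})}[F(\boldsymbol{\epsilon})] = \mathbb{E}_{\boldsymbol{\epsilon}\sim p(\boldsymbol{\epsilon})}[F(\rho'(g')\boldsymbol{\epsilon})]$ for any integrable $F$. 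This is precisely where the unit-determinant hypothesis is essential: it is what upgrades the density-level invariance $p(\rho'(g')\boldsymbol{\epsilon})=p(\boldsymbol{\epsilon})$ to measure-level (pushforward) invariance, since the Jacobian factor $|\det\rho'(g')^{-1}|$ appearing in the substitution equals $1$. Applying this with $F(\boldsymbol{\epsilon})=h(q_\omega(\rho_1(g')\mathbf{x},\boldsymbol{\epsilon}))$ turns the expectation into $\mathbb{E}_{\boldsymbol{\epsilon}\sim p(\boldsymbol{\epsilon})}[h(q_\omega(\rho_1(g')\mathbf{x},\rho'(g')\boldsymbol{\epsilon}))]$.

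I would then invoke the $G$ equivariance of $q_\omega$, namely $q_\omega(\rho_1(g')\mathbf{x},\rho'(g')\boldsymbol{\epsilon}) = \rho(g')q_\omega(\mathbf{x},\boldsymbol{\epsilon})$, to rewrite this as $\mathbb{E}_{\boldsymbol{\epsilon}\sim p(\boldsymbol{\epsilon})}[h(\rho(g')q_\omega(\mathbf{x},\boldsymbol{\epsilon}))]$. Reading off $q_\omega(\mathbf{x},\boldsymbol{\epsilon})=\rho(g)$ and using $\rho(g')\rho(g)=\rho(g'g)$, this equals $\mathbb{E}_{p_\omega(g\mid\mathbf{x})}[h(g'g)]$. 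Since $h$ was arbitrary, the law of $g$ under $p_\omega(\cdot\mid\rho_1(g')\mathbf{x})$ coincides with the law of $g'g$ for $g\sim p_\omega(\cdot\mid\mathbf{x})$, which is exactly Eq.~\eqref{eq:probabilistic_equivariance}, and as $g'$ was arbitrary this proves $G$ equivariance of $p_\omega$.

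I expect the main subtlety to be the bookkeeping between group elements and their representations: the reparameterization outputs $\rho(g)$ rather than $g$, so I would either phrase the entire argument at the level of pushforward measures on $\{\rho(g):g\in G\}\subseteq\textnormal{GL}(\mathcal{X})$, or assume a measurable section / faithfulness so the identification $g\leftrightarrow\rho(g)$ is unambiguous; in either case the group multiplication in Eq.~\eqref{eq:probabilistic_equivariance} is mirrored by matrix multiplication. The only other point requiring care is making the change-of-variables step rigorous for a general finite-dimensional noise space $\mathcal{E}$ with linear action $\rho'$ — routine, but since it is the sole place the hypothesis $|\det\rho'(g)|=1$ enters, I would isolate it as a one-line lemma on invariance of $p$ under the pushforward $\rho'(g)_\ast$.
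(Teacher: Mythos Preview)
Your proposal is correct and follows essentially the same approach as the paper: both express $p_\omega(g\mid\mathbf{x})$ via the noise integral, perform a change of variables $\boldsymbol{\epsilon}\mapsto\rho'(g')\boldsymbol{\epsilon}$ in the noise space (invoking $|\det\rho'(g')|=1$ and $G$ invariance of $p(\boldsymbol{\epsilon})$), and then apply the equivariance of $q_\omega$ to conclude. The only cosmetic difference is that the paper writes the argument with delta distributions $\delta(\rho(g)=q_\omega(\mathbf{x},\boldsymbol{\epsilon}))$ integrated against $p(\boldsymbol{\epsilon})$, whereas you phrase it via arbitrary test functions and pushforward measures; your formulation is somewhat more rigorous and also handles the $g\leftrightarrow\rho(g)$ bookkeeping (faithfulness) more explicitly than the paper does.
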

\begin{proof}
    The proof can be found in Appendix~\ref{sec:apdx_proof_thm_distribution_equivariance}.
\end{proof}

In practice, one can use any available $G$ equivariant neural network to implement $q_\omega$, \emph{e.g.}, a graph neural network for the symmetric group $\textnormal{S}_n$, or an equivariant MLP which can be constructed for any matrix group~\cite{finzi2021a}.
Since we expect most of the reasoning to be done by the base function $f_\theta$, the equivariant network $q_\omega$ can be small and relatively less expressive.
This allows us to get less affected by their known issues in expressiveness and scaling~\cite{xu2019how, oono2020graph, cai2020a, joshi2023on}.
For the noise $\boldsymbol{\epsilon}\sim p(\boldsymbol{\epsilon})$, simple choices often suffice for $G$ invariance.
For example, standard normal $\boldsymbol{\epsilon}\sim \mathcal{N}(0, \mathbf{I}_n)$ provides invariance for the symmetric group $\textnormal{S}_n$ as well as the (special) orthogonal group $\textnormal{O}(n)$ and $\textnormal{SO}(n)$.

One important detail in designing $q_\omega$ is constraining its output to be a valid group representation~$\rho(g)$.
For this, we apply appropriate postprocessing to refine neural network features into group representations, \emph{e.g.}, Gram-Schmidt orthogonalization to obtain a representation $\rho(g)\in\mathbb{R}^{n\times n}$ of the orthogonal group $g\in\textnormal{O}(n)$.
Importantly, to not break the $G$ equivariance of $q_\omega$, this postprocessing needs to be equivariant itself, \emph{e.g.}, Gram-Schmidt process is itself $\textnormal{O}(n)$ equivariant~\cite{kaba2022equivariance}.
Implementations of $p_\omega$ for a range of practical symmetry groups are provided in detail in Section~\ref{sec:equivariant_distribution}.

\subsection{Equivariant Distribution \texorpdfstring{$p_\omega$}{p\_ω}}\label{sec:equivariant_distribution}
We present implementations of the $G$ equivariant distribution $p_\omega(g|\mathbf{x})$ for a range of practical groups demonstrated in our experiments (Section~\ref{sec:experiments}).
Formal proofs of correctness are in Appendix~\ref{sec:apdx_proof_correctness_equivariant_distribution}.

\paragraph{Symmetric Group $\textnormal{S}_n$}
The symmetric group $\textnormal{S}_n$ over a finite set of $n$ elements contains all permutations of the set, which describes symmetry to ordering \typocorrect{desired} for learning set and graph data.
The base representation is given by $\rho(g)=\mathbf{P}_g$ where $\mathbf{P}_g\in\{0,1\}^{n\times n}$ is a permutation matrix for $g$.

To implement $\textnormal{S}_n$ equivariant distribution $p_\omega(g|\mathbf{x})$ that provides permutation matrices~$\mathbf{P}_g$ from graph data $\mathbf{x}$, we use the following.
We first sample invariant noise $\boldsymbol{\epsilon}\in\mathbb{R}^{n\times d}$ from i.i.d. uniform $\textnormal{Unif}[0, \eta]$ with noise scale $\eta$.
For the $\textnormal{S}_n$ equivariant map $q_\omega:(\mathbf{x}, \boldsymbol{\epsilon})\mapsto\rho(g)$, we first use a graph neural network (GNN) as an equivariant map $(\mathbf{x}, \boldsymbol{\epsilon})\mapsto \mathbf{Z}$ that outputs nodewise scalar $\mathbf{Z}\in\mathbb{R}^n$.
Then, assuming $\mathbf{Z}$ is tie-free\footnote{This can be assumed since the invariant noise $\boldsymbol{\epsilon}\in\mathbb{R}^{n\times d}$ serves as tiebreaker between the $n$ nodes.}, we use below $\textnormal{argsort}$ operator $\mathbf{Z}\mapsto\mathbf{P}_g$ to obtain permutation matrix~\cite{prillo2020softsort, winter2021permutation}:
\begin{align}\label{eq:argsort}
    \mathbf{P}_g = \textnormal{eq}(\mathbf{Z}\boldsymbol{1}^\top, \boldsymbol{1}\textnormal{sort}(\mathbf{Z})^\top),
\end{align}
where $\textnormal{eq}$ denotes elementwise equality indicator.
The $\textnormal{argsort}$ operator is $\textnormal{S}_n$ equivariant, \emph{i.e.}, it maps $\mathbf{P}_{g'}\mathbf{Z}\mapsto \mathbf{P}_{g'}\mathbf{P}_{g}$ for all $\mathbf{P}_{g'}\in\textnormal{S}_n$.
To backpropagate through $\mathbf{P}_g$ during training, we use straight-through gradient estimator~\cite{bengio2013estimating} with an approximate permutation matrix $\hat{\mathbf{P}}_g\approx\mathbf{P}_g$ obtained from a differentiable relaxation of $\textnormal{argsort}$ operator~\cite{mena2018learning, grover2019stochastic, winter2021permutation}; details can be found in Appendix~\ref{sec:apdx_implementation_symmetric_group}.

\paragraph{Orthogonal Group $\textnormal{O}(n)$, $\textnormal{SO}(n)$}
The orthogonal group $\textnormal{O}(n)$ contains all roto-reflections in $\mathbb{R}^n$ around origin, and the special orthogonal group $\textnormal{SO}(n)$ contains all rotations without reflections.
These groups describe rotation symmetries \typocorrect{desirable} in learning geometric data.
The base group representation for $\textnormal{O}(n)$ is given by $\rho(g)=\mathbf{Q}_g$ where $\mathbf{Q}_g$ is the orthogonal matrix for $g$.
For $\textnormal{SO}(n)$, the representation is $\rho(g)=\mathbf{Q}_g^+$ where $\mathbf{Q}_g^+$ is the orthogonal matrix of $g$ with determinant~$+1$.

To implement $\textnormal{O}(n)$/$\textnormal{SO}(n)$ equivariant distribution $p_\omega(g|\mathbf{x})$ that provides orthogonal matrices given input data $\mathbf{x}$, we use the following design.
We first sample invariant noise $\boldsymbol{\epsilon}\in\mathbb{R}^{n\times d}$ from i.i.d. normal \typocorrect{$\mathcal{N}(0, \eta^2)$} with noise scale $\eta$.
For the $\textnormal{O}(n)$/$\textnormal{SO}(n)$ equivariant map $p_\omega:(\mathbf{x},\boldsymbol{\epsilon})\mapsto\rho(g)$, we first use an equivariant neural network as a map $(\mathbf{x}, \boldsymbol{\epsilon})\mapsto \mathbf{Z}$ that outputs $n$ features $\mathbf{Z}\in\mathbb{R}^{n\times n}$.
Then, assuming $\mathbf{Z}$ is full-rank\footnote{In practice, we add a random Gaussian matrix of a tiny magnitude to $\mathbf{Z}$ to prevent rank collapse.}, we use Gram-Schmidt process for orthogonalization, which is differentiable and $\textnormal{O}(n)$ equivariant~\cite{kaba2022equivariance}.
This completes the postprocessing $\mathbf{Z}\mapsto \mathbf{Q}_g$ for $\textnormal{O}(n)$.
For $\textnormal{SO}(n)$, we can further use a simple $\textnormal{scale}$ operator $\mathbf{Q}\mapsto\mathbf{Q}_g^+$ to set the determinant of orthogonalized matrix to $+1$:
\begin{align}
    \textnormal{scale}:\left[\begin{array}{c|c|c}
    &&\\
    \mathbf{Q}_1 & ... & \mathbf{Q}_n
    \\&&
    \end{array}\right]\mapsto \left[\begin{array}{c|c|c}
    &&\\
    \textnormal{det}(\mathbf{Q})\cdot \mathbf{Q}_1 & ... & \mathbf{Q}_n
    \\&&
    \end{array}\right],
\end{align}
The $\textnormal{scale}$ operator is differentiable and $\textnormal{SO}(n)$ equivariant, \emph{i.e.}, it maps $\mathbf{Q}_{g'}^+\mathbf{Q}\mapsto \mathbf{Q}_{g'}^+\mathbf{Q}_g^+$ for all $\mathbf{Q}_{g'}^+\in\textnormal{SO}(n)$, thereby completing the postprocessing $\mathbf{Z}\mapsto \mathbf{Q}_g^+$ for $\textnormal{SO}(n)$.

\paragraph{Euclidean Group $\textnormal{E}(n)$, $\textnormal{SE}(n)$}
The Euclidean group $\textnormal{E}(n)$ contains all roto-translations and reflections in $\mathbb{R}^n$ and their combinations, and the special Euclidean group $\textnormal{SE}(n)$ contains all roto-translations without reflections.
These groups are desired in learning physical systems such as a particle in motion.
Formally, the Euclidean group is given as a combination of orthogonal group and translation group $\textnormal{E}(n) = \textnormal{O}(n)\ltimes\textnormal{T}(n)$, and similarly $\textnormal{SE}(n) = \textnormal{SO}(n)\ltimes\textnormal{T}(n)$.
As the translation group $\textnormal{T}(n)$ is non-compact which violates our assumption for symmetrization, we handle it separately.
Following prior work~\cite{puny2022frame, kaba2022equivariance}, we subtract the centroid $\bar{\mathbf{x}}$ from the input data $\mathbf{x}$ as $\mathbf{x} - \bar{\mathbf{x}}$, and add it to the rotation symmetrized output as $\bar{\mathbf{x}} + g\cdot f_\theta(g^{-1}\cdot(\mathbf{x} - \bar{\mathbf{x}})))$ where $g$ is sampled from $\textnormal{O}(n)$/$\textnormal{SO}(n)$ equivariant $p_\omega(g|\mathbf{x} - \bar{\mathbf{x}})$.
This makes the overall symmetrized function $\textnormal{E}(n)$/$\textnormal{SE}(n)$ equivariant.

\paragraph{Product Group $H\times K$}
While we have described several groups individually, in practice we often encounter product combinations of groups $G = H\times K$ that describe joint symmetry to each group $H$ and $K$.
For example, $\textnormal{S}_n\times\textnormal{O}(3)$ describes joint symmetry to permutations and rotations, which is desired in learning point clouds, molecules, and particle interactions.
In general, an element of $H\times K$ is given as $g = (h, k)$ where $h\in H$ and $k\in K$, and group operations are applied elementwise $gg' = (h, k)(h', k') = (hh', kk')$.
The base group representation is accordingly given as pair of representations $\rho(g) = (\rho(h), \rho(k))$.
While a common approach to handling $H\times K$ is \emph{partially} symmetrizing on $H$ and imposing $K$ equivariance on the base architecture (\emph{e.g.}, rotational symmetrization of a graph neural network for $\textnormal{S}_n\times\textnormal{O}(3)$ equivariance~\cite{puny2022frame, kaba2022equivariance}), we extend to \emph{full symmetrization} on $H\times K$ since our goal is not imposing any constraint on the base function $f_\theta$.

To implement $H\times K$ equivariant distribution $p_\omega(g|\mathbf{x})$ that gives $\rho(g) = (\rho(h), \rho(k))$ from data $\mathbf{x}$, we use the following design.
We first sample invariant noise $\boldsymbol{\epsilon}$ from i.i.d. normal $\mathcal{N}(0, \eta^2)$ with scale~$\eta$.
For the $H\times K$ equivariant map $q_\omega:(\mathbf{x},\boldsymbol{\epsilon})\mapsto(\rho(h), \rho(k))$, we employ a $H\times K$ equivariant neural network as a map $(\mathbf{x}, \boldsymbol{\epsilon})\mapsto(\mathbf{Z}_H, \mathbf{Z}_K)$ such that the postprocessing for each group $H$ and $K$ provides maps $\mathbf{Z}_H\mapsto\rho(h)$ and $\mathbf{Z}_K\mapsto\rho(k)$ respectively, leading to full representation $\rho(g) = (\rho(h), \rho(k))$.
For this whole procedure to be $H\times K$ equivariant, it is sufficient to have $\mathbf{Z}_H$ be $K$ invariant and $\mathbf{Z}_K$ be $H$ invariant.
These are special cases of $H\times K$ equivariance, and is supported by a range of equivariant neural networks especially regarding $\textnormal{S}_n\times\textnormal{O}(3)$ or $\textnormal{S}_n\times\textnormal{SO}(3)$ equivariances~\cite{deng2021vector}.

\subsection{Base Function \texorpdfstring{$f_\theta$}{f\_θ}}\label{sec:base_function}

We now describe the choice of group-agnostic base function $f_\theta:\mathbf{x}\mapsto\mathbf{y}$.
As group symmetry is handled by the equivariant distribution $p_\omega(g|\mathbf{x})$, any symmetry concern is \emph{hidden} \typocorrect{from $f_\theta$}, allowing the inputs $\mathbf{x}$ and outputs $\mathbf{y}$ to be treated as plain multidimensional arrays.
This allows us to implement $f_\theta$ with powerful general-purpose architectures, namely as an MLP that operates on flattened vectors of inputs and outputs, or a transformer as we describe below.

Let inputs $\mathbf{x}\in\mathcal{X}=\mathbb{R}^{n_1\times ...\times n_a\times c}$ and outputs $\mathbf{y}\in\mathcal{Y}=\mathbb{R}^{n_1'\times ...\times n_b'\times c'}$ be multidimensional arrays with $c$ and $c'$ channels, respectively.
Our transformer base function $f_\theta:\mathbf{x}\mapsto\mathbf{y}$ is given as:
\begin{align}
    f_\theta = \textnormal{detokenize}\circ\textnormal{transformer}\circ\textnormal{tokenize},
\end{align}
where $\textnormal{tokenize}:\mathcal{X}\to\mathbb{R}^{m\times d}$ parses input array to a sequence of $m$ tokens, $\textnormal{transformer}: \mathbb{R}^{m\times d}\to\mathbb{R}^{m\times d}$ is a standard transformer encoder on tokens used in language and vision~\cite{devlin2019bert, dosovitskiy2021an}, and $\textnormal{detokenize}:\mathbb{R}^{m\times d}\to\mathcal{Y}$ decodes encoded tokens to output array.
For the tokenizer and detokenizer, we can use linear projections on flattened chunks of the array, which directly extends flattened patch projections in vision transformers to higher dimensions~\cite{dosovitskiy2021an, shen2023cross, shi2016real}.
This enables mapping between different dimensional inputs and outputs, \emph{e.g}, for graph node classification with $\mathbf{x}\in \mathbb{R}^{n\times n\times c}$ and $\mathbf{y}\in \mathbb{R}^{n\times c'}$, it is possible to use 2D patch projection for the input and 1D projection for the output.

Above choice of $f_\theta$ offers important advantages including universal \typocorrect{approximation}~\cite{yun2020are} and ability to share and transfer the learned knowledge in $\theta$ over different domains of different group symmetries.
Remarkably, this allows us to directly leverage large-scale pre-trained parameters from data-abundant domains \typocorrect{for learning on symmetric domains}.
In our experiments, we only replace the tokenizer and detokenizer of a vision transformer pre-trained on \typocorrect{ImageNet-21k~\cite{wolf2019huggingface, dosovitskiy2021an}} and fine-tune it to perform diverse $\textnormal{S}_n$ equivariant tasks \typocorrect{such as} graph classification, node classification, and link prediction.

\subsection{Relation to Other Symmetrization Approaches}\label{sec:related_work}

We discuss relation of our symmetrization method to prior ones, specifically group averaging~\cite{yarotsky2018universal, basu2022equi}, frame averaging~\cite{puny2022frame}, and canonicalization~\cite{kaba2022equivariance}.
An extended discussion on broader related work can be found in Appendix~\ref{sec:apdx_extended_related_work}.
Since these symmetrization methods share common formalization $\mathbf{y} = \mathbb{E}_g[g\cdot f_\theta (g^{-1}\cdot \mathbf{x})]$, one can expect a close theoretical relationship between them.
We observe that probabilistic symmetrization is quite general; based on particular choices of the $G$ equivariant distribution $p_\omega(g|\mathbf{x})$, it can become most of the related symmetrization methods as special cases.
This can be easily seen for group averaging~\cite{yarotsky2018universal}, as the distribution $p_\omega$ can reduce to the uniform distribution $\textnormal{Unif}(G)$ over the group.
Frame averaging~\cite{puny2022frame} also takes an average, but over a subset of group given by a frame $F:\mathcal{X}\to 2^G\setminus\emptyset$; importantly, it is required that the frame itself is $G$ equivariant \typocorrect{$F(\rho(g)\mathbf{x}) = gF(\mathbf{x})$}.
We can make the following connection between our method and frame averaging, by adopting the concept of stabilizer \typocorrect{subgroup} $G_\mathbf{x}=\{g\in G : \rho(g)\mathbf{x} = \mathbf{x}\}$:
\begin{proposition}\label{proposition:frame_averaging}
    Probabilistic symmetrization with $G$ equivariant distribution $p_\omega(g|\mathbf{x})$ can become frame averaging~\cite{puny2022frame} by assigning uniform density to a set of orbits $G_\mathbf{x}g$ for some group elements $g$.
\end{proposition}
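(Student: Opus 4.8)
The plan is to exhibit, for an arbitrary $G$ equivariant frame $F:\mathcal{X}\to 2^G\setminus\emptyset$ in the sense of~\cite{puny2022frame} (so $F(\rho_1(g')\mathbf{x}) = g'F(\mathbf{x})$ with $g'F(\mathbf{x})=\{g'h:h\in F(\mathbf{x})\}$), a specific $G$ equivariant distribution $p_\omega(g|\mathbf{x})$ for which substitution into Eq.~\eqref{eq:probabilistic_symmetrization} reproduces the frame average $\langle f_\theta\rangle_F(\mathbf{x}) = \tfrac{1}{|F(\mathbf{x})|}\sum_{g\in F(\mathbf{x})}\rho_2(g)f_\theta(\rho_1(g)^{-1}\mathbf{x})$ verbatim. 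Since probabilistic symmetrization already has the functional form $\mathbb{E}_{p_\omega(g|\mathbf{x})}[\rho_2(g)f_\theta(\rho_1(g)^{-1}\mathbf{x})]$, the entire content reduces to taking $p_\omega(\cdot|\mathbf{x})$ to be the uniform distribution supported on $F(\mathbf{x})$ and then checking (i) that this is a well-defined distribution — which is where the stabilizer orbits enter — and (ii) that it satisfies probabilistic $G$ equivariance~\eqref{eq:probabilistic_equivariance}. Note this does not go through the noise-outsourced construction of Theorem~\ref{thm:distribution_equivariance}; we specify $p_\omega$ directly as a law on $G$.

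First I would pin down the coset structure of an equivariant frame, which is exactly what makes "uniform over $F(\mathbf{x})$" meaningful in the compact, possibly infinite, setting. For any $s\in G_\mathbf{x}$ we have $\rho_1(s)\mathbf{x}=\mathbf{x}$, so frame equivariance gives $F(\mathbf{x})=F(\rho_1(s)\mathbf{x})=sF(\mathbf{x})$; hence $G_\mathbf{x}F(\mathbf{x})=F(\mathbf{x})$, i.e. $F(\mathbf{x})$ is a disjoint union of left cosets $F(\mathbf{x})=\bigsqcup_i G_\mathbf{x}g_i$ of the stabilizer, for some group elements $g_i$. Because $G$ is compact, $G_\mathbf{x}$ is a closed, hence compact, subgroup carrying a normalized Haar measure, and "assigning uniform density to the orbits $G_\mathbf{x}g_i$" means letting $p_\omega(\cdot|\mathbf{x})$ be the normalized sum over $i$ of the pushforwards of this Haar measure along $s\mapsto sg_i$. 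When $F(\mathbf{x})$ is finite this is just the uniform distribution on the finite set $F(\mathbf{x})$, recovering the normalization $1/|F(\mathbf{x})|$.

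Next I would verify probabilistic $G$ equivariance. For $g'\in G$, frame equivariance gives $F(\rho_1(g')\mathbf{x})=g'F(\mathbf{x})=\bigsqcup_i g'G_\mathbf{x}g_i$, and since $G_{\rho_1(g')\mathbf{x}}=g'G_\mathbf{x}g'^{-1}$ this is the coset decomposition $\bigsqcup_i G_{\rho_1(g')\mathbf{x}}(g'g_i)$ witnessing that $p_\omega(\cdot|\rho_1(g')\mathbf{x})$ is "uniform" in the same sense. As left translation on $G$ carries Haar measure to Haar measure, the pushforward of $p_\omega(\cdot|\mathbf{x})$ under $g\mapsto g'g$ is precisely $p_\omega(\cdot|\rho_1(g')\mathbf{x})$; in density form this reads $p_\omega(g|\mathbf{x})=p_\omega(g'g|\rho_1(g')\mathbf{x})$, i.e. Eq.~\eqref{eq:probabilistic_equivariance}. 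Finally, plugging this $p_\omega$ into Eq.~\eqref{eq:probabilistic_symmetrization} and expanding over the cosets collapses the expectation to the normalized sum over $g\in F(\mathbf{x})$, which is exactly $\langle f_\theta\rangle_F(\mathbf{x})$, completing the identification with frame averaging.

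The main obstacle I anticipate is the measure-theoretic bookkeeping in the infinite-frame case: making "uniform over an infinite $F(\mathbf{x})$" rigorous rests on the observation that an equivariant frame is automatically a union of \emph{full} stabilizer cosets — so a translate of Haar measure is the canonical "uniform" law — and that the per-coset normalization is consistent, both of which hinge on compactness of $G$ and hence of $G_\mathbf{x}$. Once this is set up, the equivariance check and the reduction to $\langle f_\theta\rangle_F$ are short left-invariance computations.
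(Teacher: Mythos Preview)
Your proposal is correct and follows essentially the same approach as the paper: identify frame averaging as an expectation under the uniform law on $F(\mathbf{x})$, observe that $F(\mathbf{x})$ decomposes as a union of stabilizer cosets $G_\mathbf{x}g_i$ (the paper cites this from~\cite{puny2022frame} while you derive it directly from $G_\mathbf{x}F(\mathbf{x})=F(\mathbf{x})$), and conclude that this uniform law is a $G$ equivariant $p_\omega(\cdot|\mathbf{x})$. Your treatment is in fact more thorough than the paper's, since you explicitly verify the equivariance condition~\eqref{eq:probabilistic_equivariance} via left-invariance of Haar measure and handle the infinite-frame case rigorously, whereas the paper simply asserts that the uniform-on-orbits assignment yields an equivariant distribution.
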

\begin{proof}
    The proof can be found in Appendix~\ref{sec:apdx_proof_proposition_frame_averaging_canonicalization}.
\end{proof}

Canonicalization~\cite{kaba2022equivariance} uses a \emph{single} group element for symmetrization, produced by a trainable canonicalizer $C_\omega:\mathcal{X}\to G$.
Here, it is required that the canonicalizer itself satisfies \typocorrect{\emph{relaxed}} $G$ equivariance $C(\rho(g)\mathbf{x}) = gg'C(\mathbf{x})$ up to \typocorrect{arbitrary action} from the stabilizer $g'\in G_\mathbf{x}$.
We now show:
\begin{proposition}\label{proposition:canonicalization}
    Probabilistic symmetrization with $G$ equivariant distribution $p_\omega(g|\mathbf{x})$ can become canonicalization~\cite{kaba2022equivariance} by assigning uniform density to a single orbit $G_\mathbf{x}g$ of some group element $g$.
\end{proposition}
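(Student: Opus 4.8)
The plan is to exhibit, given a trainable canonicalizer $C_\omega:\mathcal{X}\to G$ satisfying the relaxed equivariance $C_\omega(\rho_1(g)\mathbf{x}) = gg'C_\omega(\mathbf{x})$ with $g'\in G_\mathbf{x}$, the concrete $G$ equivariant distribution $p_\omega(g|\mathbf{x})$ for which Eq.~\eqref{eq:probabilistic_symmetrization} reduces to the canonicalized function $\phi^{\mathrm{can}}(\mathbf{x}) := \rho_2(C_\omega(\mathbf{x}))f_\theta(\rho_1(C_\omega(\mathbf{x}))^{-1}\mathbf{x})$. A Dirac mass at $g = C_\omega(\mathbf{x})$ would reproduce $\phi^{\mathrm{can}}$ exactly, but it is \emph{not} probabilistically $G$ equivariant in the sense of Eq.~\eqref{eq:probabilistic_equivariance}: relaxed equivariance leaves over the stabilizer factor $g'\in G_\mathbf{x}$. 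The remedy is to smear the mass over the orbit of $C_\omega(\mathbf{x})$ under left multiplication by the stabilizer: since $G$ is compact, $G_\mathbf{x}$ is a compact subgroup carrying a unique (bi-invariant) Haar probability measure $\mu_{G_\mathbf{x}}$, and we set $p_\omega(\cdot\,|\,\mathbf{x})$ to be its pushforward along $h\mapsto h\,C_\omega(\mathbf{x})$. This is exactly ``uniform density on the single orbit $G_\mathbf{x}g$'' with $g=C_\omega(\mathbf{x})$, and parallels the frame-averaging case of Proposition~\ref{proposition:frame_averaging}, where one spreads uniformly over a \emph{union} of such orbits rather than a single one.

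First I would verify that this $p_\omega$ satisfies Eq.~\eqref{eq:probabilistic_equivariance}, so that Theorem~\ref{thm:probabilistic_symmetrization_equivariance} guarantees equivariance of $\phi_{\theta,\omega}$. Using $G_{\rho_1(g')\mathbf{x}} = g'G_\mathbf{x}g'^{-1}$ together with relaxed equivariance (say $C_\omega(\rho_1(g')\mathbf{x}) = g'g''C_\omega(\mathbf{x})$, $g''\in G_\mathbf{x}$), the support of $p_\omega(\cdot\,|\,\rho_1(g')\mathbf{x})$ is $(g'G_\mathbf{x}g'^{-1})(g'g''C_\omega(\mathbf{x})) = g'\,G_\mathbf{x}g''\,C_\omega(\mathbf{x}) = g'\,G_\mathbf{x}\,C_\omega(\mathbf{x})$, i.e.\ the left translate by $g'$ of the support of $p_\omega(\cdot\,|\,\mathbf{x})$. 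At the level of measures, writing an element of $G_{\rho_1(g')\mathbf{x}}$ as $g'\tilde h g'^{-1}$ with $\tilde h\sim\mu_{G_\mathbf{x}}$ (conjugation is a continuous automorphism and preserves Haar), a sample from $p_\omega(\cdot\,|\,\rho_1(g')\mathbf{x})$ has the form $g'\tilde h g''C_\omega(\mathbf{x})$; since Haar on the compact group $G_\mathbf{x}$ is right invariant, $\tilde h g''$ is again $\mu_{G_\mathbf{x}}$-distributed, so this equals in distribution $g'\tilde h C_\omega(\mathbf{x}) = g'g$ with $g\sim p_\omega(\cdot\,|\,\mathbf{x})$ — which is precisely Eq.~\eqref{eq:probabilistic_equivariance}.

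Next I would compute $\phi_{\theta,\omega}(\mathbf{x})$ from Eq.~\eqref{eq:probabilistic_symmetrization}. For any $g = hC_\omega(\mathbf{x})$ in the support ($h\in G_\mathbf{x}$) we have $\rho_1(g)^{-1}\mathbf{x} = \rho_1(C_\omega(\mathbf{x}))^{-1}\rho_1(h)^{-1}\mathbf{x} = \rho_1(C_\omega(\mathbf{x}))^{-1}\mathbf{x}$ because $h$ stabilizes $\mathbf{x}$; hence $f_\theta(\rho_1(g)^{-1}\mathbf{x})$ is constant on the support, equal to $\mathbf{y}_0 := f_\theta(\rho_1(C_\omega(\mathbf{x}))^{-1}\mathbf{x})$. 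Using $\rho_2(g) = \rho_2(h)\rho_2(C_\omega(\mathbf{x}))$ this gives $\phi_{\theta,\omega}(\mathbf{x}) = \big(\mathbb{E}_{h\sim\mu_{G_\mathbf{x}}}[\rho_2(h)]\big)\rho_2(C_\omega(\mathbf{x}))\mathbf{y}_0 = \big(\mathbb{E}_{h\sim\mu_{G_\mathbf{x}}}[\rho_2(h)]\big)\phi^{\mathrm{can}}(\mathbf{x})$, and $\mathbb{E}_{h\sim\mu_{G_\mathbf{x}}}[\rho_2(h)]$ is the orthogonal projection of $\mathcal{Y}$ onto the subspace fixed by $\rho_2(G_\mathbf{x})$.

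The crux — and the only real obstacle — is then to argue that $\phi^{\mathrm{can}}(\mathbf{x})$ already lies in the $\rho_2(G_\mathbf{x})$-fixed subspace, so the projection acts as the identity and $\phi_{\theta,\omega}(\mathbf{x}) = \phi^{\mathrm{can}}(\mathbf{x})$. I would stress that this is not an extra hypothesis: unwinding $\phi^{\mathrm{can}}(\rho_1(g')\mathbf{x})$ with $C_\omega(\rho_1(g')\mathbf{x}) = g'g''C_\omega(\mathbf{x})$ yields $\rho_2(g')\rho_2(g'')\phi^{\mathrm{can}}(\mathbf{x})$, so canonicalization is itself $G$ equivariant under the relaxed equivariance of $C_\omega$ exactly when $\rho_2(g'')$ fixes $\phi^{\mathrm{can}}(\mathbf{x})$ for every reachable $g''\in G_\mathbf{x}$ — the same condition that makes our projection the identity. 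In the generic case $G_\mathbf{x}=\{e\}$ the orbit degenerates to a point, $p_\omega$ is the Dirac mass at $C_\omega(\mathbf{x})$, and there is nothing to check. Thus the argument is mostly stabilizer bookkeeping layered on top of Theorems~\ref{thm:probabilistic_symmetrization_equivariance} and~\ref{thm:probabilistic_symmetrization_universality}, with no substantive difficulty once the well-definedness of canonicalization is invoked.
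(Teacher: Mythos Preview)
Your argument is careful and largely correct, but it takes a genuinely different route from the paper, and the two interpretations of ``become canonicalization'' are worth contrasting.

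The paper's proof is much shorter because it exploits that a canonicalizer is allowed to be \emph{stochastic}. It simply declares $C_\omega(\mathbf{x})\sim\textnormal{Unif}(G_\mathbf{x}g)$, observes that this is a valid relaxed-equivariant canonicalizer, and notes that canonicalization is then literally a \emph{1-sample Monte Carlo estimate} of $\mathbb{E}_{g\sim\textnormal{Unif}(G_\mathbf{x}g)}[g\cdot f_\theta(g^{-1}\cdot\mathbf{x})]$. Since uniform on a single orbit is a $G$-equivariant $p_\omega(g|\mathbf{x})$ (by Property~\ref{property:orbit_equal_probability}), probabilistic symmetrization ``becomes'' canonicalization in the sense that the latter is a single draw from the former. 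No comparison between the expectation $\phi_{\theta,\omega}(\mathbf{x})$ and the deterministic canonical output $\phi^{\mathrm{can}}(\mathbf{x})$ is ever made or needed.

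You instead start from a \emph{deterministic} $C_\omega$, build $p_\omega$ as Haar on $G_\mathbf{x}C_\omega(\mathbf{x})$, verify Eq.~\eqref{eq:probabilistic_equivariance} via bi-invariance of the Haar measure on the compact stabilizer (this verification is nice and more explicit than the paper's), and then show $\phi_{\theta,\omega}(\mathbf{x}) = \Pi_{G_\mathbf{x}}\phi^{\mathrm{can}}(\mathbf{x})$ with $\Pi_{G_\mathbf{x}}$ the projection onto $\rho_2(G_\mathbf{x})$-invariants. Your final step --- arguing $\Pi_{G_\mathbf{x}}$ acts as identity --- is where you should be more cautious. You write ``this is not an extra hypothesis,'' but in fact it \emph{is}: canonicalization under relaxed equivariance only guarantees $\phi^{\mathrm{can}}(g\cdot\mathbf{x})=gh\cdot\phi^{\mathrm{can}}(\mathbf{x})$ for some $h\in G_\mathbf{x}$, not strict equivariance, and the paper explicitly exhibits inputs with non-trivial $G_\mathbf{x}$ where $\phi^{\mathrm{can}}(\mathbf{x})$ is \emph{not} $G_\mathbf{x}$-fixed (Appendix, ``Additional Comparison to Canonicalization''). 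So your identity $\phi_{\theta,\omega}=\phi^{\mathrm{can}}$ holds only when the stabilizer is trivial (which you correctly note) or, more generally, when canonicalization happens to be strictly equivariant --- not in full generality. The paper's 1-sample-estimate interpretation sidesteps this entirely, at the cost of a looser meaning of ``become.'' What your route buys is a cleaner pointwise statement in the generic (trivial-stabilizer) case and a transparent identification of exactly what fails otherwise.
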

\begin{proof}
    The proof can be found in Appendix~\ref{sec:apdx_proof_proposition_frame_averaging_canonicalization}.
\end{proof}

Assuming that stabilizer $G_\mathbf{x}$ is trivial, this can be implemented with our method by removing random noise $\boldsymbol{\epsilon}$, which reduces $p_\omega$ to deterministic map $\rho(g) = q_\omega(\mathbf{x})$.
We use this approach to implement canonicalizer for the $\textnormal{S}_n$ group, while \cite{kaba2022equivariance} only provides canonicalizers for Euclidean groups.

\section{Experiments}\label{sec:experiments}
We empirically demonstrate and analyze probabilistic symmetrization on a range of symmetry groups $\textnormal{S}_n$, $\textnormal{E}(3)$ ($\textnormal{O}(3)$), and the product $\textnormal{S}_n\times \textnormal{E}(3)$ ($\textnormal{S}_n\times \textnormal{O}(3)$), on a variety of invariant and equivariant tasks, with general-purpose base functions $f_\theta$ chosen as MLP and transformer optionally with pretraining from vision domain.
Details of the experiments are in Appendix~\ref{sec:apdx_experimental_details}, and supplementary experiments on other base functions and comparisons to other symmetrization approaches are in Appendix~\ref{sec:apdx_more_experiments}.

\subsection{Graph Isomorphism Learning with MLP}\label{sec:exp}
\begin{table}[!t]
\caption{
Results for $\textnormal{S}_n$ invariant graph separation.
We use two tasks, one for counting pairs of graphs not separated by a model at random initialization (GRAPH8c and EXP), and one for learning to classify EXP to two classes (EXP-classify).
For EXP-classify, we report the test accuracy at best validation accuracy.
The columns arch. and sym. denote architectural and symmetrized equivariance, respectively.
The results for baselines are from~\cite{puny2022frame} except for MLP-Canonical. which is tested by us.
}
\centering
\begin{adjustbox}{width=0.6\textwidth}
    \vspace{-0.2cm}
    \begin{tabular}{lccccc}\label{table:exp_graph_separation}
        \\\Xhline{2\arrayrulewidth}\\[-0.9em]
        method & arch. & sym. & GRAPH8c $\downarrow$ & EXP $\downarrow$ & EXP-classify $\uparrow$
        \\\Xhline{2\arrayrulewidth}\\[-0.9em]
        GCN~\cite{kipf2017semi} & $\textnormal{S}_n$ & - & 4755 & 600 & 50\% \\
        GAT~\cite{velikovic2018graph} & $\textnormal{S}_n$ & - & 1828 & 600 & 50\% \\
        GIN~\cite{xu2019how} & $\textnormal{S}_n$ & - & 386 & 600 & 50\% \\
        ChebNet~\cite{defferrad2016convolutional} & $\textnormal{S}_n$ & - & 44 & 71 & 82\%\\ \midrule[0.2pt]
        PPGN~\cite{maron2019provably} & $\textnormal{S}_n$ & - & 0 & 0 & \textbf{100\%} \\
        GNNML3~\cite{balcilar2021breaking} & $\textnormal{S}_n$ & - & 0 & 0 & \textbf{100\%}\\\midrule[0.2pt]
        MLP-GA~\cite{yarotsky2018universal} & - & $\textnormal{S}_n$ & 0 & 0 & 50\% \\
        MLP-FA~\cite{puny2022frame} & - & $\textnormal{S}_n$ & 0 & 0 & \textbf{100\%} \\
        MLP-Canonical. & - & $\textnormal{S}_n$ & 0 & 0 & 50\% \\
        MLP-PS (Ours), fixed $\boldsymbol{\epsilon}$ & - & $\textnormal{S}_n$ & 0 & 0 & 79.5\% \\
        MLP-PS (Ours) & - & $\textnormal{S}_n$ & 0 & 0 & \textbf{100\%}\\
        \midrule[0.2pt]
    \end{tabular}
    \vspace{-0.3cm}
\end{adjustbox}
\end{table}
\begin{figure}[!t]
    \vspace{-0.3cm}
    \centering
    \includegraphics[width=0.65\textwidth]{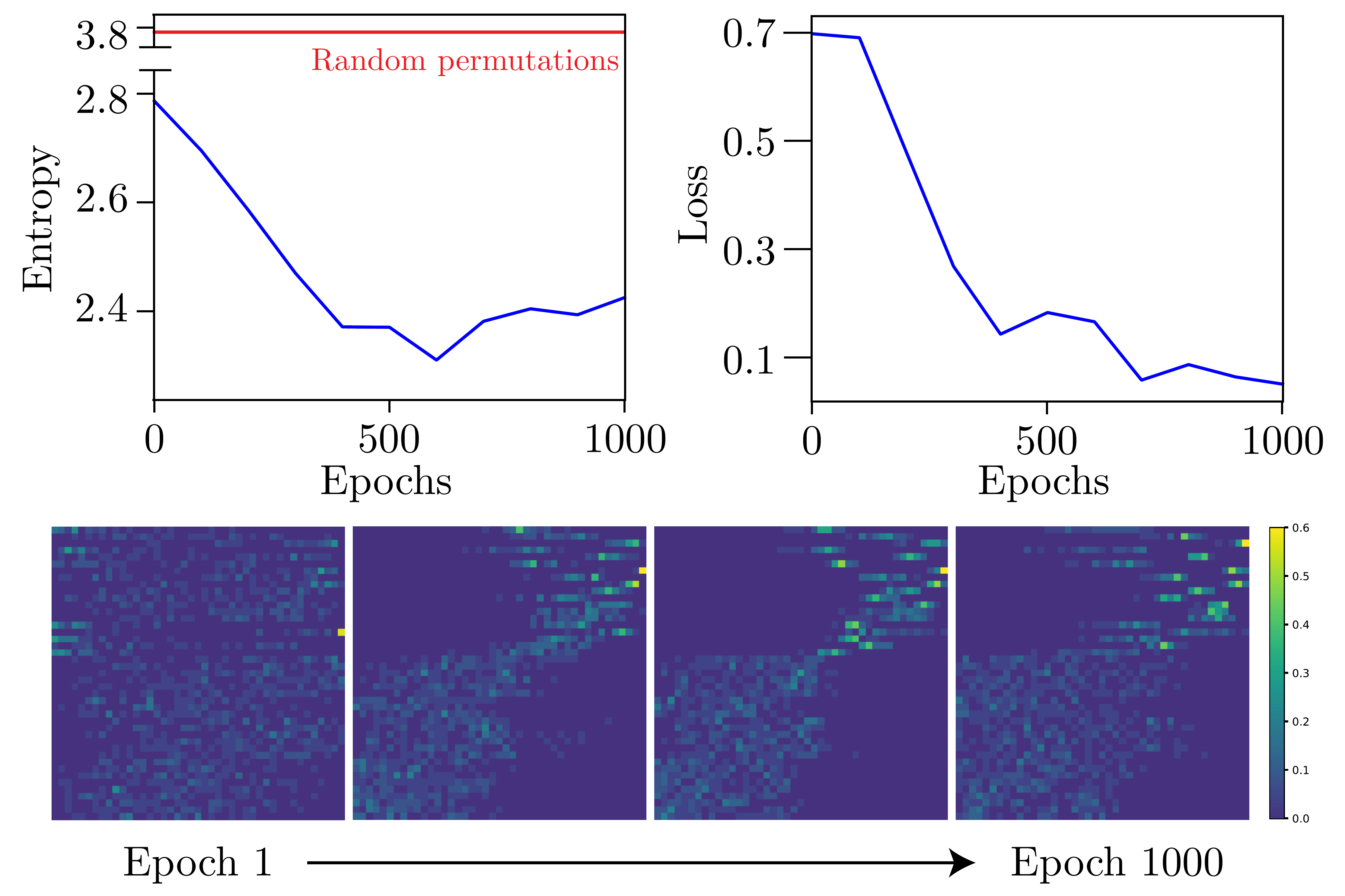}
    \vspace{-0.2cm}
    \caption{
    Learned $p_\omega(g|\mathbf{x})$ over time.
    The entropy of aggregated permutation matrices $\bar{\mathbf{P}} = \sum\mathbf{P}_g/N$ \typocorrect{from $\mathbf{P}_g\sim p_\omega(g|\mathbf{x})$ for each input $\mathbf{x}$} drops in early training, indicating that the distribution learns to produce lower-variance permutations as in below visualizations.
    }
    \vspace{-0.4cm}
    \label{fig:entropy_over_time}
\end{figure}

Building expressive neural networks for graphs ($\textnormal{S}_n$) has been considered important and challenging, as simple and efficient GNNs are often limited in expressive power to certain Weisfeiler-Lehman isomorphism tests like 1-WL~\cite{xu2019how, maron2019provably}.
Since an MLP equipped with probabilistic symmetrization is in theory universal and $\textnormal{S}_n$ equivariant, it has potential for graph learning that require high expressive power.
To explicitly test this, we adopt the experimental setup of \cite{puny2022frame} and use two datasets on graph separation task ($\textnormal{S}_n$ invariant).
GRAPH8c~\cite{balcilar2021breaking} consists of all non-isomorphic connected graphs with 8 nodes, and EXP~\cite{abboud2021the} consists of 3-WL distinguishable graphs that are not 2-WL distinguishable.
We compare our method to standard GNNs as well as an MLP symmetrized with \typocorrect{group averaging~\cite{yarotsky2018universal}, frame averaging~\cite{puny2022frame}, and canonicalization~\cite{kaba2022equivariance}}.
Our method uses the same MLP architecture to symmetrization baselines, and its $\textnormal{S}_n$ equivariant distribution $p_\omega$ for symmetrization is implemented using a 3-layer GIN~\cite{xu2019how} which is 1-WL expressive.
We use 10 samples for symmetrization during both training and testing.
Further details can be found in Appendix~\ref{sec:apdx_exp}, and supplementary results on symmetrization of a different base model can be found in Appendix~\ref{sec:apdx_exp_more}.

The results are in Table~\ref{table:exp_graph_separation}.
At random initialization, all symmetrization methods can provide perfect separation of all graphs, similar to PPGN~\cite{maron2019provably} and GNNML3~\cite{balcilar2021breaking} that are equivariant neural networks carefully designed to be 3-WL expressive.
However, when trained with gradient descent to solve classification problem, na\"ive symmetrization with group averaging fails, presumably because the MLP fails to adjust to equally possible $64!$ permutations of $64$ nodes in maximum.
On the other hand, our method is able to learn the task, achieving the same accuracy to frame averaging that utilizes costly eigendecomposition of graph Laplacian~\cite{puny2022frame}.
What makes our method work while group averaging fails?
We conjecture this is since the distribution $p_\omega(g|\mathbf{x})$ can learn to provide more consistent permutations \typocorrect{during early training}, as we illustrate in Figure~\ref{fig:entropy_over_time}.
In the figure, we measured the consistency of samples from $p_\omega(g|\mathbf{x})$ over training progress by sampling $N=50$ permutation matrices $\mathbf{P}_g\sim p_\omega(g|\mathbf{x})$ and measuring the row-wise entropy of their average $\bar{\mathbf{P}} = \sum\mathbf{P}_g/N$ \typocorrect{for each input $\mathbf{x}$}.
The more consistent the sampled permutations, the sharper their average, and lesser the entropy.
As training progresses, $p_\omega$ learns to produce more consistent samples, which coincides with the \typocorrect{initial} increase in task performance.
Given that, a natural question would be: if we enforce the samples $g\sim p_\omega(g|\mathbf{x})$ to be consistent from the first place, would it work?
To answer this, we also tested a non-probabilistic version of our model that uses a single permutation per input $\rho(g) = q_\omega(\mathbf{x})$, which is a canonicalizer under relaxed equivariance~\cite{kaba2022equivariance} as described in Section~\ref{sec:related_work}.
As in Table~\ref{table:exp_graph_separation}, canonicalization fails, suggesting that probabilistic nature of $p_\omega(g|\mathbf{x})$ can be \emph{beneficial} for learning.
For another deterministic version of our model made by fixing the noise $\boldsymbol{\epsilon}$ (Eq.~\eqref{eq:reparameterization}) at initialization, the performance drops to 79.5\%, further implying that stochasticity of $p_\omega(g|\mathbf{x})$ has a role.

\subsection{Particle Dynamics Learning with Transformer}\label{sec:nbody}
\begin{table}[!t]
\caption{
Results for $\textnormal{S}_n\times\textnormal{E}(3)$ equivariant $n$-body problem.
The columns arch. and sym. denote architectural and symmetrized equivariance, respectively.
We report test MSE at best validation MSE, along with the standard deviation for GA and Ours where predictions are stochastic.
The results for baselines are from~\cite{kaba2022equivariance} except symmetrized transformers which are tested by us.
}
\centering
\begin{adjustbox}{width=0.65\textwidth}
    \begin{tabular}{lccc}\label{table:nbody}
        \\\Xhline{2\arrayrulewidth}\\[-0.9em]
        method & arch. & sym. & Position MSE $\downarrow$
        \\\Xhline{2\arrayrulewidth}\\[-0.9em]
        SE(3) Transformer~\cite{fuchs2020se3} & $\textnormal{S}_n\times\textnormal{SE}(3)$ & - & 0.0244 \\
        TFN~\cite{thomas2018tensor} & $\textnormal{S}_n\times \textnormal{SE}(3)$ & -  & 0.0155 \\
        Radial Field~\cite{kohler2019equivariant} & $\textnormal{S}_n\times \textnormal{E}(3)$ & - & 0.0104 \\
        EGNN~\cite{satorras2021en} & $\textnormal{S}_n\times \textnormal{E}(3)$ & -  & 0.0071 \\ \midrule[0.2pt]
        GNN-FA~\cite{puny2022frame} & $\textnormal{S}_n$ & $\textnormal{E}(3)$ & 0.0057 \\
        GNN-Canonical.~\cite{kaba2022equivariance} & $\textnormal{S}_n$ & $\textnormal{E}(3)$ & 0.0043 \\ \midrule[0.2pt]
        Transformer-Canonical. & - & $\textnormal{S}_n\times\textnormal{E}(3)$ & 0.00508 \\
        Transformer-GA & - & $\textnormal{S}_n\times\textnormal{E}(3)$ & 0.00414 $\pm$ 0.00001 \\
        Transformer-PS (Ours) & - & $\textnormal{S}_n\times\textnormal{E}(3)$ & \textbf{0.00401 $\pm$ 0.00001} \\ \midrule[0.2pt]
    \end{tabular}
\end{adjustbox}
\end{table}

Learning sets or graphs attributed with position and velocity in 3D ($\textnormal{S}_n\times\textnormal{E}(3)$) is practically significant as they universally appear in physics, chemistry, and biology applications.
While prior symmetrization methods employ an already $\textnormal{S}_n$ equivariant base function and partially symmetrize the $\textnormal{E}(3)$ part, we attempt to symmetrize the entire product group $\textnormal{S}_n\times\textnormal{E}(3)$ and choose the base model $f_\theta$ as a sequence transformer to leverage its expressive power.
For empirical demonstration, we adopt the experimental setup of \cite{kaba2022equivariance} and use the $n$-body dataset~\cite{satorras2021en, fuchs2020se3} where the task is predicting the position of $n=5$ charged particles after certain time given their initial position and velocity in $\mathbb{R}^3$ ($\textnormal{S}_n\times\textnormal{E}(3)$ equivariant).
We compare our method to $\textnormal{S}_n\times\textnormal{E}(3)$ equivariant neural networks \typocorrect{and} partial symmetrization methods applying $\textnormal{E}(3)$ symmetrization to GNNs.
We also test prior symmetrization methods on the full group $\textnormal{S}_n\times\textnormal{E}(3)$ along our method, but could not test for frame averaging since equivariant frames for the full group $\textnormal{S}_n\times\textnormal{E}(3)$ was not available in current literature.
Our method is implemented using a transformer with sequence positional encodings with around 2.3$\times$ parameters of the baselines, and the $\textnormal{S}_n\times\textnormal{E}(3)$ equivariant distribution $p_\omega$ for symmetrization is implemented using a 2-layer Vector Neurons~\cite{deng2021vector} that has around 0.03$\times$ of parameters to the transformer.
We use 20 samples for symmetrization during training, and use 10$\times$ sample size for testing since the task is regression where appropriate variance reduction is necessary to guarantee a reliable performance.
Further details can be found in Appendix~\ref{sec:apdx_nbody}, and supplementary results on $\mathrm{E}(3)$ partial symmetrization of GNN base model can be found in Appendix~\ref{sec:apdx_nbody_more}.

The results are in Table~\ref{table:nbody}.
We observe simple group averaging exhibits a surprisingly strong performance, as it achieves 0.00414 MSE and already outperforms previous state of the art 0.0043 MSE.
This is because the permutation component of the symmetry is fairly small, with $n = 5$ particles interacting with each other, such that combining it with an expressive base model $f_\theta$ (a sequence transformer) can adjust to $5!=120$ equally possible permutations and their rotations in 3D.
Nevertheless, our method outperforms group averaging and achieves a new state of the art 0.00401 MSE, presumably as the parameterized distribution $p_\omega$ learns to further maximize task performance.
On the other hand, the canonicalization approach, implemented by eliminating noise variable $\boldsymbol{\epsilon}$ from our method (Section~\ref{sec:related_work}), performs relatively poorly.
We empirically observe that $f_\theta$ memorizes the per-input canonical orientations provided by $\rho(g) = q_\omega(\mathbf{x})$ that do not generalize to test inputs.
This again shows that probabilistic nature of $p_\omega(g|\mathbf{x})$ can be beneficial for performance.

\subsection{Graph Pattern Recognition with Vision Transformer}\label{sec:pattern}
\begin{table}[!t]
\caption{
Results for $\textnormal{S}_n$ equivariant node classification on PATTERN.
We report test accuracy at the best validation accuracy, along with the standard deviation for GA and Ours where predictions are stochastic.
The results for GNN baselines are from \cite{dwivedi2020benchmarking}.
}
\centering
\begin{adjustbox}{width=0.55\textwidth}
    \begin{tabular}{lcc}\label{table:pattern}
        \\\Xhline{2\arrayrulewidth}\\[-0.9em]
        method & pretrain. & Accuracy $\uparrow$
        \\\Xhline{2\arrayrulewidth}\\[-0.9em]
        GCN~\cite{kipf2017semi}, 16 layers & - & 85.614 \\
        GAT~\cite{velikovic2018graph}, 16 layers & - & 78.271 \\
        GatedGCN~\cite{bresson2017residual}, 16 layers & - & 85.568 \\
        GIN~\cite{xu2019how}, 16 layers & - & 85.387 \\
        RingGNN~\cite{chen2019on}, 2 layers & - & 86.245 \\
        RingGNN~\cite{chen2019on}, 8 layers & - & diverged \\
        PPGN~\cite{maron2019provably}, 3 layers & - & 85.661 \\
        PPGN~\cite{maron2019provably}, 8 layers & - & diverged \\ \midrule[0.2pt]
        ViT-GA, 1-sample & - & 76.956 $\pm$ 0.033 \\
        ViT-GA, 10-sample & - & 83.220 $\pm$ 0.057 \\
        ViT-GA, 1-sample & ImageNet-21k & 81.933 $\pm$ 0.075 \\
        ViT-GA, 10-sample & ImageNet-21k & 84.641 $\pm$ 0.020 \\ \midrule[0.2pt]
        ViT-FA & - & 71.377 \\
        ViT-FA & ImageNet-21k & 80.015 \\ \midrule[0.2pt]
        ViT-Canonical. & - & 85.825 \\
        ViT-Canonical. & ImageNet-21k & 86.534 \\ \midrule[0.2pt]
        ViT-PS (Ours), 1-sample & - & 85.868 $\pm$ 0.017 \\
        ViT-PS (Ours), 10-sample & - & 85.989 $\pm$ 0.011 \\
        ViT-PS (Ours), 1-sample & ImageNet-21k & 86.573 $\pm$ 0.030 \\
        ViT-PS (Ours), 10-sample & ImageNet-21k & \textbf{86.650 $\pm$ 0.010} \\ \midrule[0.2pt]
    \end{tabular}
\end{adjustbox}
\end{table}

One important goal of our approach, and symmetrization in general, is to \emph{decouple} the symmetry of problem from the base function~$f_\theta$, such that we can leverage knowledge learned from other symmetries by transferring the parameters $\theta$.
We demonstrate an extreme case by transferring the parameters of a vision transformer~\cite{dosovitskiy2021an} trained on large-scale image classification (translation invariant) to solve node classification on graphs ($\textnormal{S}_n$ equivariant) for the first time in literature.
For this, we use the PATTERN dataset~\cite{dwivedi2020benchmarking} that contains 14,000 purely topological random SBM graphs with 44-188 nodes, whose task is finding certain subgraph pattern by binary node classification.

Based on pre-trained ViT~\cite{dosovitskiy2021an, steiner2021how}, we construct the base model $f_\theta$ by modifying only the input and output layers to take the flattened patches of 2D zero-padded adjacency matrices of size 188$\times$188 and produce output as 1D per-node classification logits of length 188 with 2 channels.
In addition to standard GNNs\footnote{We note that careful engineering such as graph positional encoding improves performance of GNNs, but we have chosen simple and representative GNNs in the benchmark~\cite{dwivedi2020benchmarking} to provide a controlled comparison.}, we compare group averaging, frame averaging, and canonicalization to our method, and also test whether pre-trained representations from ImageNet-21k is beneficial for the task.
For the $\textnormal{S}_n$ equivariant distribution $p_\omega$ in our method and canonicalization, we use a 3-layer GIN~\cite{xu2019how} with only 0.02\% of the base model parameters.
Further details can be found in Appendix~\ref{sec:apdx_pattern}.

The results are in Table~\ref{table:pattern}.
First, we observe that transferring pre-trained ViT parameters consistently improves node classification for all symmetrization methods.
It indicates that some traits of the pre-trained visual representation can benefit learning graph tasks which vastly differ in both the underlying symmetry (translation invariance~$\to$~$\textnormal{S}_n$ equivariance) and the data generating process (natural images~$\to$~random process of SBM).
In particular, it is somewhat surprising that vision pretraining allows group averaging to achieve 84.641\% accuracy, on par with GNN baselines, considering that memorizing all 188! equally possible permutations in this dataset is impossible.
We conjecture that group averaged ViT can in some way learn meaningful graph representation internally to solve the task, and vision pretraining helps in acquiring the representation by providing a good initialization point or transferable computation motifs.

On the other hand, frame averaging shows a lower performance, 80.015\% accuracy with vision pretraining, which is also surprising considering that frames vastly reduce the sample space of symmetrization in general; in fact, the size of frame of each graph in PATTERN is exactly 1.
We empirically observe that, unlike group averaging, ViT with frame averaging memorizes frames of training graphs rather than learning generalizable graph representations.
In contrast, canonicalization that also uses a single sample per graph successfully learns the task with 86.534\% accuracy. 
We conjecture that the learnable orderings provided by an equivariant neural network $\rho(g) = q_\omega(\mathbf{x})$ is more flexible and generalizable to unseen graphs compared to frames computed from fixed graph Laplacian eigenvectors.
Lastly, our method achieves a better performance compared to other symmetrization methods, and the performance consistently improves with vision pretraining and more samples for testing.
As a result, our model based on pre-trained ViT and 10 samples for testing achieves 86.650\% test accuracy, surpassing all baselines.

\subsection{Real-World Graph Learning with Vision Transformer}\label{sec:lrgb}
\begin{table}[!t]
\caption{
Results for real-world graph tasks.
We report test performance at best validation performance.
}
\vspace{-0.2cm}
\centering
\begin{adjustbox}{width=0.95\textwidth}
    \begin{tabular}{lcccccc}\label{table:lrgb}
        \\\Xhline{2\arrayrulewidth}\\[-0.9em]
        method & Peptides-func & Peptides-struct & \multicolumn{4}{c}{PCQM-Contact}
        \\\Xhline{2\arrayrulewidth}\\[-0.9em]
        & AP $\uparrow$ & MAE $\downarrow$ & Hits@1 $\uparrow$ & Hits@3 $\uparrow$ & Hits@10 $\uparrow$ & MRR $\uparrow$
        \\\Xhline{2\arrayrulewidth}\\[-0.9em]
        GCN~\cite{kipf2017semi} & 0.5930 & 0.3496 & 0.1321 & 0.3791 & 0.8256 & 0.3234 \\
        GCNII~\cite{chen2020simple} & 0.5543 & 0.3471 & 0.1325 & 0.3607 & 0.8116 & 0.3161 \\
        GINE~\cite{hu2020srategies} & 0.5498 & 0.3547 & 0.1337 & 0.3642 & 0.8147 & 0.3180 \\
        GatedGCN~\cite{bresson2017residual} & 0.5864 & 0.3420 & 0.1279 & 0.3783 & 0.8433 & 0.3218 \\
        GatedGCN+RWSE~\cite{bresson2017residual} & 0.6069 & 0.3357 & 0.1288 & 0.3808 & 0.8517 & 0.3242 \\
        \midrule[0.2pt]
        Transformer+LapPE~\cite{dwivedi2022long} & 0.6326 & 0.2529 & 0.1221 & 0.3679 & 0.8517 & 0.3174 \\
        SAN+LapPE~\cite{kreuzer2021rethinking} & 0.6384 & 0.2683 & 0.1355 & 0.4004 & 0.8478 & 0.3350 \\
        SAN+RWSE~\cite{kreuzer2021rethinking} & 0.6439 & 0.2545 & 0.1312 & 0.4030 & 0.8550 & 0.3341 \\
        GraphGPS~\cite{rampasek2022recipe} & 0.6535 & 0.2500 & - & - & - & 0.3337 \\
        Exphormer~\cite{shirzad2023exphormer} & 0.6527 & \textbf{0.2481} & - & - & - & 0.3637 \\
        \midrule[0.2pt]
        ViT-PS (Ours) & \textbf{0.6575} & 0.2559 & \textbf{0.3287} & \textbf{0.6694} & \textbf{0.9526} & \textbf{0.5341} \\ \midrule[0.2pt]
    \end{tabular}
\end{adjustbox}
\vspace{-0.2cm}
\end{table}

Having observed that pre-trained ViT can learn graph tasks well when symmetrized with our method, we now provide a preliminary test of it in real-world graph learning.
We use three real-world graph datasets from \cite{dwivedi2022long} that involve chemical and biological graphs.
PCQM-Contact dataset contains 529,434 molecular graphs with 53 nodes in maximum, and the task is contact map prediction framed as link prediction ($\textnormal{S}_n$ equivariant), on whether two atoms would be proximal when the molecule is in 3D space.
Peptides-func and Peptides-struct are based on the same set of 15,535 protein graphs with 444 nodes in maximum and the tasks are property prediction ($\textnormal{S}_n$ invariant), requiring multi-label classification for Peptides-func and regression for Peptides-struct.
The tasks require complex understanding of how the amino acids of the proteins would interact in 3D space.
We implement our method using a ViT-Base pre-trained on ImageNet-21k as the base model $f_\theta$ and a 3-layer GIN as the equivariant distribution $p_\omega(g|\mathbf{x})$, following Section~\ref{sec:pattern}.
Further details are in Appendix~\ref{sec:apdx_lrgb}.

The results are in Table~\ref{table:lrgb}.
In Peptides-func and PCQM-Contact, the pre-trained ViT symmetrized with our method achieves better performances compared to both previous GNNs and graph transformers\footnote{We note that the baseline architectures are constructed within 500k parameter budget as a convention~\cite{dwivedi2022long}, while we use an identical architecture to ViT-Base to leverage pre-trained representations.}, in particular largely improving the previous best for PCQM-Contact (0.1355 $\to$ 0.3287 Hits@1).
This demonstrates the scalability of our method as Peptides-func involves 444 maximum nodes, and also its generality as it performs competitively for both $\textnormal{S}_n$ invariant (Peptides-func) and equivariant (PCQM-Contact) tasks.
We also note that, unlike some baselines, our method does not require costly Laplacian eigenvectors to compute positional encoding.
On Peptides-struct, our method achieves a slightly lower performance to state-of-the-art graph transformers while still better than GNNs.
We conjecture that regression is harder for the model to learn due to its stochasticity in predictions, and leave improving regression performance as future work.

\section{Conclusion}\label{sec:conclusion}

We presented probabilistic symmetrization, a general framework that learns a distribution of group transformations conditioned on input data for symmetrization of an arbitrary function.
By characterizing that the only condition for such distribution is equivariance to data symmetry, we instantiated models for a wide range of groups, including symmetric, orthogonal, Euclidean groups and their product combinations.
Our experiments demonstrated that the proposed framework achieves consistent improvement over other symmetrization methods, and is competitive or outperforms equivariant networks on various datasets.
We also showed that transferring pre-trained parameters across data in different symmetries can sometimes be surprisingly beneficial.
Our approach has weaknesses such as sampling cost, which we further discuss in Appendix~\ref{sec:apdx_limitations}; we plan to address these in future work.

\paragraph{Acknowledgements}
This work was supported in part by the National Research Foundation of Korea (NRF2021R1C1C1012540 and NRF2021R1A4A3032834) and IITP grant (2021-0-00537, 2019-0-00075, and 2021-0-02068) funded by the Korea government (MSIT).

\newpage
{\small
\bibliography{main}

\begin{thebibliography}{100}

\bibitem{abboud2021the}
R.~Abboud, {\.I}.~{\.I}. Ceylan, M.~Grohe, and T.~Lukasiewicz.
\newblock The surprising power of graph neural networks with random node
  initialization.
\newblock In {\em IJCAI}, 2021.

\bibitem{balcilar2021breaking}
M.~Balcilar, P.~H{\'{e}}roux, B.~Ga{\"{u}}z{\`{e}}re, P.~Vasseur, S.~Adam, and
  P.~Honeine.
\newblock Breaking the limits of message passing graph neural networks.
\newblock In {\em ICML}, 2021.

\bibitem{basu2023equivariant}
S.~Basu, P.~Katdare, P.~Sattigeri, V.~Chenthamarakshan, K.~Driggs-Campbell,
  P.~Das, and L.~R. Varshney.
\newblock Equivariant few-shot learning from pretrained models.
\newblock {\em arXiv}, 2023.

\bibitem{basu2022equi}
S.~Basu, P.~Sattigeri, K.~N. Ramamurthy, V.~Chenthamarakshan, K.~R. Varshney,
  L.~R. Varshney, and P.~Das.
\newblock Equi-tuning: Group equivariant fine-tuning of pretrained models.
\newblock {\em arXiv}, 2022.

\bibitem{battaglia2018relational}
P.~W. Battaglia, J.~B. Hamrick, V.~Bapst, A.~Sanchez{-}Gonzalez, V.~F.
  Zambaldi, M.~Malinowski, A.~Tacchetti, D.~Raposo, A.~Santoro, R.~Faulkner,
  {\c{C}}.~G{\"{u}}l{\c{c}}ehre, H.~F. Song, A.~J. Ballard, J.~Gilmer, G.~E.
  Dahl, A.~Vaswani, K.~R. Allen, C.~Nash, V.~Langston, C.~Dyer, N.~Heess,
  D.~Wierstra, P.~Kohli, M.~M. Botvinick, O.~Vinyals, Y.~Li, and R.~Pascanu.
\newblock Relational inductive biases, deep learning, and graph networks.
\newblock {\em arXiv}, 2018.

\bibitem{bengio2013estimating}
Y.~Bengio, N.~L{\'{e}}onard, and A.~C. Courville.
\newblock Estimating or propagating gradients through stochastic neurons for
  conditional computation.
\newblock {\em arXiv}, abs/1308.3432, 2013.

\bibitem{benton2020learning}
G.~W. Benton, M.~Finzi, P.~Izmailov, and A.~G. Wilson.
\newblock Learning invariances in neural networks from training data.
\newblock In {\em NeurIPS}, 2020.

\bibitem{bevilacqua2022equivariant}
B.~Bevilacqua, F.~Frasca, D.~Lim, B.~Srinivasan, C.~Cai, G.~Balamurugan, M.~M.
  Bronstein, and H.~Maron.
\newblock Equivariant subgraph aggregation networks.
\newblock In {\em ICLR}, 2022.

\bibitem{bloem-reddy2020probabilistic}
B.~Bloem{-}Reddy and Y.~W. Teh.
\newblock Probabilistic symmetries and invariant neural networks.
\newblock {\em J. Mach. Learn. Res.}, 2020.

\bibitem{bogatskiy2022symmetry}
A.~Bogatskiy, S.~Ganguly, T.~Kipf, R.~Kondor, D.~W. Miller, D.~Murnane, J.~T.
  Offermann, M.~Pettee, P.~Shanahan, C.~Shimmin, and S.~Thais.
\newblock Symmetry group equivariant architectures for physics.
\newblock {\em arXiv}, 2022.

\bibitem{bresson2017residual}
X.~Bresson and T.~Laurent.
\newblock Residual gated graph convnets.
\newblock {\em arXiv}, 2019.

\bibitem{bronstein2021geometric}
M.~M. Bronstein, J.~Bruna, T.~Cohen, and P.~Velickovic.
\newblock Geometric deep learning: Grids, groups, graphs, geodesics, and
  gauges.
\newblock {\em arXiv}, 2021.

\bibitem{brown2020language}
T.~B. Brown, B.~Mann, N.~Ryder, M.~Subbiah, J.~Kaplan, P.~Dhariwal,
  A.~Neelakantan, P.~Shyam, G.~Sastry, A.~Askell, S.~Agarwal,
  A.~Herbert{-}Voss, G.~Krueger, T.~Henighan, R.~Child, A.~Ramesh, D.~M.
  Ziegler, J.~Wu, C.~Winter, C.~Hesse, M.~Chen, E.~Sigler, M.~Litwin, S.~Gray,
  B.~Chess, J.~Clark, C.~Berner, S.~McCandlish, A.~Radford, I.~Sutskever, and
  D.~Amodei.
\newblock Language models are few-shot learners.
\newblock In {\em NeurIPS}, 2020.

\bibitem{cai2020a}
C.~Cai and Y.~Wang.
\newblock A note on over-smoothing for graph neural networks.
\newblock {\em arXiv}, 2020.

\bibitem{chen2020simple}
M.~Chen, Z.~Wei, Z.~Huang, B.~Ding, and Y.~Li.
\newblock Simple and deep graph convolutional networks.
\newblock In {\em ICML}, 2020.

\bibitem{chen2019on}
Z.~Chen, S.~Villar, L.~Chen, and J.~Bruna.
\newblock On the equivalence between graph isomorphism testing and function
  approximation with gnns.
\newblock In {\em NeurIPS}, 2019.

\bibitem{chien2022you}
E.~Chien, C.~Pan, J.~Peng, and O.~Milenkovic.
\newblock You are allset: {A} multiset function framework for hypergraph neural
  networks.
\newblock In {\em ICLR}, 2022.

\bibitem{cohen2016group}
T.~Cohen and M.~Welling.
\newblock Group equivariant convolutional networks.
\newblock In {\em ICML}, 2016.

\bibitem{cohen2017steerable}
T.~S. Cohen and M.~Welling.
\newblock Steerable cnns.
\newblock In {\em ICLR}, 2017.

\bibitem{cybenko1989approximation}
G.~Cybenko.
\newblock Approximation by superpositions of a sigmoidal function.
\newblock {\em Math. Control. Signals Syst.}, 1989.

\bibitem{dasoulas2019coloring}
G.~Dasoulas, L.~D. Santos, K.~Scaman, and A.~Virmaux.
\newblock Coloring graph neural networks for node disambiguation.
\newblock In {\em IJCAI}, 2020.

\bibitem{defferrad2016convolutional}
M.~Defferrard, X.~Bresson, and P.~Vandergheynst.
\newblock Convolutional neural networks on graphs with fast localized spectral
  filtering.
\newblock In {\em NeurIPS}, 2016.

\bibitem{deng2021vector}
C.~Deng, O.~Litany, Y.~Duan, A.~Poulenard, A.~Tagliasacchi, and L.~J. Guibas.
\newblock Vector neurons: {A} general framework for so(3)-equivariant networks.
\newblock In {\em ICCV}, 2021.

\bibitem{devlin2019bert}
J.~Devlin, M.~Chang, K.~Lee, and K.~Toutanova.
\newblock {BERT:} pre-training of deep bidirectional transformers for language
  understanding.
\newblock In {\em NAACL-HLT}, 2019.

\bibitem{dinh2022lift}
T.~Dinh, Y.~Zeng, R.~Zhang, Z.~Lin, M.~Gira, S.~Rajput, J.~Sohn, D.~S.
  Papailiopoulos, and K.~Lee.
\newblock {LIFT:} language-interfaced fine-tuning for non-language machine
  learning tasks.
\newblock In {\em NeurIPS}, 2022.

\bibitem{dosovitskiy2021an}
A.~Dosovitskiy, L.~Beyer, A.~Kolesnikov, D.~Weissenborn, X.~Zhai,
  T.~Unterthiner, M.~Dehghani, M.~Minderer, G.~Heigold, S.~Gelly, J.~Uszkoreit,
  and N.~Houlsby.
\newblock An image is worth 16x16 words: Transformers for image recognition at
  scale.
\newblock In {\em ICLR}, 2021.

\bibitem{dwivedi2020benchmarking}
V.~P. Dwivedi, C.~K. Joshi, T.~Laurent, Y.~Bengio, and X.~Bresson.
\newblock Benchmarking graph neural networks.
\newblock {\em arXiv}, 2020.

\bibitem{dwivedi2022long}
V.~P. Dwivedi, L.~Ramp{\'{a}}sek, M.~Galkin, A.~Parviz, G.~Wolf, A.~T. Luu, and
  D.~Beaini.
\newblock Long range graph benchmark.
\newblock In {\em NeurIPS}, 2022.

\bibitem{falcon2019lightning}
W.~Falcon.
\newblock Pytorch lightning.
\newblock \url{https://github.com/Lightning-AI/lightning}, 2019.

\bibitem{finzi2021a}
M.~Finzi, M.~Welling, and A.~G. Wilson.
\newblock A practical method for constructing equivariant multilayer
  perceptrons for arbitrary matrix groups.
\newblock In {\em ICML}, 2021.

\bibitem{fuchs2020se3}
F.~Fuchs, D.~E. Worrall, V.~Fischer, and M.~Welling.
\newblock Se(3)-transformers: 3d roto-translation equivariant attention
  networks.
\newblock In {\em NeurIPS}, 2020.

\bibitem{gilmer2017neural}
J.~Gilmer, S.~S. Schoenholz, P.~F. Riley, O.~Vinyals, and G.~E. Dahl.
\newblock Neural message passing for quantum chemistry.
\newblock In {\em ICML}, 2017.

\bibitem{grover2019stochastic}
A.~Grover, E.~Wang, A.~Zweig, and S.~Ermon.
\newblock Stochastic optimization of sorting networks via continuous
  relaxations.
\newblock In {\em ICLR}, 2019.

\bibitem{hendrycks2018gaussian}
D.~Hendrycks and K.~Gimpel.
\newblock Gaussian error linear units (gelus).
\newblock {\em arXiv}, 2018.

\bibitem{hornik1989multilayer}
K.~Hornik, M.~B. Stinchcombe, and H.~White.
\newblock Multilayer feedforward networks are universal approximators.
\newblock {\em Neural Networks}, 1989.

\bibitem{hu2020srategies}
W.~Hu, B.~Liu, J.~Gomes, M.~Zitnik, P.~Liang, V.~S. Pande, and J.~Leskovec.
\newblock Strategies for pre-training graph neural networks.
\newblock In {\em ICLR}, 2020.

\bibitem{immer2022invariance}
A.~Immer, T.~F.~A. van~der Ouderaa, G.~R{\"{a}}tsch, V.~Fortuin, and M.~van~der
  Wilk.
\newblock Invariance learning in deep neural networks with differentiable
  laplace approximations.
\newblock In {\em NeurIPS}, 2022.

\bibitem{jaegle2022perceiver}
A.~Jaegle, S.~Borgeaud, J.~Alayrac, C.~Doersch, C.~Ionescu, D.~Ding,
  S.~Koppula, D.~Zoran, A.~Brock, E.~Shelhamer, O.~J. H{\'{e}}naff, M.~M.
  Botvinick, A.~Zisserman, O.~Vinyals, and J.~Carreira.
\newblock Perceiver {IO:} {A} general architecture for structured inputs {\&}
  outputs.
\newblock In {\em ICLR}, 2022.

\bibitem{jaegle2021perceiver}
A.~Jaegle, F.~Gimeno, A.~Brock, O.~Vinyals, A.~Zisserman, and J.~Carreira.
\newblock Perceiver: General perception with iterative attention.
\newblock In {\em ICML}, 2021.

\bibitem{joshi2023on}
C.~K. Joshi, C.~Bodnar, S.~V. Mathis, T.~Cohen, and P.~Li{\`{o}}.
\newblock On the expressive power of geometric graph neural networks.
\newblock {\em arXiv}, 2023.

\bibitem{kaba2022equivariance}
S.~Kaba, A.~K. Mondal, Y.~Zhang, Y.~Bengio, and S.~Ravanbakhsh.
\newblock Equivariance with learned canonicalization functions.
\newblock {\em arXiv}, 2022.

\bibitem{kicki2021a}
P.~Kicki, P.~Skrzypczynski, and M.~Ozay.
\newblock A new approach to design symmetry invariant neural networks.
\newblock In {\em IJCNN}, 2021.

\bibitem{kim2022pure}
J.~Kim, D.~Nguyen, S.~Min, S.~Cho, M.~Lee, H.~Lee, and S.~Hong.
\newblock Pure transformers are powerful graph learners.
\newblock In {\em NeurIPS}, 2022.

\bibitem{kim2022equivariant}
J.~Kim, S.~Oh, S.~Cho, and S.~Hong.
\newblock Equivariant hypergraph neural networks.
\newblock In {\em ECCV}, 2022.

\bibitem{kim2021transformers}
J.~Kim, S.~Oh, and S.~Hong.
\newblock Transformers generalize deepsets and can be extended to graphs and
  hypergraphs.
\newblock In {\em NeurIPS}, 2021.

\bibitem{kingma2015adam}
D.~P. Kingma and J.~Ba.
\newblock Adam: {A} method for stochastic optimization.
\newblock In {\em ICLR}, 2015.

\bibitem{kingma2014auto-encoding}
D.~P. Kingma and M.~Welling.
\newblock Auto-encoding variational bayes.
\newblock In {\em ICLR}, 2014.

\bibitem{kipf2017semi}
T.~N. Kipf and M.~Welling.
\newblock Semi-supervised classification with graph convolutional networks.
\newblock In {\em ICLR}, 2017.

\bibitem{kohler2019equivariant}
J.~K{\"{o}}hler, L.~Klein, and F.~No{\'{e}}.
\newblock Equivariant flows: Sampling configurations for multi-body systems
  with symmetric energies.
\newblock {\em arXiv}, 2019.

\bibitem{kreuzer2021rethinking}
D.~Kreuzer, D.~Beaini, W.~L. Hamilton, V.~L{\'{e}}tourneau, and P.~Tossou.
\newblock Rethinking graph transformers with spectral attention.
\newblock {\em NeurIPS}, 2021.

\bibitem{lee2019set}
J.~Lee, Y.~Lee, J.~Kim, A.~R. Kosiorek, S.~Choi, and Y.~W. Teh.
\newblock Set transformer: {A} framework for attention-based
  permutation-invariant neural networks.
\newblock In {\em ICML}, 2019.

\bibitem{li2023blip2}
J.~Li, D.~Li, S.~Savarese, and S.~C.~H. Hoi.
\newblock {BLIP-2:} bootstrapping language-image pre-training with frozen image
  encoders and large language models.
\newblock {\em arXiv}, 2023.

\bibitem{li2023scaling}
Y.~Li, H.~Fan, R.~Hu, C.~Feichtenhofer, and K.~He.
\newblock Scaling language-image pre-training via masking.
\newblock In {\em CVPR}, 2023.

\bibitem{liu2023patchdropout}
Y.~Liu, C.~Matsoukas, F.~Strand, H.~Azizpour, and K.~Smith.
\newblock Patchdropout: Economizing vision transformers using patch dropout.
\newblock In {\em WACV}, 2023.

\bibitem{loshchilov2019decoupled}
I.~Loshchilov and F.~Hutter.
\newblock Decoupled weight decay regularization.
\newblock In {\em ICLR}, 2019.

\bibitem{lu2022pretrained}
K.~Lu, A.~Grover, P.~Abbeel, and I.~Mordatch.
\newblock Frozen pretrained transformers as universal computation engines.
\newblock In {\em AAAI}, 2022.

\bibitem{luo2022one}
S.~Luo, T.~Chen, Y.~Xu, S.~Zheng, T.~Liu, D.~He, and L.~Wang.
\newblock One transformer can understand both 2d {\&} 3d molecular data.
\newblock {\em arXiv}, 2022.

\bibitem{maron2019provably}
H.~Maron, H.~Ben{-}Hamu, H.~Serviansky, and Y.~Lipman.
\newblock Provably powerful graph networks.
\newblock In {\em NeurIPS}, 2019.

\bibitem{maron2019invariant}
H.~Maron, H.~Ben{-}Hamu, N.~Shamir, and Y.~Lipman.
\newblock Invariant and equivariant graph networks.
\newblock In {\em ICLR}, 2019.

\bibitem{maron2019on}
H.~Maron, E.~Fetaya, N.~Segol, and Y.~Lipman.
\newblock On the universality of invariant networks.
\newblock In {\em ICML}, 2019.

\bibitem{maron2020onlearning}
H.~Maron, O.~Litany, G.~Chechik, and E.~Fetaya.
\newblock On learning sets of symmetric elements.
\newblock In {\em ICML}, 2020.

\bibitem{mckey2022surge}
B.~D. McKay, M.~A. Yirik, and C.~Steinbeck.
\newblock Surge: a fast open-source chemical graph generator.
\newblock {\em J. Cheminformatics}, 2022.

\bibitem{mena2018learning}
G.~E. Mena, D.~Belanger, S.~W. Linderman, and J.~Snoek.
\newblock Learning latent permutations with gumbel-sinkhorn networks.
\newblock In {\em ICLR}, 2018.

\bibitem{mezzadri2006how}
F.~Mezzadri.
\newblock How to generate random matrices from the classical compact groups.
\newblock {\em Notices of the American Mathematical Society}, 2006.

\bibitem{min2022transformer}
E.~Min, R.~Chen, Y.~Bian, T.~Xu, K.~Zhao, W.~Huang, P.~Zhao, J.~Huang,
  S.~Ananiadou, and Y.~Rong.
\newblock Transformer for graphs: An overview from architecture perspective.
\newblock {\em arXiv}, 2022.

\bibitem{morris2019weisfeiler}
C.~Morris, M.~Ritzert, M.~Fey, W.~L. Hamilton, J.~E. Lenssen, G.~Rattan, and
  M.~Grohe.
\newblock Weisfeiler and leman go neural: Higher-order graph neural networks.
\newblock In {\em AAAI}, 2019.

\bibitem{mouli2021neural}
S.~C. Mouli and B.~Ribeiro.
\newblock Neural networks for learning counterfactual g-invariances from single
  environments.
\newblock In {\em ICLR}, 2021.

\bibitem{muller2023attending}
L.~M{\"{u}}ller, M.~Galkin, C.~Morris, and L.~Ramp{\'{a}}sek.
\newblock Attending to graph transformers.
\newblock {\em arXiv}, 2023.

\bibitem{murphy2019janossy}
R.~L. Murphy, B.~Srinivasan, V.~Rao, and B.~Ribeiro.
\newblock Janossy pooling: Learning deep permutation-invariant functions for
  variable-size inputs.
\newblock In {\em ICLR}, 2019.

\bibitem{murphy2019relational}
R.~L. Murphy, B.~Srinivasan, V.~A. Rao, and B.~Ribeiro.
\newblock Relational pooling for graph representations.
\newblock In {\em ICML}, 2019.

\bibitem{nt2019revisiting}
H.~NT and T.~Maehara.
\newblock Revisiting graph neural networks: All we have is low-pass filters.
\newblock {\em arXiv}, 2019.

\bibitem{oono2020graph}
K.~Oono and T.~Suzuki.
\newblock Graph neural networks exponentially lose expressive power for node
  classification.
\newblock In {\em ICLR}, 2020.

\bibitem{paischer2022history}
F.~Paischer, T.~Adler, V.~P. Patil, A.~Bitto{-}Nemling, M.~Holzleitner,
  S.~Lehner, H.~Eghbal{-}Zadeh, and S.~Hochreiter.
\newblock History compression via language models in reinforcement learning.
\newblock In {\em ICML}, 2022.

\bibitem{adam2019pytorch}
A.~Paszke, S.~Gross, F.~Massa, A.~Lerer, J.~Bradbury, G.~Chanan, T.~Killeen,
  Z.~Lin, N.~Gimelshein, L.~Antiga, A.~Desmaison, A.~Kopf, E.~Yang, Z.~DeVito,
  M.~Raison, A.~Tejani, S.~Chilamkurthy, B.~Steiner, L.~Fang, J.~Bai, and
  S.~Chintala.
\newblock Pytorch: An imperative style, high-performance deep learning library.
\newblock In {\em NeurIPS}, 2019.

\bibitem{prillo2020softsort}
S.~Prillo and J.~M. Eisenschlos.
\newblock Softsort: {A} continuous relaxation for the argsort operator.
\newblock In {\em ICML}, 2020.

\bibitem{puny2022frame}
O.~Puny, M.~Atzmon, E.~J. Smith, I.~Misra, A.~Grover, H.~Ben{-}Hamu, and
  Y.~Lipman.
\newblock Frame averaging for invariant and equivariant network design.
\newblock In {\em ICLR}, 2022.

\bibitem{raffel2020exploring}
C.~Raffel, N.~Shazeer, A.~Roberts, K.~Lee, S.~Narang, M.~Matena, Y.~Zhou,
  W.~Li, and P.~J. Liu.
\newblock Exploring the limits of transfer learning with a unified text-to-text
  transformer.
\newblock {\em J. Mach. Learn. Res.}, 2020.

\bibitem{rampasek2022recipe}
L.~Ramp{\'{a}}sek, M.~Galkin, V.~P. Dwivedi, A.~T. Luu, G.~Wolf, and D.~Beaini.
\newblock Recipe for a general, powerful, scalable graph transformer.
\newblock In {\em NeurIPS}, 2022.

\bibitem{rath2020boosting}
M.~Rath and A.~P. Condurache.
\newblock Boosting deep neural networks with geometrical prior knowledge: {A}
  survey.
\newblock {\em arXiv}, 2020.

\bibitem{ravanbakhsh2017equivariance}
S.~Ravanbakhsh, J.~G. Schneider, and B.~P{\'{o}}czos.
\newblock Equivariance through parameter-sharing.
\newblock In {\em ICML}, 2017.

\bibitem{ribeiro2020investigating}
L.~F.~R. Ribeiro, M.~Schmitt, H.~Sch{\"{u}}tze, and I.~Gurevych.
\newblock Investigating pretrained language models for graph-to-text
  generation.
\newblock {\em arXiv}, 2020.

\bibitem{romero2022learning}
D.~W. Romero and S.~Lohit.
\newblock Learning partial equivariances from data.
\newblock In {\em NeurIPS}, 2022.

\bibitem{rommel2022deep}
C.~Rommel, T.~Moreau, and A.~Gramfort.
\newblock Deep invariant networks with differentiable augmentation layers.
\newblock In {\em NeurIPS}, 2022.

\bibitem{rothermel2021dont}
D.~Rothermel, M.~Li, T.~Rocktäschel, and J.~Foerster.
\newblock Don't sweep your learning rate under the rug: A closer look at
  cross-modal transfer of pretrained transformers.
\newblock {\em arXiv}, 2021.

\bibitem{sannai2021equivariant}
A.~Sannai, M.~Kawano, and W.~Kumagai.
\newblock Equivariant and invariant reynolds networks.
\newblock {\em arXiv}, 2021.

\bibitem{satorras2021en}
V.~G. Satorras, E.~Hoogeboom, and M.~Welling.
\newblock E(n) equivariant graph neural networks.
\newblock In {\em ICML}, 2021.

\bibitem{shen2023cross}
J.~Shen, L.~Li, L.~M. Dery, C.~Staten, M.~Khodak, G.~Neubig, and A.~Talwalkar.
\newblock Cross-modal fine-tuning: Align then refine.
\newblock {\em arXiv}, 2023.

\bibitem{shi2016real}
W.~Shi, J.~Caballero, F.~Huszar, J.~Totz, A.~P. Aitken, R.~Bishop, D.~Rueckert,
  and Z.~Wang.
\newblock Real-time single image and video super-resolution using an efficient
  sub-pixel convolutional neural network.
\newblock In {\em CVPR}, 2016.

\bibitem{shirzad2023exphormer}
H.~Shirzad, A.~Velingker, B.~Venkatachalam, D.~J. Sutherland, and A.~K. Sinop.
\newblock Exphormer: Sparse transformers for graphs.
\newblock {\em arXiv}, 2023.

\bibitem{srinivasan2020on}
B.~Srinivasan and B.~Ribeiro.
\newblock On the equivalence between positional node embeddings and structural
  graph representations.
\newblock In {\em ICLR}, 2020.

\bibitem{steiner2021how}
A.~Steiner, A.~Kolesnikov, X.~Zhai, R.~Wightman, J.~Uszkoreit, and L.~Beyer.
\newblock How to train your vit? data, augmentation, and regularization in
  vision transformers.
\newblock {\em arXiv}, 2021.

\bibitem{thiede2021autobahn}
E.~H. Thiede, W.~Zhou, and R.~Kondor.
\newblock Autobahn: Automorphism-based graph neural nets.
\newblock In {\em NeurIPS}, 2021.

\bibitem{thomas2018tensor}
N.~Thomas, T.~E. Smidt, S.~Kearnes, L.~Yang, L.~Li, K.~Kohlhoff, and P.~Riley.
\newblock Tensor field networks: Rotation- and translation-equivariant neural
  networks for 3d point clouds.
\newblock {\em arXiv}, 2018.

\bibitem{topping2022understanding}
J.~Topping, F.~D. Giovanni, B.~P. Chamberlain, X.~Dong, and M.~M. Bronstein.
\newblock Understanding over-squashing and bottlenecks on graphs via curvature.
\newblock In {\em ICLR}, 2022.

\bibitem{vanderouderaa2022learning}
T.~F.~A. van~der Ouderaa and M.~van~der Wilk.
\newblock Learning invariant weights in neural networks.
\newblock In J.~Cussens and K.~Zhang, editors, {\em UAI}, 2022.

\bibitem{vanderpol2020mdp}
E.~van~der Pol, D.~E. Worrall, H.~van Hoof, F.~A. Oliehoek, and M.~Welling.
\newblock {MDP} homomorphic networks: Group symmetries in reinforcement
  learning.
\newblock In {\em NeurIPS}, 2020.

\bibitem{vanderwilk2018learning}
M.~van~der Wilk, M.~Bauer, S.~T. John, and J.~Hensman.
\newblock Learning invariances using the marginal likelihood.
\newblock In S.~Bengio, H.~M. Wallach, H.~Larochelle, K.~Grauman,
  N.~Cesa{-}Bianchi, and R.~Garnett, editors, {\em NeurIPS}, 2018.

\bibitem{vaswani2017attention}
A.~Vaswani, N.~Shazeer, N.~Parmar, J.~Uszkoreit, L.~Jones, A.~N. Gomez,
  L.~Kaiser, and I.~Polosukhin.
\newblock Attention is all you need.
\newblock In {\em NeurIPS}, 2017.

\bibitem{velikovic2018graph}
P.~Velickovic, G.~Cucurull, A.~Casanova, A.~Romero, P.~Li{\`{o}}, and
  Y.~Bengio.
\newblock Graph attention networks.
\newblock In {\em ICLR}, 2018.

\bibitem{winter2021permutation}
R.~Winter, F.~No{\'{e}}, and D.~Clevert.
\newblock Permutation-invariant variational autoencoder for graph-level
  representation learning.
\newblock In {\em NeurIPS}, 2021.

\bibitem{wolf2019huggingface}
T.~Wolf, L.~Debut, V.~Sanh, J.~Chaumond, C.~Delangue, A.~Moi, P.~Cistac,
  T.~Rault, R.~Louf, M.~Funtowicz, and J.~Brew.
\newblock Huggingface's transformers: State-of-the-art natural language
  processing.
\newblock {\em arXiv}, 2019.

\bibitem{xiong2020on}
R.~Xiong, Y.~Yang, D.~He, K.~Zheng, S.~Zheng, C.~Xing, H.~Zhang, Y.~Lan,
  L.~Wang, and T.~Liu.
\newblock On layer normalization in the transformer architecture.
\newblock In {\em ICML}, 2020.

\bibitem{xu2019how}
K.~Xu, W.~Hu, J.~Leskovec, and S.~Jegelka.
\newblock How powerful are graph neural networks?
\newblock In {\em ICLR}, 2019.

\bibitem{yarotsky2018universal}
D.~Yarotsky.
\newblock Universal approximations of invariant maps by neural networks.
\newblock {\em arXiv}, 2018.

\bibitem{ying2021do}
C.~Ying, T.~Cai, S.~Luo, S.~Zheng, G.~Ke, D.~He, Y.~Shen, and T.~Liu.
\newblock Do transformers really perform bad for graph representation?
\newblock In {\em NeurIPS}, 2021.

\bibitem{yun2020are}
C.~Yun, S.~Bhojanapalli, A.~S. Rawat, S.~J. Reddi, and S.~Kumar.
\newblock Are transformers universal approximators of sequence-to-sequence
  functions?
\newblock In {\em ICLR}, 2020.

\bibitem{zhang2021rethinking}
B.~Zhang, S.~Luo, L.~Wang, and D.~He.
\newblock Rethinking the expressive power of gnns via graph biconnectivity.
\newblock {\em arXiv}, 2023.

\end{thebibliography}
}

\newpage
\appendix
\section{Appendix}\label{sec:appendix}

\subsection{Proofs}\label{sec:apdx_proofs}

\subsubsection{Proof of Theorem~\ref{thm:probabilistic_symmetrization_equivariance} (Section~\ref{sec:probabilistic_symmetrization})}\label{sec:apdx_proof_thm_probabilistic_symmetrization_equivariance}
\begingroup
\def\thethm{\ref{thm:probabilistic_symmetrization_equivariance}}
\begin{thm}
    If $p_\omega$ is $G$ equivariant, then $\phi_{\theta,\omega}$ is $G$ equivariant for arbitrary $f_\theta$.
\end{thm}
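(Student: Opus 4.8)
The plan is to verify the equivariance identity $\phi_{\theta,\omega}(\rho_1(g')\mathbf{x}) = \rho_2(g')\phi_{\theta,\omega}(\mathbf{x})$ for all $g'\in G$ directly from the definition in Eq.~\eqref{eq:probabilistic_symmetrization}, by performing a change of variables inside the expectation that exploits the probabilistic equivariance hypothesis Eq.~\eqref{eq:probabilistic_equivariance}. Invariance will then follow as the special case $\rho_2(g)=\mathbf{I}$.

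First I would expand $\phi_{\theta,\omega}(\rho_1(g')\mathbf{x}) = \mathbb{E}_{g\sim p_\omega(g|\rho_1(g')\mathbf{x})}\!\left[\rho_2(g)f_\theta(\rho_1(g)^{-1}\rho_1(g')\mathbf{x})\right]$. Applying Eq.~\eqref{eq:probabilistic_equivariance} with the substitution $g\mapsto g'^{-1}g$ (legitimate because left translation is a bijection of $G$) gives $p_\omega(g|\rho_1(g')\mathbf{x}) = p_\omega(g'^{-1}g|\mathbf{x})$; equivalently, sampling $g\sim p_\omega(\cdot\,|\,\rho_1(g')\mathbf{x})$ has the same law as drawing $h\sim p_\omega(\cdot\,|\,\mathbf{x})$ and setting $g=g'h$. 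This is the crucial step, and it is the one place where care is needed: rewriting the expectation requires the left-translation invariance of the relevant (Haar) measure on the compact group $G$, so I would phrase the argument at the level of laws / pushforward measures rather than densities, to avoid any Jacobian bookkeeping.

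Then I would substitute $g=g'h$ into the integrand and use that $\rho_1$ and $\rho_2$ are group homomorphisms: $\rho_2(g'h)=\rho_2(g')\rho_2(h)$, and $\rho_1(g'h)^{-1}\rho_1(g') = \rho_1(h)^{-1}\rho_1(g')^{-1}\rho_1(g') = \rho_1(h)^{-1}$, so the integrand collapses to $\rho_2(g')\rho_2(h)f_\theta(\rho_1(h)^{-1}\mathbf{x})$. Finally, since $\rho_2(g')$ is a fixed linear map, linearity of expectation lets me pull it outside the expectation over $h$, yielding exactly $\rho_2(g')\,\mathbb{E}_{h\sim p_\omega(h|\mathbf{x})}\!\left[\rho_2(h)f_\theta(\rho_1(h)^{-1}\mathbf{x})\right] = \rho_2(g')\phi_{\theta,\omega}(\mathbf{x})$.

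The only genuine obstacle is the measure-theoretic justification of the change of variables on $G$; the remainder is routine manipulation with group homomorphisms and linearity of expectation. Notably, no property of $f_\theta$ is ever invoked, which is precisely why the conclusion holds for arbitrary $f_\theta$.
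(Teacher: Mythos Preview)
Your proposal is correct and follows essentially the same approach as the paper: both perform the change of variables $h=g'^{-1}g$ in the expectation, invoke the equivariance of $p_\omega$ to rewrite the law as $p_\omega(h|\mathbf{x})$, and then use the homomorphism property of $\rho_1,\rho_2$ together with linearity of expectation to extract $\rho_2(g')$. Your additional remark about phrasing the change of variables at the level of pushforward measures (rather than densities) is a nice bit of extra care that the paper's proof leaves implicit.
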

\addtocounter{thm}{-1}
\endgroup
\begin{proof}
    We prove $\phi_{\theta,\omega}(\rho_1(g')\mathbf{x}) = \rho_2(g')\phi_{\theta,\omega}(\mathbf{x})$ for all $\mathbf{x}\in\mathcal{X}$ and $g'\in G$.
    From Eq.~\eqref{eq:probabilistic_symmetrization}, we have:
    \begin{align}
        \phi_{\theta,\omega}(\rho_1(g')\mathbf{x}) &= \mathbb{E}_{p_\omega(g|\rho_1(g')\mathbf{x})}\left[\rho_2(g)f_\theta(\rho_1(g)^{-1}\rho_1(g')\mathbf{x})\right].
    \end{align}
    Let us introduce transformed random variable $h=g'^{-1}g\in G$ such that $g = g'h$.
    Since the distribution $p_\omega$ is $G$ equivariant, we can see that $p_\omega(g|\rho_1(g')\mathbf{x}) = p_\omega(g'^{-1}g|\rho_1(g'^{-1})\rho_1(g')\mathbf{x}) = p_\omega(g'^{-1}g|\mathbf{x})=p_\omega(h|\mathbf{x})$.
    Thus, we can rewrite the above expectation with respect to $h$ as follows:
    \begin{align}
        \phi_{\theta,\omega}(\rho_1(g')\mathbf{x}) &= \mathbb{E}_{p_\omega(h|\mathbf{x})}\left[\rho_2(g'h)f_\theta(\rho_1(g'h)^{-1}\rho_1(g')\mathbf{x})\right]\nonumber\\
        &= \mathbb{E}_{p_\omega(h|\mathbf{x})}\left[\rho_2(g')\rho_2(h)f_\theta(\rho_1(h)^{-1}\rho_1(g')^{-1}\rho_1(g')\mathbf{x})\right]\nonumber\\
        &= \rho_2(g')\mathbb{E}_{p_\omega(h|\mathbf{x})}\left[\rho_2(h)f_\theta(\rho_1(h)^{-1}\mathbf{x})\right]\nonumber\\
        &= \rho_2(g')\phi_{\theta,\omega}(\mathbf{x}),
    \end{align}
    showing the $G$ equivariance of $\phi_{\theta,\omega}$ for arbitrary $f_\theta$.
\end{proof}

\subsubsection{Proof of Theorem~\ref{thm:probabilistic_symmetrization_universality} (Section~\ref{sec:probabilistic_symmetrization})}\label{sec:apdx_proof_thm_probabilistic_symmetrization_universality}
\begingroup
\def\thethm{\ref{thm:probabilistic_symmetrization_universality}}
\begin{thm}
    If $p_\omega$ is $G$ equivariant and $f_\theta$ is a universal approximator, then $\phi_{\theta,\omega}$ is a universal approximator of $G$ equivariant functions.
\end{thm}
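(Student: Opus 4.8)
The plan is to show that given a target $G$ equivariant function $\psi:\mathcal{X}\to\mathcal{Y}$, we can find parameters $\theta$ of the base model $f_\theta$ so that $\phi_{\theta,\omega}$ approximates $\psi$ arbitrarily well. The key observation is that $\psi$ itself is a valid choice of base function, and symmetrizing an already-equivariant function leaves it unchanged: if $f_\theta = \psi$, then by equivariance of $\psi$ we have $\rho_2(g)\psi(\rho_1(g)^{-1}\mathbf{x}) = \rho_2(g)\rho_2(g)^{-1}\psi(\mathbf{x}) = \psi(\mathbf{x})$ for every $g$, so the integrand in Eq.~\eqref{eq:probabilistic_symmetrization} is constant in $g$ and $\phi_{\theta,\omega}(\mathbf{x}) = \psi(\mathbf{x})$ regardless of the distribution $p_\omega$. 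So the only remaining task is to control the error introduced by replacing $\psi$ with its universal approximator $f_\theta$.

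First I would fix $\varepsilon > 0$ and invoke the universal approximation property of $f_\theta$ to obtain $\theta$ with $\sup_{\mathbf{x}\in K}\|f_\theta(\mathbf{x}) - \psi(\mathbf{x})\| \le \varepsilon$ on a suitable compact set $K$ — here I need $K$ to be chosen so that it is closed under the action of $\rho_1(g)^{-1}$ for all $g\in G$ (for instance, take $K$ to be the union of the $G$-orbits of the compact domain of interest, which is again compact since $G$ is compact and the action is continuous). Then for any $\mathbf{x}$ in the domain of interest,
\begin{align}
\|\phi_{\theta,\omega}(\mathbf{x}) - \psi(\mathbf{x})\|
&= \left\|\mathbb{E}_{p_\omega(g|\mathbf{x})}\!\left[\rho_2(g)\bigl(f_\theta(\rho_1(g)^{-1}\mathbf{x}) - \psi(\rho_1(g)^{-1}\mathbf{x})\bigr)\right]\right\| \nonumber\\
&\le \mathbb{E}_{p_\omega(g|\mathbf{x})}\!\left[\|\rho_2(g)\|_{\mathrm{op}}\,\|f_\theta(\rho_1(g)^{-1}\mathbf{x}) - \psi(\rho_1(g)^{-1}\mathbf{x})\|\right] \le C\varepsilon,
\end{align}
where the first equality uses $\psi = \phi_{\cdot,\omega}$ when the base function is $\psi$ (the argument above), and $C = \sup_{g\in G}\|\rho_2(g)\|_{\mathrm{op}}$ is finite because $G$ is compact and $\rho_2$ is continuous. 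Rescaling $\varepsilon$ gives the claim.

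The main obstacle, and the only place requiring care, is the interplay between the domain of the approximation and the group action: a universal approximation theorem for MLPs or transformers is stated on a compact set, so I must argue that enlarging the domain to a $G$-invariant compact set costs nothing, and that the operator norms $\|\rho_2(g)\|_{\mathrm{op}}$ are uniformly bounded so the expectation of the error stays controlled. Both follow from compactness of $G$ together with continuity of the representations and the action; for the transformer case one additionally relies on the universality result of~\cite{yun2020are} applying to the relevant token-sequence domain. Everything else — the bound on the expectation of a norm, and the fact that symmetrization fixes equivariant functions — is routine.
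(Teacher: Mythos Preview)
Your proposal is correct and follows essentially the same route as the paper's proof: both exploit that symmetrizing an already equivariant $\psi$ returns $\psi$, enlarge the approximation domain to a $G$-invariant compact set (the orbit of the original compact set, compact by continuity of the action and compactness of $G$), and bound the residual expectation via a uniform bound $\sup_{g\in G}\|\rho_2(g)\|_{\mathrm{op}}<\infty$. Your handling of the expectation bound (pulling the norm inside via the triangle inequality before applying the operator-norm bound) is in fact slightly cleaner than the paper's write-up.
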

\addtocounter{thm}{-1}
\endgroup
\begin{proof}
    The proof is inspired by universality proofs of prior symmetrization approaches~\cite{yarotsky2018universal,puny2022frame,kaba2022equivariance}.
    Let $\psi:\mathcal{X}\to\mathcal{Y}$ be an arbitrary $G$ equivariant function.
    By equivariance of $\psi$, we have:
    \begin{align}
        \left\|\psi(\mathbf{x})-\phi_{\theta,\omega}(\mathbf{x})\right\| &= \left\|\psi(\mathbf{x}) - \mathbb{E}_{p_\omega(g|\mathbf{x})}\left[\rho_2(g)f_\theta(\rho_1(g)^{-1}\mathbf{x})\right]\right\| \nonumber\\
        &= \left\|\mathbb{E}_{p_\omega(g|\mathbf{x})}\left[\psi(\mathbf{x})\right] - \mathbb{E}_{p_\omega(g|\mathbf{x})}\left[\rho_2(g)f_\theta(\rho_1(g)^{-1}\mathbf{x})\right]\right\| \nonumber\\
        &= \left\|\mathbb{E}_{p_\omega(g|\mathbf{x})}\left[\rho_2(g)\rho_2(g)^{-1}\psi(\mathbf{x})\right] - \mathbb{E}_{p_\omega(g|\mathbf{x})}\left[\rho_2(g)f_\theta(\rho_1(g)^{-1}\mathbf{x})\right]\right\| \nonumber\\
        &= \left\|\mathbb{E}_{p_\omega(g|\mathbf{x})}\left[\rho_2(g)\psi(\rho_1(g)^{-1}\mathbf{x})\right] - \mathbb{E}_{p_\omega(g|\mathbf{x})}\left[\rho_2(g)f_\theta(\rho_1(g)^{-1}\mathbf{x})\right]\right\| \nonumber\\
        &= \left\|\mathbb{E}_{p_\omega(g|\mathbf{x})}\left[\rho_2(g)\psi(\rho_1(g)^{-1}\mathbf{x}) - \rho_2(g)f_\theta(\rho_1(g)^{-1}\mathbf{x})\right]\right\|.
    \end{align}
    As $\mathcal{Y}$ is finite-dimensional, we can assume that the linear operators in $\textnormal{GL}(\mathcal{Y})$ are bounded and so is the induced operator norm of group representation $\|\rho_2(g)\|$ for all $g\in G$.
    Thus, we have:
    \begin{align}
        \left\|\psi(\mathbf{x})-\phi_{\theta,\omega}(\mathbf{x})\right\| &\leq \max_{h\in G}\|\rho_2(h)\|\left\|\mathbb{E}_{p_\omega(g|\mathbf{x})}\left[\psi(\rho_1(g)^{-1}\mathbf{x}) - f_\theta(\rho_1(g)^{-1}\mathbf{x})\right]\right\| \nonumber\\
        &\leq c\left\|\mathbb{E}_{p_\omega(g|\mathbf{x})}\left[\psi(\rho_1(g)^{-1}\mathbf{x}) - f_\theta(\rho_1(g)^{-1}\mathbf{x})\right]\right\|.
    \end{align}
    for some $c > 0$.
    If $f_\theta$ is a universal approximator, for any compact set $\mathcal{K}\subseteq\mathcal{X}$ and any $\epsilon > 0$, there exists some $\theta$ such that $\|\psi(\mathbf{x})-f_\theta(\mathbf{x})\|\leq\epsilon$ for all $\mathbf{x}\in\mathcal{K}$.
    Consider the set $\mathcal{K}_\textnormal{sym} = \cup_{g\in G}\rho_1(g)\mathcal{K}$ where $\rho_1(g)\mathcal{K}$ denotes the image of the set $\mathcal{K}$ under linear transformation by $\rho_1(g)$.
    We use the fact that $\mathcal{K}_\textnormal{sym}$ is also a compact set since it is the image of the compact set $G\times \mathcal{K}$ under continuous map $(g,\mathbf{x})\mapsto\rho_1(g)\mathbf{x}$.
    As a consequence, for any compact set $\mathcal{K}\subseteq\mathcal{X}$ and any $\epsilon/c>0$, there exists some $\theta$ such that $\max_{g\in G}\|\psi(\rho_1(g)\mathbf{x})-f_\theta(\rho_1(g)\mathbf{x})\|\leq\epsilon/c$ for all $\mathbf{x}\in\mathcal{K}$.
    Since a group is closed under inverse, for any compact set $\mathcal{K}\subseteq\mathcal{X}$ and any $\epsilon > 0$, there exists some $\theta$ such that:
    \begin{align}
        \left\|\psi(\mathbf{x})-\phi_{\theta,\omega}(\mathbf{x})\right\| &\leq c\left\|\mathbb{E}_{p_\omega(g|\mathbf{x})}\left[\psi(\rho_1(g)^{-1}\mathbf{x}) - f_\theta(\rho_1(g)^{-1}\mathbf{x})\right]\right\|\nonumber\\
        &\leq c\max_{g\in G}\left\|\psi(\rho_1(g)^{-1}\mathbf{x}) - f_\theta(\rho_1(g)^{-1}\mathbf{x})\right\|\nonumber\\
        &= \epsilon,
    \end{align}
    for all $\mathbf{x}\in\mathcal{K}$, showing that $\phi_{\theta,\omega}$ is a universal approximator of $G$ equivariant functions.
\end{proof}

While we have assumed that the group $G$ is compact in the proof, we conjecture that the results can be extended to non-compact groups if we make an alternative assumption that the distribution $p_\omega(g|\mathbf{x})$ is compactly supported for all $\mathbf{x}\in\mathcal{K}$.
We leave proving this as a future work.

\subsubsection{Proof of Theorem~\ref{thm:distribution_equivariance} (Section~\ref{sec:probabilistic_symmetrization})}\label{sec:apdx_proof_thm_distribution_equivariance}
\begingroup
\def\thethm{\ref{thm:distribution_equivariance}}
\begin{thm}
    If $q_\omega$ is $G$ equivariant and $ p(\boldsymbol{\epsilon})$ is $G$ invariant under representation~$\rho'$ that $|\det{\rho'
    (g)}|=1\forall g\in G$, the distribution $p_\omega(g|\mathbf{x})$ characterized by $q_\omega:(\mathbf{x}, \boldsymbol{\epsilon})\mapsto\rho(g)$ is $G$ equivariant.
\end{thm}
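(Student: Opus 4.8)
The plan is to read off probabilistic equivariance (Eq.~\eqref{eq:probabilistic_equivariance}) directly from the reparameterized construction in Eq.~\eqref{eq:reparameterization}. The key observation is that $p_\omega(\cdot\,|\,\mathbf{x})$ is, by definition, the law (pushforward measure) of $\boldsymbol{\epsilon}\sim p(\boldsymbol{\epsilon})$ under the map $\boldsymbol{\epsilon}\mapsto q_\omega(\mathbf{x},\boldsymbol{\epsilon})$. So to verify $p_\omega(g\,|\,\mathbf{x}) = p_\omega(g'g\,|\,\rho_1(g')\mathbf{x})$ for all $\mathbf{x}\in\mathcal{X}$ and $g,g'\in G$, I would fix $\mathbf{x}$ and $g'$ and compute the law of $q_\omega(\rho_1(g')\mathbf{x},\boldsymbol{\epsilon})$ with $\boldsymbol{\epsilon}\sim p(\boldsymbol{\epsilon})$, aiming to show it equals the pushforward of $p_\omega(\cdot\,|\,\mathbf{x})$ under left translation $g\mapsto g'g$.

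The argument proceeds in two moves. First, I substitute $\boldsymbol{\epsilon}\mapsto\rho'(g')\boldsymbol{\epsilon}$ in the noise: this is allowed because the law of $\rho'(g')\boldsymbol{\epsilon}$ equals the law of $\boldsymbol{\epsilon}$. This is exactly where the hypotheses on $p(\boldsymbol{\epsilon})$ are used — a standard change-of-variables computation shows the density of $\rho'(g')\boldsymbol{\epsilon}$ at a point is $p(\rho'(g')^{-1}\boldsymbol{\epsilon})\,|\det\rho'(g')|^{-1}$, and the pointwise $G$-invariance $p(\boldsymbol{\epsilon})=p(\rho'(g)\boldsymbol{\epsilon})$ together with $|\det\rho'(g')|=1$ collapses this back to $p(\boldsymbol{\epsilon})$. (One should note that the determinant condition is precisely what upgrades pointwise density invariance to genuine distributional invariance; without it the Jacobian factor would survive.) Second, having inserted $\rho'(g')$ for free, I apply the $G$-equivariance of $q_\omega$, $q_\omega(\rho_1(g')\mathbf{x},\rho'(g')\boldsymbol{\epsilon})=\rho(g')q_\omega(\mathbf{x},\boldsymbol{\epsilon})$, to conclude that $q_\omega(\rho_1(g')\mathbf{x},\boldsymbol{\epsilon})$ has the same law as $\rho(g')\cdot q_\omega(\mathbf{x},\boldsymbol{\epsilon})$, i.e. the law obtained by sampling $g\sim p_\omega(\cdot\,|\,\mathbf{x})$ and left-multiplying by $g'$.

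Finally, I would translate this pushforward identity into the density form of Eq.~\eqref{eq:probabilistic_equivariance}. Since $G$ is compact, it carries a (bi-)invariant Haar measure, and the pushforward of a distribution with density $h(g)$ under $g\mapsto g'g$ has density $g\mapsto h(g'^{-1}g)$ by left-invariance of Haar measure; hence $p_\omega(g\,|\,\rho_1(g')\mathbf{x}) = p_\omega(g'^{-1}g\,|\,\mathbf{x})$, and reindexing $g\mapsto g'g$ gives $p_\omega(g'g\,|\,\rho_1(g')\mathbf{x}) = p_\omega(g\,|\,\mathbf{x})$, which is exactly Eq.~\eqref{eq:probabilistic_equivariance}. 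I expect the main obstacle to be purely bookkeeping of the measure-theoretic details: getting the Jacobian factor in the noise change-of-variables right (the one place the $|\det\rho'(g)|=1$ hypothesis is genuinely needed), and cleanly passing between the pushforward-measure statement and the conditional-density statement via invariance of Haar measure — rather than needing any new idea. A cleaner alternative I might adopt is to state everything at the level of test functions, $\mathbb{E}_{g\sim p_\omega(\cdot\,|\,\rho_1(g')\mathbf{x})}[F(g)] = \mathbb{E}_{g\sim p_\omega(\cdot\,|\,\mathbf{x})}[F(g'g)]$ for all bounded measurable $F$, which sidesteps densities entirely and then specializes to Eq.~\eqref{eq:probabilistic_equivariance} whenever densities exist; one should also remark that identifying a distribution over the matrices $\rho(g)$ with one over group elements $g$ is the implicit convention of the construction.
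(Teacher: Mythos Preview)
Your proposal is correct and follows essentially the same approach as the paper: both arguments express $p_\omega(\cdot\,|\,\mathbf{x})$ as the pushforward of $p(\boldsymbol{\epsilon})$ through $q_\omega(\mathbf{x},\cdot)$, perform a change of variables $\boldsymbol{\epsilon}\mapsto\rho'(g')\boldsymbol{\epsilon}$ in the noise (invoking $|\det\rho'(g)|=1$ and the invariance of $p(\boldsymbol{\epsilon})$), and then use the equivariance of $q_\omega$ to pull out the factor $\rho(g')$. The only cosmetic difference is that the paper writes this via delta functions $\delta(\rho(g)=q_\omega(\mathbf{x},\boldsymbol{\epsilon}))$ integrated against $p(\boldsymbol{\epsilon})$, whereas you phrase it in terms of pushforward laws and Haar measure; your test-function alternative is a cleaner formalization of the same computation.
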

\addtocounter{thm}{-1}
\endgroup
\begin{proof}
    We prove $p_\omega(g'g|\rho_1(g')\mathbf{x})=p_\omega(g|\mathbf{x})$ for all $\mathbf{x}\in\mathcal{X}$ and $g,g'\in G$.
    In general, we are interested in obtaining a faithful representation $\rho$, \emph{i.e.}, such that $\rho(g)$ is distinct for each $g$.
    We can interpret the probability $p_\omega(g|\mathbf{x},\boldsymbol{\epsilon})$ as a delta distribution centered at the group representation $\rho(g)$:
    \begin{align}
        p_\omega(g|\mathbf{x},\boldsymbol{\epsilon}) = \delta(\rho(g) = q_\omega(\mathbf{x}, \boldsymbol{\epsilon})).
    \end{align}
    To obtain $p_\omega(g|\mathbf{x})$, we marginalize over $p(\boldsymbol{\epsilon})$:
    \begin{align}
        p_\omega(g|\mathbf{x}) &= \int_{\boldsymbol{\epsilon}}{p_\omega(g|\mathbf{x},\boldsymbol{\epsilon})p(\boldsymbol{\epsilon})}d\boldsymbol{\epsilon}\nonumber\\
        &= \int_{\boldsymbol{\epsilon}}{\delta(\rho(g) = q_\omega(\mathbf{x}, \boldsymbol{\epsilon}))p(\boldsymbol{\epsilon})}d\boldsymbol{\epsilon}.
    \end{align}
    Let us consider $p_\omega(g'g|\rho_1(g')\mathbf{x})$:
    \begin{align}
        p_\omega(g'g|\rho_1(g')\mathbf{x}) &= \int_{\boldsymbol{\epsilon}}{\delta(\rho(g'g) = q_\omega(\rho_1(g')\mathbf{x}, \boldsymbol{\epsilon}))p(\boldsymbol{\epsilon})}d\boldsymbol{\epsilon}.
    \end{align}
    Using the $G$ equivariance of $q_\omega$, we have:
    \begin{align}
        q_\omega(\rho_1(g')\mathbf{x}, \boldsymbol{\epsilon}) &= \rho(g')q_\omega(\rho_1(g'^{-1})\rho_1(g')\mathbf{x}, \rho'(g'^{-1})\boldsymbol{\epsilon})\nonumber\\
        &= \rho(g')q_\omega(\mathbf{x}, \rho'(g'^{-1})\boldsymbol{\epsilon})
    \end{align}
    which leads to the following:
    \begin{align}
        p_\omega(g'g|\rho_1(g')\mathbf{x}) &= \int_{\boldsymbol{\epsilon}}{\delta(\rho(g'g) = \rho(g')q_\omega(\mathbf{x}, \rho'(g'^{-1})\boldsymbol{\epsilon}))
        p(\boldsymbol{\epsilon})}
        d\boldsymbol{\epsilon}\nonumber\\
        &= \int_{\boldsymbol{\epsilon}}{\delta(\rho(g) = q_\omega(\mathbf{x}, \rho'(g'^{-1})\boldsymbol{\epsilon}))
        p(\boldsymbol{\epsilon})}d\boldsymbol{\epsilon}.
    \end{align}
    Note that the second equality follows from invertibility of $\rho(g')$.
    We now introduce a change of variables $\boldsymbol{\epsilon}' = \rho'(g'^{-1})\boldsymbol{\epsilon}$ that $\boldsymbol{\epsilon} = \rho'(g')\boldsymbol{\epsilon}'$:
    \begin{align}
        p_\omega(g'g|\rho_1(g')\mathbf{x}) &= \int_{\boldsymbol{\epsilon}'}{\delta(\rho(g) = q_\omega(\mathbf{x}, \boldsymbol{\epsilon}'))
        p(\rho'(g')\boldsymbol{\epsilon}')}\frac{1}{|\det{\rho'(g'^{-1})}|}d\boldsymbol{\epsilon}'.
    \end{align}
    With $|\det{\rho'(g'^{-1})}| = 1$, and $G$ invariance of $p(\boldsymbol{\epsilon})$ which gives $p(\rho'(g')\boldsymbol{\epsilon}')=p(\boldsymbol{\epsilon}')$, we get:
    \begin{align}
        p_\omega(g'g|\rho_1(g')\mathbf{x}) &= \int_{\boldsymbol{\epsilon}'}{\delta(\rho(g) = q_\omega(\mathbf{x}, \boldsymbol{\epsilon}'))
        p(\boldsymbol{\epsilon}')}d\boldsymbol{\epsilon}'\nonumber\\
        &= p_\omega(g|\mathbf{x}),
    \end{align}
    showing the $G$ equivariance of $p_\omega(g|\mathbf{x})$.
\end{proof}

\subsubsection{Proof of Validity for Implemented Equivariant Distributions \texorpdfstring{$p_\omega$}{p\_ω} (Section~\ref{sec:equivariant_distribution})}\label{sec:apdx_proof_correctness_equivariant_distribution}
We formally show $G$ equivariance of the implemented distributions $p_\omega(g|\mathbf{x})$ presented in Section~\ref{sec:equivariant_distribution}.
All implementations have a form of noise-outsourced function $q_\omega:(\mathbf{x},\boldsymbol{\epsilon})\mapsto\rho(g)$ using distribution $\boldsymbol{\epsilon}\sim p(\boldsymbol{\epsilon})$ and map $q_\omega$ which is composed of $G$ equivariant neural network and postprocessing to $\rho(g)$.
From Theorem~\ref{thm:distribution_equivariance}, for $G$ equivariance of $p_\omega(g|\mathbf{x})$, it is sufficient to show $G$ invariance of $p(\boldsymbol{\epsilon})$ under a representation $\rho'$ such that $|\det{\rho'(g)}|=1$ along with $G$ equivariance of $q_\omega$, which we show below.

\paragraph{Symmetric Group $\textnormal{S}_n$}
We recall that $p_\omega(g|\mathbf{x})$ for the symmetric group $\textnormal{S}_n$ is implemented as below:
\begin{enumerate}[label={\arabic{enumi}}.,ref={\arabic{enumi}},leftmargin=*]
    \item Sample node-level noise $\boldsymbol{\epsilon}\in\mathbb{R}^{n\times d}$ from i.i.d. uniform $\textnormal{Unif}[0,\eta]$.
    \item Use a GNN to obtain node-level scalar features $(\mathbf{x},\boldsymbol{\epsilon})\mapsto\mathbf{Z}\in\mathbb{R}^n$.
    \item Assuming $\mathbf{Z}$ is tie-free, use $\textnormal{argsort}$~\cite{winter2021permutation} to obtain group representation $\mathbf{Z}\mapsto\mathbf{P}_g = \rho(g)$.
    \begin{align}\label{eq:apdx_argsort}
        \mathbf{P}_g = \textnormal{eq}(\mathbf{Z}\boldsymbol{1}^\top, \boldsymbol{1}\textnormal{sort}(\mathbf{Z})^\top),
    \end{align}
    where $\textnormal{eq}$ denotes elementwise equality indicator.
\end{enumerate}
We now show the following:
\begin{proposition}\label{proposition:symmetric_group_equivariance}
    The proposed distribution $p_\omega(g|\mathbf{x})$ for the symmetric group $\textnormal{S}_n$ is equivariant.
\end{proposition}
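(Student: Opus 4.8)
The plan is to reduce the claim to the sufficient conditions of Theorem~\ref{thm:distribution_equivariance}: it suffices to exhibit a representation $\rho'$ of $\textnormal{S}_n$ on the noise space $\mathbb{R}^{n\times d}$ with $|\det\rho'(g)|=1$ for all $g$ under which the noise law $p(\boldsymbol{\epsilon})$ is invariant, and to verify that the noise-outsourced map $q_\omega:(\mathbf{x},\boldsymbol{\epsilon})\mapsto\mathbf{P}_g$ is $\textnormal{S}_n$ equivariant with respect to $\rho_1$ on the graph input, $\rho'$ on the noise, and the permutation representation $\rho(g)=\mathbf{P}_g$ on the output.

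For the noise, I would take $\rho'(g)$ to be the action permuting the $n$ rows of $\boldsymbol{\epsilon}\in\mathbb{R}^{n\times d}$ by $\mathbf{P}_g$, i.e. $\rho'(g)\boldsymbol{\epsilon}=\mathbf{P}_g\boldsymbol{\epsilon}$. Regarded as a linear map on $\mathbb{R}^{nd}$ this is a coordinate permutation, hence orthogonal, so $|\det\rho'(g)|=1$. Since the $nd$ entries of $\boldsymbol{\epsilon}$ are i.i.d.\ $\textnormal{Unif}[0,\eta]$, their joint law is exchangeable, so $p(\rho'(g)\boldsymbol{\epsilon})=p(\boldsymbol{\epsilon})$ for every $g$, giving the required $\textnormal{S}_n$ invariance.

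For $q_\omega$, I would write it as the composition of the GNN $(\mathbf{x},\boldsymbol{\epsilon})\mapsto\mathbf{Z}\in\mathbb{R}^n$ with the $\textnormal{argsort}$ map $\mathbf{Z}\mapsto\mathbf{P}_g$ of Eq.~\eqref{eq:apdx_argsort}, and check each factor is equivariant (equivariance being closed under composition). The GNN, acting on node-level features with the appropriate permutation action $\rho_1(g')$ on the graph and row permutation $\mathbf{P}_{g'}$ on $\boldsymbol{\epsilon}$, is equivariant by construction: $\textnormal{GNN}(\rho_1(g')\mathbf{x},\mathbf{P}_{g'}\boldsymbol{\epsilon})=\mathbf{P}_{g'}\mathbf{Z}$. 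For $\textnormal{argsort}$, I would chase indices: the multiset of entries of $\mathbf{P}_{g'}\mathbf{Z}$ equals that of $\mathbf{Z}$, so $\textnormal{sort}(\mathbf{P}_{g'}\mathbf{Z})=\textnormal{sort}(\mathbf{Z})$, while $(\mathbf{P}_{g'}\mathbf{Z})_i=Z_{g'^{-1}(i)}$; hence the $(i,j)$ entry of $\textnormal{argsort}(\mathbf{P}_{g'}\mathbf{Z})$ is $\textnormal{eq}(Z_{g'^{-1}(i)},\textnormal{sort}(\mathbf{Z})_j)$, which is exactly the $(g'^{-1}(i),j)$ entry of $\mathbf{P}_g=\textnormal{argsort}(\mathbf{Z})$, i.e. $\textnormal{argsort}(\mathbf{P}_{g'}\mathbf{Z})=\mathbf{P}_{g'}\mathbf{P}_g=\rho(g')\rho(g)$. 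Throughout I would keep the tie-free assumption explicit — it holds almost surely because the continuous noise $\boldsymbol{\epsilon}$ breaks ties among nodes — so that $\mathbf{P}_g$ is genuinely a permutation matrix and lies in the image of $\rho$. Combining the two factors gives $q_\omega(\rho_1(g')\mathbf{x},\rho'(g')\boldsymbol{\epsilon})=\rho(g')q_\omega(\mathbf{x},\boldsymbol{\epsilon})$, and Theorem~\ref{thm:distribution_equivariance} then yields the $\textnormal{S}_n$ equivariance of $p_\omega(g|\mathbf{x})$.

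The main obstacle — really the only non-routine point — is the index-chasing verification that $\textnormal{argsort}$ is equivariant in the precise matrix sense needed here (mapping $\mathbf{P}_{g'}\mathbf{Z}$ to $\mathbf{P}_{g'}\mathbf{P}_g$, left multiplication only, not a conjugation), together with making the tie-free caveat precise so the claimed output representation is well defined.
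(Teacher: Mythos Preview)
Your proposal is correct and follows essentially the same approach as the paper: invoke Theorem~\ref{thm:distribution_equivariance} by checking that the i.i.d.\ noise law is invariant under the row-permutation representation (which has unit determinant by orthogonality), that the GNN is equivariant by construction, and that $\textnormal{argsort}$ satisfies $\textnormal{argsort}(\mathbf{P}_{g'}\mathbf{Z})=\mathbf{P}_{g'}\mathbf{P}_g$. The only cosmetic difference is that the paper verifies the $\textnormal{argsort}$ equivariance via matrix identities (using that $\textnormal{sort}$ and row-replicated matrices $\boldsymbol{1}\textnormal{sort}(\mathbf{Z})^\top$ are permutation-invariant, and that $\textnormal{eq}$ commutes with left multiplication by $\mathbf{P}_{g'}$), whereas you do the equivalent entrywise index chase.
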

\begin{proof}
    Given $p(\boldsymbol{\epsilon})$ is elementwise i.i.d., it is $\textnormal{S}_n$ invariant under the base representation $\rho'(g) = \mathbf{P}_g$ which satisfies $|\det{\mathbf{P}_g}| = 1$ from orthogonality.
    As a GNN is $\textnormal{S}_n$ equivariant, we only need to show $\textnormal{S}_n$ equivariance of $\textnormal{argsort}:\mathbf{Z}\mapsto\mathbf{P}_g$.
    This can be shown by transforming $\mathbf{Z}$ with any permutation matrix $\mathbf{P}_{g'}$.
    Since $\textnormal{sort}$ operator and any row replicated matrices are invariant to $\mathbf{P}_{g'}$, we have:
    \begin{align}
        \textnormal{eq}(\mathbf{P}_{g'}\mathbf{Z}\boldsymbol{1}^\top, \boldsymbol{1}\textnormal{sort}(\mathbf{P}_{g'}\mathbf{Z})^\top)
        &= \textnormal{eq}(\mathbf{P}_{g'}\mathbf{Z}\boldsymbol{1}^\top, \boldsymbol{1}\textnormal{sort}(\mathbf{Z})^\top) \nonumber\\
        &= \textnormal{eq}(\mathbf{P}_{g'}\mathbf{Z}\boldsymbol{1}^\top, \mathbf{P}_{g'}\boldsymbol{1}\textnormal{sort}(\mathbf{Z})^\top).
    \end{align}
    Since $\textnormal{eq}$ commutes with $\mathbf{P}_{g'}$, we have:
    \begin{align}
        \textnormal{eq}(\mathbf{P}_{g'}\mathbf{Z}\boldsymbol{1}^\top, \boldsymbol{1}\textnormal{sort}(\mathbf{P}_{g'}\mathbf{Z})^\top) &= \textnormal{eq}(\mathbf{P}_{g'}\mathbf{Z}\boldsymbol{1}^\top, \mathbf{P}_{g'}\boldsymbol{1}\textnormal{sort}(\mathbf{Z})^\top) \nonumber\\
        &= \mathbf{P}_{g'}\textnormal{eq}(\mathbf{Z}\boldsymbol{1}^\top, \boldsymbol{1}\textnormal{sort}(\mathbf{Z})^\top) \nonumber\\
        &= \mathbf{P}_{g'}\mathbf{P}_g,
    \end{align}
    showing that $\textnormal{argsort}$ is $\textnormal{S}_n$ equivariant, \emph{i.e.}, it maps $\mathbf{P}_{g'}\mathbf{Z}\mapsto \mathbf{P}_{g'}\mathbf{P}_g$ for all $\mathbf{P}_{g'}\in\textnormal{S}_n$.
    Combining the above, by Theorem~\ref{thm:distribution_equivariance}, the distribution $p_\omega(g|\mathbf{x})$ is $\textnormal{S}_n$ equivariant.
\end{proof}

\paragraph{Orthogonal Group $\textnormal{O}(n)$, $\textnormal{SO}(n)$}
We recall that $p_\omega(g|\mathbf{x})$ for the orthogonal group $\textnormal{O}(n)$ or special orthogonal group $\textnormal{SO}(n)$ is implemented as follows:
\begin{enumerate}[label={\arabic{enumi}}.,ref={\arabic{enumi}},leftmargin=*]
    \item Sample noise $\boldsymbol{\epsilon}\in\mathbb{R}^{n\times d}$ from i.i.d. normal $\mathcal{N}(0, \eta^2)$.
    \item Use an $\textnormal{O}(n)$/$\textnormal{SO}(n)$ equivariant neural network to obtain $n$ features $(\mathbf{x},\boldsymbol{\epsilon})\mapsto\mathbf{Z}\in\mathbb{R}^{n\times n}$.
    \item Assuming $\mathbf{Z}$ is full-rank, use Gram-Schmidt process~\cite{kaba2022equivariance} to obtain an orthogonal matrix $\mathbf{Z}\mapsto\mathbf{Q}$.
    \item For the $\textnormal{O}(n)$ group, use the obtained matrix as group representation $\mathbf{Q} = \mathbf{Q}_g = \rho(g)$.
    \item For the $\textnormal{SO}(n)$ group, use below $\textnormal{scale}$ operator to obtain group representation $\mathbf{Q}\mapsto\mathbf{Q}_g^+=\rho(g)$.
    \begin{align}
        \textnormal{scale}:\left[\begin{array}{c|c|c}
        &&\\
        \mathbf{Q}_1 & ... & \mathbf{Q}_n
        \\&&
        \end{array}\right]\mapsto \left[\begin{array}{c|c|c}
        &&\\
        \textnormal{det}(\mathbf{Q})\cdot \mathbf{Q}_1 & ... & \mathbf{Q}_n
        \\&&
        \end{array}\right].
    \end{align}
\end{enumerate}
We now show the following:
\begin{proposition}\label{proposition:orthogonal_group_equivariance}
    The proposed distribution $p_\omega(g|\mathbf{x})$ for the orthogonal group $\textnormal{O}(n)$ is equivariant.
\end{proposition}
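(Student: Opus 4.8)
The plan is to invoke Theorem~\ref{thm:distribution_equivariance}, which reduces the statement to two conditions: (i) the noise distribution $p(\boldsymbol{\epsilon})$ is $\textnormal{O}(n)$ invariant under a representation $\rho'$ with $|\det\rho'(g)| = 1$ for all $g$, and (ii) the noise-outsourced map $q_\omega:(\mathbf{x},\boldsymbol{\epsilon})\mapsto\mathbf{Q}_g$ is $\textnormal{O}(n)$ equivariant. Verifying these two items and then quoting the theorem gives the result.

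For (i), I would take $\rho'(g)$ to act on $\boldsymbol{\epsilon}\in\mathbb{R}^{n\times d}$ by left multiplication $\boldsymbol{\epsilon}\mapsto\mathbf{Q}_g\boldsymbol{\epsilon}$. On the flattened space $\mathbb{R}^{nd}$ this is a Kronecker product of $\mathbf{Q}_g$ with an identity block, hence itself orthogonal, so $|\det\rho'(g)| = 1$. Since the entries of $\boldsymbol{\epsilon}$ are i.i.d.\ $\mathcal{N}(0,\eta^2)$, its density is proportional to $\exp(-\|\boldsymbol{\epsilon}\|_F^2/2\eta^2)$, and $\|\mathbf{Q}_g\boldsymbol{\epsilon}\|_F^2 = \operatorname{tr}(\boldsymbol{\epsilon}^\top\mathbf{Q}_g^\top\mathbf{Q}_g\boldsymbol{\epsilon}) = \|\boldsymbol{\epsilon}\|_F^2$, so $p(\rho'(g)\boldsymbol{\epsilon}) = p(\boldsymbol{\epsilon})$; thus $p$ is $\textnormal{O}(n)$ invariant under $\rho'$.

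For (ii), I would write $q_\omega = \textnormal{GS}\circ\Phi_\omega$, where $\Phi_\omega:(\mathbf{x},\boldsymbol{\epsilon})\mapsto\mathbf{Z}\in\mathbb{R}^{n\times n}$ is the chosen $\textnormal{O}(n)$ equivariant network (by construction $\Phi_\omega(\rho_1(g)\mathbf{x},\mathbf{Q}_g\boldsymbol{\epsilon}) = \mathbf{Q}_g\Phi_\omega(\mathbf{x},\boldsymbol{\epsilon})$, i.e.\ the $n$ output vectors, the columns of $\mathbf{Z}$, transform by left multiplication), and $\textnormal{GS}$ is Gram-Schmidt orthogonalization of the columns. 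It then remains to check $\textnormal{GS}(\mathbf{Q}_g\mathbf{Z}) = \mathbf{Q}_g\,\textnormal{GS}(\mathbf{Z})$ for full-rank $\mathbf{Z}$, which I would argue by induction on the columns: the first output column $\mathbf{z}_1/\|\mathbf{z}_1\|$ maps to $\mathbf{Q}_g\mathbf{z}_1/\|\mathbf{Q}_g\mathbf{z}_1\| = \mathbf{Q}_g(\mathbf{z}_1/\|\mathbf{z}_1\|)$ because $\mathbf{Q}_g$ preserves norms, and each later step subtracts projections built from inner products $\langle\mathbf{z}_i,\mathbf{q}_j\rangle$ — all preserved under $\boldsymbol{v}\mapsto\mathbf{Q}_g\boldsymbol{v}$ — followed by a norm normalization, so every output column picks up a left factor $\mathbf{Q}_g$. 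This is exactly the Gram-Schmidt equivariance established in~\cite{kaba2022equivariance}, which I would cite rather than reprove in full. Composing the two pieces gives $q_\omega(\rho_1(g)\mathbf{x},\rho'(g)\boldsymbol{\epsilon}) = \mathbf{Q}_g\,q_\omega(\mathbf{x},\boldsymbol{\epsilon})$ with $\rho(g) = \mathbf{Q}_g$, and with (i) and (ii) in hand Theorem~\ref{thm:distribution_equivariance} yields the $\textnormal{O}(n)$ equivariance of $p_\omega(g|\mathbf{x})$.

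The only nonroutine ingredient is the Gram-Schmidt equivariance, and the subtle point is that orthogonal maps commute with every operation in the recursion — inner products, norm normalizations, and linear combinations alike — so the left factor $\mathbf{Q}_g$ propagates cleanly; the full-rank hypothesis on $\mathbf{Z}$ is precisely what keeps all normalizations well defined. A slicker alternative I might use instead: Gram-Schmidt returns the orthogonal factor of the unique QR decomposition $\mathbf{Z} = \mathbf{Q}\mathbf{R}$ with $\mathbf{R}$ upper triangular and positive on the diagonal, and since $\mathbf{Q}_g\mathbf{Z} = (\mathbf{Q}_g\mathbf{Q})\mathbf{R}$ is again such a decomposition, uniqueness forces the orthogonal factor of $\mathbf{Q}_g\mathbf{Z}$ to be $\mathbf{Q}_g\mathbf{Q}$.
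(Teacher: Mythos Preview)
Your proof is correct and follows essentially the same route as the paper: verify $\textnormal{O}(n)$ invariance of the i.i.d.\ Gaussian noise under left multiplication by $\mathbf{Q}_g$ (with $|\det\rho'(g)|=1$ from orthogonality), note the network is $\textnormal{O}(n)$ equivariant by construction, cite~\cite{kaba2022equivariance} for Gram-Schmidt equivariance, and invoke Theorem~\ref{thm:distribution_equivariance}. Your additional inductive sketch and the QR-uniqueness alternative for Gram-Schmidt are nice elaborations beyond what the paper spells out, but the overall structure is the same.
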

\begin{proof}
    Without loss of generality, let us omit the scale $\eta$ for brevity, which gives that each column $\boldsymbol{\epsilon}_i\in\mathbb{R}^n$ of the noise $\boldsymbol{\epsilon}$ independently follows multivariate standard normal $\boldsymbol{\epsilon}_i\sim\mathcal{N}(0, \mathbf{I}_n)$.
    Then, the density $p(\boldsymbol{\epsilon}_i) = (2\pi)^{-n/2}\exp{(-\|\boldsymbol{\epsilon}_i\|_2^2/2)}$ is invariant under orthogonal transformation $\mathbf{Q}$ since $\|\mathbf{Q}\boldsymbol{\epsilon}_i\|_2^2 = (\mathbf{Q}\boldsymbol{\epsilon}_i)^\top\mathbf{Q}\boldsymbol{\epsilon}_i = \boldsymbol{\epsilon}_i^\top\mathbf{Q}^\top\mathbf{Q}\boldsymbol{\epsilon}_i = \boldsymbol{\epsilon}_i^\top\boldsymbol{\epsilon}_i = \|\boldsymbol{\epsilon}_i\|_2^2$.
    Therefore, the distribution $p(\boldsymbol{\epsilon})$ is invariant under the base representation $\rho'(g) = \mathbf{Q}_g$ which satisfies $|\det{\rho'(g)}|=1$ from orthogonality.
    As we use an equivariant neural network to obtain $\mathbf{Z}$, and Gram-Schmidt procedure $\mathbf{Z}\mapsto\mathbf{Q}_g$ is $\textnormal{O}(n)$ equivariant (Theorem 5 of \cite{kaba2022equivariance}), by Theorem~\ref{thm:distribution_equivariance}, the distribution $p_\omega(g|\mathbf{x})$ is $\textnormal{O}(n)$ equivariant.
\end{proof}
\begin{proposition}
    The proposed distribution $p_\omega(g|\mathbf{x})$ for special orthogonal group $\textnormal{SO}(n)$ is equivariant.
\end{proposition}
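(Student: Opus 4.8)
The plan is to invoke Theorem~\ref{thm:distribution_equivariance} with $G = \textnormal{SO}(n)$, mirroring the proof of Proposition~\ref{proposition:orthogonal_group_equivariance}. It then suffices to establish two things: that the noise law $p(\boldsymbol{\epsilon})$ is $\textnormal{SO}(n)$ invariant under a representation $\rho'$ with $|\det\rho'(g)| = 1$, and that the noise-outsourced map $q_\omega:(\mathbf{x},\boldsymbol{\epsilon})\mapsto\mathbf{Q}_g^+$ is $\textnormal{SO}(n)$ equivariant. The first is immediate: since $\textnormal{SO}(n)\subseteq\textnormal{O}(n)$, the argument from Proposition~\ref{proposition:orthogonal_group_equivariance} that i.i.d. Gaussian columns are invariant under any orthogonal transformation applies verbatim with $\rho'(g) = \mathbf{Q}_g^+$, and $|\det\mathbf{Q}_g^+| = 1$.

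For the second, I would write $q_\omega$ as a composition of three maps: the $\textnormal{SO}(n)$ equivariant network $(\mathbf{x},\boldsymbol{\epsilon})\mapsto\mathbf{Z}$, the Gram--Schmidt process $\mathbf{Z}\mapsto\mathbf{Q}$, and the $\textnormal{scale}$ operator $\mathbf{Q}\mapsto\mathbf{Q}_g^+$. The first is equivariant by assumption; the second is $\textnormal{O}(n)$ equivariant (Theorem 5 of \cite{kaba2022equivariance}), hence $\textnormal{SO}(n)$ equivariant by restriction. So the only remaining work is to show $\textnormal{scale}$ is $\textnormal{SO}(n)$ equivariant, i.e. $\textnormal{scale}(\mathbf{Q}_{g'}^+\mathbf{Q}) = \mathbf{Q}_{g'}^+\textnormal{scale}(\mathbf{Q})$ for every $\mathbf{Q}_{g'}^+\in\textnormal{SO}(n)$ and orthogonal $\mathbf{Q}$. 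This is a one-line determinant computation: left-multiplying by $\mathbf{Q}_{g'}^+$ sends columns $\mathbf{Q}_i\mapsto\mathbf{Q}_{g'}^+\mathbf{Q}_i$ and leaves the determinant unchanged, $\det(\mathbf{Q}_{g'}^+\mathbf{Q}) = \det(\mathbf{Q}_{g'}^+)\det(\mathbf{Q}) = \det(\mathbf{Q})$, so $\textnormal{scale}$ rescales the first column by the same scalar $\det(\mathbf{Q})$ before and after the group action and therefore commutes with it. I would also record the sanity check that the output is a valid $\textnormal{SO}(n)$ representation: scaling one column of an orthogonal matrix by $\det(\mathbf{Q})\in\{+1,-1\}$ preserves orthonormality of the columns and changes the determinant to $\det(\mathbf{Q})^2 = 1$.

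Composing the three equivariant maps yields an $\textnormal{SO}(n)$ equivariant $q_\omega$ with faithful output representation $\rho(g) = \mathbf{Q}_g^+$, and together with the invariance of $p(\boldsymbol{\epsilon})$, Theorem~\ref{thm:distribution_equivariance} gives the $\textnormal{SO}(n)$ equivariance of $p_\omega(g|\mathbf{x})$. I do not expect any genuine obstacle; the only points requiring mild care are to track that the relevant output representation is $\mathbf{Q}_g^+$ rather than $\mathbf{Q}_g$, and to note explicitly that the $\textnormal{O}(n)$-level facts (noise invariance, Gram--Schmidt equivariance) transfer to $\textnormal{SO}(n)$ via the subgroup inclusion.
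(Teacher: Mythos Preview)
The proposal is correct and follows essentially the same approach as the paper: both invoke Theorem~\ref{thm:distribution_equivariance}, carry over the noise invariance and Gram--Schmidt equivariance from the $\textnormal{O}(n)$ case via the subgroup inclusion $\textnormal{SO}(n)\leq\textnormal{O}(n)$, and then verify $\textnormal{SO}(n)$ equivariance of the $\textnormal{scale}$ operator via the determinant identity $\det(\mathbf{Q}_{g'}^+\mathbf{Q}) = \det(\mathbf{Q})$ together with the check that the output lands in $\textnormal{SO}(n)$.
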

\begin{proof}
    From the proof of Proposition~\ref{proposition:orthogonal_group_equivariance}, it follows that the distribution $p(\boldsymbol{\epsilon})$ is invariant under the base representation $\rho'(g) = \mathbf{Q}_g^+$ which satisfies $|\det{\rho'(g)}|=1$ due to orthogonality.
    As we use an equivariant neural network to obtain $\mathbf{Z}$, and Gram-Schmidt procedure $\mathbf{Z}\mapsto\mathbf{Q}$ has $\textnormal{O}(n)$ equivariance which implies $\textnormal{SO}(n)$ equivariance because of $\textnormal{SO}(n)\leq\textnormal{O}(n)$, we only need to show $\textnormal{SO}(n)$ equivariance of $\textnormal{scale}:\mathbf{Q}\mapsto\mathbf{Q}_g^+$.
    This can be done by transforming $\mathbf{Q}$ with an orthogonal $\mathbf{Q}_{g'}^+$ of determinant $+1$.
    Since $\textnormal{det}(\mathbf{Q}_{g'}^+\mathbf{Q}) = \textnormal{det}(\mathbf{Q}_{g'}^+) \textnormal{det}(\mathbf{Q}) = \textnormal{det}(\mathbf{Q})$, we have the following:
    \begin{align}
        \textnormal{scale}(\mathbf{Q}_{g'}^+\mathbf{Q}) &=
        \left[\begin{array}{c|c|c}
        &&\\
        \textnormal{det}(\mathbf{Q}_{g'}^+\mathbf{Q})\cdot (\mathbf{Q}_{g'}^+\mathbf{Q})_1 & ... & (\mathbf{Q}_{g'}^+\mathbf{Q})_n
        \\&&
        \end{array}\right]\nonumber\\
        &=
        \left[\begin{array}{c|c|c}
        &&\\
        \textnormal{det}(\mathbf{Q})\cdot (\mathbf{Q}_{g'}^+\mathbf{Q})_1 & ... & (\mathbf{Q}_{g'}^+\mathbf{Q})_n
        \\&&
        \end{array}\right].
    \end{align}
    Also, scaling the first column of the product $\mathbf{Q}_{g'}^+\mathbf{Q}$ with $\textnormal{det}(\mathbf{Q})$ is equivalent to scaling the first column of $\mathbf{Q}$ with $\textnormal{det}(\mathbf{Q})$ then computing the product since $(\mathbf{Q}_{g'}^+\mathbf{Q})_{ij}=\sum_k\mathbf{Q}_{g'ik}^+\mathbf{Q}_{kj}$.
    This gives:
    \begin{align}
        \textnormal{scale}(\mathbf{Q}_{g'}^+\mathbf{Q}) &=
        \mathbf{Q}_{g'}^+
        \left[\begin{array}{c|c|c}
        &&\\
        \textnormal{det}(\mathbf{Q})\cdot \mathbf{Q}_1 & ... & \mathbf{Q}_n
        \\&&
        \end{array}\right]\nonumber\\
        &= \mathbf{Q}_{g'}^+\textnormal{scale}(\mathbf{Q}),
    \end{align}
    showing that $\textnormal{scale}$ operator is $\textnormal{SO}(n)$ equivariant.
    We also note that $\textnormal{scale}(\mathbf{Q})$ gives orthogonal matrix of determinant $+1$, as it returns $\mathbf{Q}$ if $\textnormal{det}(\mathbf{Q})=+1$, otherwise ($\textnormal{det}(\mathbf{Q})=-1$ since $\mathbf{Q}$ is orthogonal) scales the first column by $-1$ which flips determinant to $+1$ while not affecting orthogonality.
    Combining the above, by Theorem~\ref{thm:distribution_equivariance}, the distribution $p_\omega(g|\mathbf{x})$ is $\textnormal{SO}(n)$ equivariant.
\end{proof}

\paragraph{Euclidean Group $\textnormal{E}(n)$, $\textnormal{SE}(n)$}
We recall that, unlike the other groups, we handle the Euclidean group $\textnormal{E}(n)$ and special Euclidean group $\textnormal{SE}(n)$ at symmetrization level as the translation component $\textnormal{T}(n)$ in $\textnormal{E}(n) = \textnormal{O}(n)\ltimes\textnormal{T}(n)$ and $\textnormal{SE}(n) = \textnormal{SO}(n)\ltimes\textnormal{T}(n)$ is non-compact.
This is done as follows:
\begin{align}\label{eq:probabilistic_symmetrization_euclidean}
    \phi_{\theta,\omega}(\mathbf{x}) = \mathbb{E}_{p_\omega(g|\mathbf{x}-\bar{\mathbf{x}}\boldsymbol{1}^\top)}\left[\bar{\mathbf{x}}\boldsymbol{1}^\top + g\cdot f_\theta(g^{-1}\cdot(\mathbf{x}-\bar{\mathbf{x}}\boldsymbol{1}^\top))\right],
\end{align}
where $\bar{\mathbf{x}}\in\mathbb{R}^n$ is centroid (mean over channels) of data $\mathbf{x}\in\mathbb{R}^{n\times d}$ and distribution $p_\omega$ is $\textnormal{O}(n)$/$\textnormal{SO}(n)$ equivariant for $\textnormal{E}(n)$/$\textnormal{SE}(n)$ equivariant symmetrization, respectively.
We now show the following:
\begin{proposition}\label{proposition:euclidean_group_equivariance}
    The proposed symmetrization $\phi_{\theta,\omega}$ for the Euclidean group $\textnormal{E}(n)$ is equivariant.
\end{proposition}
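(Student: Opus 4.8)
The plan is to reduce $\textnormal{E}(n)$ equivariance of $\phi_{\theta,\omega}$ in Eq.~\eqref{eq:probabilistic_symmetrization_euclidean} to the already-established equivariance of probabilistic symmetrization over the compact group $\textnormal{O}(n)$ (Theorem~\ref{thm:probabilistic_symmetrization_equivariance}), with the non-compact translation part handled by the explicit centering. Write a generic element of $\textnormal{E}(n) = \textnormal{O}(n)\ltimes\textnormal{T}(n)$ as a pair $(\mathbf{Q},\mathbf{t})$ with $\mathbf{Q}\in\textnormal{O}(n)$ and $\mathbf{t}\in\mathbb{R}^n$, acting on data $\mathbf{x}\in\mathbb{R}^{n\times d}$ by $\mathbf{x}\mapsto\mathbf{Q}\mathbf{x}+\mathbf{t}\boldsymbol{1}^\top$ (and on the output by the same representation; the invariant case is the special case of trivial output action). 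The goal is then to verify $\phi_{\theta,\omega}(\mathbf{Q}\mathbf{x}+\mathbf{t}\boldsymbol{1}^\top)=\mathbf{Q}\,\phi_{\theta,\omega}(\mathbf{x})+\mathbf{t}\boldsymbol{1}^\top$ for all such $(\mathbf{Q},\mathbf{t})$ and arbitrary $f_\theta$.

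Step one is a short computation that centering is translation-invariant and $\textnormal{O}(n)$-covariant: since the centroid $\bar{\mathbf{x}}$ is the mean over channels, linearity gives $\overline{\mathbf{Q}\mathbf{x}+\mathbf{t}\boldsymbol{1}^\top}=\mathbf{Q}\bar{\mathbf{x}}+\mathbf{t}$, hence the centered transformed input is $(\mathbf{Q}\mathbf{x}+\mathbf{t}\boldsymbol{1}^\top)-\overline{\mathbf{Q}\mathbf{x}+\mathbf{t}\boldsymbol{1}^\top}\boldsymbol{1}^\top=\mathbf{Q}(\mathbf{x}-\bar{\mathbf{x}}\boldsymbol{1}^\top)=\mathbf{Q}\mathbf{x}_c$, writing $\mathbf{x}_c:=\mathbf{x}-\bar{\mathbf{x}}\boldsymbol{1}^\top$. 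Thus the translation $\mathbf{t}$ is annihilated on the centered data, and only the orthogonal factor $\mathbf{Q}$ enters the argument of $p_\omega$ and $f_\theta$ inside the expectation.

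Step two identifies the inner object $\tilde\phi(\mathbf{u}):=\mathbb{E}_{p_\omega(g|\mathbf{u})}[\,g\cdot f_\theta(g^{-1}\cdot\mathbf{u})\,]$ as precisely a probabilistic symmetrization over the compact group $\textnormal{O}(n)$; since $p_\omega$ is $\textnormal{O}(n)$ equivariant by construction (Proposition~\ref{proposition:orthogonal_group_equivariance}), Theorem~\ref{thm:probabilistic_symmetrization_equivariance} yields $\tilde\phi(\mathbf{Q}\mathbf{u})=\mathbf{Q}\tilde\phi(\mathbf{u})$. Substituting $\mathbf{u}=\mathbf{x}_c$ into Eq.~\eqref{eq:probabilistic_symmetrization_euclidean} and applying steps one and two,
\[
\phi_{\theta,\omega}(\mathbf{Q}\mathbf{x}+\mathbf{t}\boldsymbol{1}^\top)=(\mathbf{Q}\bar{\mathbf{x}}+\mathbf{t})\boldsymbol{1}^\top+\tilde\phi(\mathbf{Q}\mathbf{x}_c)=(\mathbf{Q}\bar{\mathbf{x}}+\mathbf{t})\boldsymbol{1}^\top+\mathbf{Q}\tilde\phi(\mathbf{x}_c),
\]
whereas the target right-hand side expands as $\mathbf{Q}\bigl(\bar{\mathbf{x}}\boldsymbol{1}^\top+\tilde\phi(\mathbf{x}_c)\bigr)+\mathbf{t}\boldsymbol{1}^\top$; the two coincide because $\mathbf{Q}(\bar{\mathbf{x}}\boldsymbol{1}^\top)+\mathbf{t}\boldsymbol{1}^\top=(\mathbf{Q}\bar{\mathbf{x}}+\mathbf{t})\boldsymbol{1}^\top$, which finishes the argument. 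For the invariant variant one drops the $\bar{\mathbf{x}}\boldsymbol{1}^\top$ term and uses that $\tilde\phi$ is $\textnormal{O}(n)$ invariant instead, so the same reduction applies verbatim.

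I do not expect a genuine obstacle here; the one point requiring care is the bookkeeping with the rank-one terms $\bar{\mathbf{x}}\boldsymbol{1}^\top$ and $\mathbf{t}\boldsymbol{1}^\top$ together with being explicit about the semidirect-product convention, so that ``peel off the translation by centering, invoke Theorem~\ref{thm:probabilistic_symmetrization_equivariance} for the orthogonal part, then add the transformed centroid back'' actually reassembles the full $\textnormal{E}(n)$ action on both input and output consistently.
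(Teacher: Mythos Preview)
Your proposal is correct and follows essentially the same approach as the paper: center to strip off the translation, handle the remaining $\textnormal{O}(n)$ part via equivariance of the inner symmetrization, then add the transformed centroid back. The only difference is stylistic: you invoke Theorem~\ref{thm:probabilistic_symmetrization_equivariance} as a black box for the inner map $\tilde\phi$, whereas the paper re-runs the change-of-variables $h=g'^{-1}g$ argument inline; your version is slightly more modular but not a genuinely different route.
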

\begin{proof}
    We prove $\phi_{\theta,\omega}(g'\cdot\mathbf{x})=g'\cdot\phi_{\theta,\omega}(\mathbf{x})$ for all $\mathbf{x}\in\mathcal{X}$ and $g'\in\textnormal{E}(n)$.
    From Eq.~\eqref{eq:probabilistic_symmetrization_euclidean}, we have:
    \begin{align}
        \phi_{\theta,\omega}(g'\cdot \mathbf{x}) &= \mathbb{E}_{p_\omega(g|g'\cdot\mathbf{x}-\overline{g'\cdot\mathbf{x}}\boldsymbol{1}^\top)}\left[\ \overline{g'\cdot\mathbf{x}}\boldsymbol{1}^\top + g\cdot f_\theta(g^{-1}\cdot(g'\cdot\mathbf{x}-\overline{g'\cdot\mathbf{x}}\boldsymbol{1}^\top))\right].
    \end{align}
    In general, an element of Euclidean group $g'\in\textnormal{E}(n)$ acts on data $\mathbf{x}\in\mathbb{R}^{n\times d}$ via $g'\cdot\mathbf{x} = \mathbf{Q}_{g'}\mathbf{x} + \mathbf{t}_{g'}\boldsymbol{1}^\top$ where $\mathbf{Q}_{g'}\in\textnormal{O}(n)$ is its rotation component and $\mathbf{t}_{g'}\in\mathbb{R}^n$ is its translation component~\cite{puny2022frame, kaba2022equivariance}.
    With this, the centroid of the transformed data $g'\cdot\mathbf{x}$ is given as follows:
    \begin{align}
        \overline{g'\cdot\mathbf{x}} = \overline{\mathbf{Q}_{g'}\mathbf{x} + \mathbf{t}_{g'}\boldsymbol{1}^\top} = \overline{\mathbf{Q}_{g'}\mathbf{x}} + \mathbf{t}_{g'} = \mathbf{Q}_{g'}\overline{\mathbf{x}} + \mathbf{t}_{g'},
    \end{align}
    which leads to the following:
    \begin{align}
        g'\cdot\mathbf{x}-\overline{g'\cdot\mathbf{x}}\boldsymbol{1}^\top &= \mathbf{Q}_{g'}\mathbf{x} + \mathbf{t}_{g'}\boldsymbol{1}^\top - \mathbf{Q}_{g'}\overline{\mathbf{x}}\boldsymbol{1}^\top - \mathbf{t}_{g'}\boldsymbol{1}^\top\nonumber\\
        &= \mathbf{Q}_{g'}(\mathbf{x} - \overline{\mathbf{x}}\boldsymbol{1}^\top).
    \end{align}
    Above shows that subtracting centroid eliminates the translation component of the problem and leaves $\textnormal{O}(n)$ equivariance component.
    Based on that, we have the following:
    \begin{align}
        \phi_{\theta,\omega}(g'\cdot \mathbf{x}) &= \mathbb{E}_{p_\omega(g| 
        \mathbf{Q}_{g'}(\mathbf{x} - \overline{\mathbf{x}}\boldsymbol{1}^\top)
        )}\left[
        \mathbf{Q}_{g'}\bar{\mathbf{x}}\boldsymbol{1}^\top + \mathbf{t}_{g'}\boldsymbol{1}^\top
        + g\cdot f_\theta(g^{-1}\cdot(
        \mathbf{Q}_{g'}(\mathbf{x} - \overline{\mathbf{x}}\boldsymbol{1}^\top)
        ))\right]\nonumber\\
        &= \mathbb{E}_{p_\omega(g|g'\cdot(\mathbf{x} - \overline{\mathbf{x}}\boldsymbol{1}^\top)
        )}\left[
        g'\cdot\bar{\mathbf{x}}\boldsymbol{1}^\top
        + g\cdot f_\theta(g^{-1}g'\cdot(\mathbf{x} - \overline{\mathbf{x}}\boldsymbol{1}^\top
        ))\right] + \mathbf{t}_{g'}\boldsymbol{1}^\top.
    \end{align}
    Note that, inside the expectation, we interpret the rotation component of $g'$ as an element of the orthogonal group $\textnormal{O}(n)$.
    Similar as in the proof of Theorem~\ref{thm:probabilistic_symmetrization_equivariance}, we introduce transformed random variable $h=g'^{-1}g\in \textnormal{O}(n)$ that $g = g'h$.
    Since the distribution $p_\omega$ is $\textnormal{O}(n)$ equivariant, we can see that $p_\omega(g|g'\cdot(\mathbf{x} - \overline{\mathbf{x}}\boldsymbol{1}^\top)) = p_\omega(g'^{-1}g|g'^{-1}g'\cdot(\mathbf{x} - \overline{\mathbf{x}}\boldsymbol{1}^\top)) = p_\omega(g'^{-1}g|\mathbf{x} - \overline{\mathbf{x}}\boldsymbol{1}^\top)=p_\omega(h|\mathbf{x} - \overline{\mathbf{x}}\boldsymbol{1}^\top)$.
    Thus we can rewrite the above expectation with respect to $h$ as follows:
    \begin{align}
        \phi_{\theta,\omega}(g'\cdot\mathbf{x}) &= \mathbb{E}_{p_\omega(h|\mathbf{x} - \overline{\mathbf{x}}\boldsymbol{1}^\top)}\left[
        g'\cdot\bar{\mathbf{x}}\boldsymbol{1}^\top
        + g'h\cdot f_\theta((g'h)^{-1}g'\cdot(\mathbf{x} - \overline{\mathbf{x}}\boldsymbol{1}^\top))\right] + \mathbf{t}_{g'}\boldsymbol{1}^\top\nonumber\\
        &= \mathbb{E}_{p_\omega(h|\mathbf{x} - \overline{\mathbf{x}}\boldsymbol{1}^\top)}\left[
        g'\cdot\bar{\mathbf{x}}\boldsymbol{1}^\top
        + g'h\cdot f_\theta(h^{-1}\cdot(\mathbf{x} - \overline{\mathbf{x}}\boldsymbol{1}^\top))\right] + \mathbf{t}_{g'}\boldsymbol{1}^\top\nonumber\\
        &= \mathbf{Q}_{g'}\mathbb{E}_{p_\omega(h|\mathbf{x} - \overline{\mathbf{x}}\boldsymbol{1}^\top)}\left[
        \bar{\mathbf{x}}\boldsymbol{1}^\top
        + h\cdot f_\theta(h^{-1}\cdot(\mathbf{x} - \overline{\mathbf{x}}\boldsymbol{1}^\top))\right] + \mathbf{t}_{g'}\boldsymbol{1}^\top \nonumber\\
        &= \mathbf{Q}_{g'}\phi_{\theta,\omega}(\mathbf{x}) + \mathbf{t}_{g'}\boldsymbol{1}^\top\nonumber\\
        &= g'\cdot\phi_{\theta,\omega}(\mathbf{x}),
    \end{align}
    showing the $\textnormal{E}(n)$ equivariance of $\phi_{\theta,\omega}$.
\end{proof}
\begin{proposition}
    The proposed symmetrization $\phi_{\theta,\omega}$ for special Euclidean group $\textnormal{SE}(n)$ is equivariant.
\end{proposition}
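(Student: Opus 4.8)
The plan is to mirror the proof of Proposition~\ref{proposition:euclidean_group_equivariance} almost verbatim, substituting the orthogonal group with the special orthogonal group throughout. First I would recall that any $g'\in\textnormal{SE}(n)$ decomposes through the semidirect product $\textnormal{SE}(n) = \textnormal{SO}(n)\ltimes\textnormal{T}(n)$, so it acts on data $\mathbf{x}\in\mathbb{R}^{n\times d}$ by $g'\cdot\mathbf{x} = \mathbf{Q}_{g'}^+\mathbf{x} + \mathbf{t}_{g'}\boldsymbol{1}^\top$ with rotation component $\mathbf{Q}_{g'}^+\in\textnormal{SO}(n)$ and translation component $\mathbf{t}_{g'}\in\mathbb{R}^n$. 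The goal is to establish $\phi_{\theta,\omega}(g'\cdot\mathbf{x}) = g'\cdot\phi_{\theta,\omega}(\mathbf{x})$ for all $\mathbf{x}\in\mathcal{X}$ and $g'\in\textnormal{SE}(n)$, starting from the definition in Eq.~\eqref{eq:probabilistic_symmetrization_euclidean}, which in the $\textnormal{SE}(n)$ case uses an $\textnormal{SO}(n)$ equivariant $p_\omega$.

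Second, I would carry out the centroid computation exactly as in the $\textnormal{E}(n)$ case: $\overline{g'\cdot\mathbf{x}} = \mathbf{Q}_{g'}^+\overline{\mathbf{x}} + \mathbf{t}_{g'}$, hence $g'\cdot\mathbf{x} - \overline{g'\cdot\mathbf{x}}\boldsymbol{1}^\top = \mathbf{Q}_{g'}^+(\mathbf{x} - \overline{\mathbf{x}}\boldsymbol{1}^\top)$, so centering strips off the translation component and leaves a pure $\textnormal{SO}(n)$ action. Then, interpreting $\mathbf{Q}_{g'}^+$ as an element of $\textnormal{SO}(n)$, I would introduce the transformed random variable $h = g'^{-1}g\in\textnormal{SO}(n)$ with $g = g'h$, and invoke the $\textnormal{SO}(n)$ equivariance of $p_\omega$ (established in the preceding proposition) to rewrite $p_\omega(g\mid g'\cdot(\mathbf{x}-\overline{\mathbf{x}}\boldsymbol{1}^\top)) = p_\omega(h\mid \mathbf{x}-\overline{\mathbf{x}}\boldsymbol{1}^\top)$. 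Substituting $g = g'h$ and simplifying $(g'h)^{-1}g' = h^{-1}$ inside the expectation, the factor $\mathbf{Q}_{g'}^+$ pulls out of the whole expression while the $\mathbf{t}_{g'}\boldsymbol{1}^\top$ term reassembles, yielding $\phi_{\theta,\omega}(g'\cdot\mathbf{x}) = \mathbf{Q}_{g'}^+\phi_{\theta,\omega}(\mathbf{x}) + \mathbf{t}_{g'}\boldsymbol{1}^\top = g'\cdot\phi_{\theta,\omega}(\mathbf{x})$.

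There is essentially no genuine obstacle here; the only point requiring a line of care is that the change of variables $h = g'^{-1}g$ must stay inside the rotation subgroup and that the equivariance invoked for $p_\omega$ must be the $\textnormal{SO}(n)$ version rather than the $\textnormal{O}(n)$ version — which is precisely why the preceding proposition proving $\textnormal{SO}(n)$ equivariance of $p_\omega$ (through the $\textnormal{SO}(n)$ equivariance of the $\textnormal{scale}$ operator) is the needed input. Since $\textnormal{SO}(n)\leq\textnormal{O}(n)$ and the argument never uses reflections, every remaining step is identical to Proposition~\ref{proposition:euclidean_group_equivariance}, so I would simply state that the proof proceeds \emph{mutatis mutandis}, replacing $\mathbf{Q}_{g'}$ by $\mathbf{Q}_{g'}^+$ and $\textnormal{O}(n)$ by $\textnormal{SO}(n)$ throughout.
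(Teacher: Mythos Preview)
Your proposal is correct and follows essentially the same approach as the paper: the paper's proof explicitly states it proceeds ``in an analogous manner to the proof of Proposition~\ref{proposition:euclidean_group_equivariance}'', carries out the identical centroid computation with $\mathbf{Q}_{g'}^+$ in place of $\mathbf{Q}_{g'}$, introduces $h=g'^{-1}g\in\textnormal{SO}(n)$, invokes the $\textnormal{SO}(n)$ equivariance of $p_\omega$, and reassembles to obtain $\phi_{\theta,\omega}(g'\cdot\mathbf{x}) = \mathbf{Q}_{g'}^+\phi_{\theta,\omega}(\mathbf{x}) + \mathbf{t}_{g'}\boldsymbol{1}^\top = g'\cdot\phi_{\theta,\omega}(\mathbf{x})$.
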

\begin{proof}
    We prove $\phi_{\theta,\omega}(g'\cdot\mathbf{x})=g'\cdot\phi_{\theta,\omega}(\mathbf{x})$ for all $\mathbf{x}\in\mathcal{X}$ and $g'\in\textnormal{SE}(n)$, in an analogous manner to the proof of Proposition~\ref{proposition:euclidean_group_equivariance}.
    From Eq.~\eqref{eq:probabilistic_symmetrization_euclidean}, we have:
    \begin{align}
        \phi_{\theta,\omega}(g'\cdot \mathbf{x}) &= \mathbb{E}_{p_\omega(g|g'\cdot\mathbf{x}-\overline{g'\cdot\mathbf{x}}\boldsymbol{1}^\top)}\left[\ \overline{g'\cdot\mathbf{x}}\boldsymbol{1}^\top + g\cdot f_\theta(g^{-1}\cdot(g'\cdot\mathbf{x}-\overline{g'\cdot\mathbf{x}}\boldsymbol{1}^\top))\right].
    \end{align}
    In general, an element of special Euclidean group $g'\in\textnormal{SE}(n)$ acts on data $\mathbf{x}\in\mathbb{R}^{n\times d}$ via $g'\cdot\mathbf{x} = \mathbf{Q}_{g'}^+\mathbf{x} + \mathbf{t}_{g'}\boldsymbol{1}^\top$ where $\mathbf{Q}_{g'}^+\in\textnormal{SO}(n)$ is rotation component and $\mathbf{t}_{g'}\in\mathbb{R}^n$ is translation~\cite{puny2022frame, kaba2022equivariance}.
    With this, the centroid of the transformed data $g'\cdot\mathbf{x}$ is given as follows:
    \begin{align}
        \overline{g'\cdot\mathbf{x}} = \overline{\mathbf{Q}_{g'}^+\mathbf{x} + \mathbf{t}_{g'}\boldsymbol{1}^\top} = \overline{\mathbf{Q}_{g'}^+\mathbf{x}} + \mathbf{t}_{g'} = \mathbf{Q}_{g'}^+\overline{\mathbf{x}} + \mathbf{t}_{g'},
    \end{align}
    which leads to the following:
    \begin{align}
        g'\cdot\mathbf{x}-\overline{g'\cdot\mathbf{x}}\boldsymbol{1}^\top &= \mathbf{Q}_{g'}^+\mathbf{x} + \mathbf{t}_{g'}\boldsymbol{1}^\top - \mathbf{Q}_{g'}^+\overline{\mathbf{x}}\boldsymbol{1}^\top - \mathbf{t}_{g'}\boldsymbol{1}^\top\nonumber\\
        &= \mathbf{Q}_{g'}^+(\mathbf{x} - \overline{\mathbf{x}}\boldsymbol{1}^\top).
    \end{align}
    Similar as in Proposition~\ref{proposition:euclidean_group_equivariance}, subtracting centroid only leaves $\textnormal{SO}(n)$ component.
    We then have:
    \begin{align}
        \phi_{\theta,\omega}(g'\cdot \mathbf{x}) &= \mathbb{E}_{p_\omega(g| 
        \mathbf{Q}_{g'}^+(\mathbf{x} - \overline{\mathbf{x}}\boldsymbol{1}^\top)
        )}\left[
        \mathbf{Q}_{g'}^+\bar{\mathbf{x}}\boldsymbol{1}^\top + \mathbf{t}_{g'}\boldsymbol{1}^\top
        + g\cdot f_\theta(g^{-1}\cdot(
        \mathbf{Q}_{g'}^+(\mathbf{x} - \overline{\mathbf{x}}\boldsymbol{1}^\top)
        ))\right]\nonumber\\
        &= \mathbb{E}_{p_\omega(g|g'\cdot(\mathbf{x} - \overline{\mathbf{x}}\boldsymbol{1}^\top)
        )}\left[
        g'\cdot\bar{\mathbf{x}}\boldsymbol{1}^\top
        + g\cdot f_\theta(g^{-1}g'\cdot(\mathbf{x} - \overline{\mathbf{x}}\boldsymbol{1}^\top
        ))\right] + \mathbf{t}_{g'}\boldsymbol{1}^\top,
    \end{align}
    where, inside the expectation, we interpret the rotation component of $g'$ as an element of the special orthogonal group $\textnormal{SO}(n)$.
    Similar as in Theorem~\ref{thm:probabilistic_symmetrization_equivariance}, we introduce $h=g'^{-1}g\in \textnormal{SO}(n)$ that $g = g'h$.
    As the distribution $p_\omega$ is $\textnormal{SO}(n)$ equivariant, we have $p_\omega(g|g'\cdot(\mathbf{x} - \overline{\mathbf{x}}\boldsymbol{1}^\top)) = p_\omega(g'^{-1}g|g'^{-1}g'\cdot(\mathbf{x} - \overline{\mathbf{x}}\boldsymbol{1}^\top)) = p_\omega(h|\mathbf{x} - \overline{\mathbf{x}}\boldsymbol{1}^\top)$.
    We then rewrite the expectation with respect to $h$:
    \begin{align}
        \phi_{\theta,\omega}(g'\cdot\mathbf{x}) &= \mathbb{E}_{p_\omega(h|\mathbf{x} - \overline{\mathbf{x}}\boldsymbol{1}^\top)}\left[
        g'\cdot\bar{\mathbf{x}}\boldsymbol{1}^\top
        + g'h\cdot f_\theta((g'h)^{-1}g'\cdot(\mathbf{x} - \overline{\mathbf{x}}\boldsymbol{1}^\top))\right] + \mathbf{t}_{g'}\boldsymbol{1}^\top\nonumber\\
        &= \mathbb{E}_{p_\omega(h|\mathbf{x} - \overline{\mathbf{x}}\boldsymbol{1}^\top)}\left[
        g'\cdot\bar{\mathbf{x}}\boldsymbol{1}^\top
        + g'h\cdot f_\theta(h^{-1}\cdot(\mathbf{x} - \overline{\mathbf{x}}\boldsymbol{1}^\top))\right] + \mathbf{t}_{g'}\boldsymbol{1}^\top\nonumber\\
        &= \mathbf{Q}_{g'}^+\mathbb{E}_{p_\omega(h|\mathbf{x} - \overline{\mathbf{x}}\boldsymbol{1}^\top)}\left[
        \bar{\mathbf{x}}\boldsymbol{1}^\top
        + h\cdot f_\theta(h^{-1}\cdot(\mathbf{x} - \overline{\mathbf{x}}\boldsymbol{1}^\top))\right] + \mathbf{t}_{g'}\boldsymbol{1}^\top \nonumber\\
        &= \mathbf{Q}_{g'}^+\phi_{\theta,\omega}(\mathbf{x}) + \mathbf{t}_{g'}\boldsymbol{1}^\top\nonumber\\
        &= g'\cdot\phi_{\theta,\omega}(\mathbf{x}),
    \end{align}
    showing the $\textnormal{SE}(n)$ equivariance of $\phi_{\theta,\omega}$.
\end{proof}

\paragraph{Product Group $H\times K$}
For the product group $H\times K$, we assume that the base representation for each element $g = (h,k)$ is given as a pair of representations $\rho(g) = (\rho(h),\rho(k))$.
Without loss of generality, we further assume that the representation $\rho(g)$ can be expressed as the Kronecker product $\rho(g) = \rho(h)\otimes\rho(k)$ that acts on flattened data $\textnormal{vec}(\mathbf{x})$ as $\mathbf{x}\mapsto \textnormal{vec}^{-1}(\rho(g)\textnormal{vec}(\mathbf{x}))$.
This follows the standard approach in equivariant deep learning~\cite{finzi2021a, maron2019invariant} that deals with composite representations using direct sum and tensor products of base group representations.

Above approach applies to many practical product groups, including sets and graphs with Euclidean attributes ($\textnormal{S}_n\times\textnormal{O}(d)/\textnormal{SO}(d)$\footnote{This is after handling the translation component of the Euclidean group $\textnormal{E}(d)/\textnormal{SE}(d)$ as in Eq.~\eqref{eq:probabilistic_symmetrization_euclidean}.}) and sets of symmetric elements ($\textnormal{S}_n\times H$) in general~\cite{maron2020onlearning}.
For example, for the group $\textnormal{S}_n\times\textnormal{O}(d)$ on data $\mathbf{x}\in\mathbb{R}^{n\times d}$, an element $g = (h, k)$ has representation $\rho(g) = \rho(h)\otimes\rho(k)\in\mathbb{R}^{nd\times nd}$ combined from permutation $\rho(h)\in\mathbb{R}^{n\times n}$ and rotation $\rho(k)\in\mathbb{R}^{d\times d}$, which acts by $\mathbf{x}\mapsto \textnormal{vec}^{-1}(\rho(g)\textnormal{vec}(\mathbf{x}))$ or more simply $\mathbf{x}\mapsto\rho(h)\mathbf{x}\rho(k)^\top$.

Now we recall that the $p_\omega(g|\mathbf{x})$ for the product group $H\times K$ is implemented as follows:
\begin{enumerate}[label={\arabic{enumi}}.,ref={\arabic{enumi}},leftmargin=*]
    \item Sample noise $\boldsymbol{\epsilon}\in\mathcal{E}$ from i.i.d. normal $\mathcal{N}(0,\eta^2)$ such that $p(\boldsymbol{\epsilon})$ is invariant under representations of $H$ and $K$ that satisfy $|\det{\rho'(h)}|=1$ and $|\det{\rho'(k)}|=1$, respectively.
    For example, for $\textnormal{S}_n\times\textnormal{O}(d)$, the noise $\boldsymbol{\epsilon}\in\mathbb{R}^{n\times d}$ that follows i.i.d. normal $\mathcal{N}(0,\eta^2)$ is invariant under base representations of both $\textnormal{S}_n$ and $\textnormal{O}(d)$ which are orthogonal.
    \item Use a $H\times K$ equivariant neural network to obtain features $(\mathbf{x},\boldsymbol{\epsilon})\mapsto(\mathbf{Z}_H,\mathbf{Z}_K)$ where $\mathbf{Z}_H$ is $K$ invariant and $\mathbf{Z}_K$ is $H$ invariant.
    For example, for $\textnormal{S}_n\times\textnormal{O}(d)$, we expect node-level scalar features $\mathbf{Z}_{\textnormal{S}_n}\in\mathbb{R}^n$ to be $\textnormal{O}(d)$ invariant and $d$ global rotary features $\mathbf{Z}_{\textnormal{O}(d)}\in\mathbb{R}^{d\times d}$ to be $\textnormal{S}_n$ invariant.
    \item Apply postprocessing for $H$ and $K$ groups onto $\mathbf{Z}_H$ and $\mathbf{Z}_K$ respectively to obtain representations $\mathbf{Z}_H\mapsto\rho(h)$ and $\mathbf{Z}_K\mapsto\rho(k)$ of $H$ and $K$ groups respectively.
    For example, for $\textnormal{S}_n\times\textnormal{O}(d)$, we use $\textnormal{argsort}$ in Eq.~\eqref{eq:apdx_argsort} to obtain $\mathbf{Z}_{\textnormal{S}_n}\mapsto\rho(h)$ and Gram-Schmidt process to obtain $\mathbf{Z}_{\textnormal{O}(d)}\mapsto\rho(k)$.
    \item Combine the representations $\rho(g) = (\rho(h), \rho(k))$ to obtain a representation for the $H\times K$ group.
\end{enumerate}

We now show the following:

\begin{proposition}
    The proposed distribution $p_\omega(g|\mathbf{x})$ for the product group $H\times K$ is equivariant.
\end{proposition}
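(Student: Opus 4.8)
The plan is to reduce the claim to Theorem~\ref{thm:distribution_equivariance}: it suffices to exhibit a representation $\rho'$ of $H\times K$ under which the noise density $p(\boldsymbol{\epsilon})$ is invariant and $|\det\rho'(g)|=1$ for all $g\in H\times K$, and then to verify that the noise-outsourced map $q_\omega:(\mathbf{x},\boldsymbol{\epsilon})\mapsto\rho(g)$ assembled in steps 1--4 is $H\times K$ equivariant. I will handle these two requirements in turn.

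For the noise, I would take $\rho'(g)=\rho'(h)\otimes\rho'(k)$ acting on $\textnormal{vec}(\boldsymbol{\epsilon})$, where $\rho'(h)$ and $\rho'(k)$ are the (orthogonal) base representations of $H$ and $K$ whose invariance was used in the single-factor constructions. Since $\boldsymbol{\epsilon}$ has i.i.d.\ Gaussian entries, the same computation as in the proofs of Propositions~\ref{proposition:symmetric_group_equivariance} and \ref{proposition:orthogonal_group_equivariance} shows that any orthogonal linear map leaves $p(\boldsymbol{\epsilon})$ invariant; as $\rho'(h)\otimes\rho'(k)$ is orthogonal whenever $\rho'(h)$ and $\rho'(k)$ are, invariance of $p(\boldsymbol{\epsilon})$ follows, and $|\det(\rho'(h)\otimes\rho'(k))|$ is a product of powers of $|\det\rho'(h)|$ and $|\det\rho'(k)|$, hence equal to $1$. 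For the concrete case $\textnormal{S}_n\times\textnormal{O}(d)$ with $\boldsymbol{\epsilon}\in\mathbb{R}^{n\times d}$, this is exactly the already-stated fact that $\mathcal{N}(0,\eta^2)$ is invariant under $\boldsymbol{\epsilon}\mapsto\rho'(h)\boldsymbol{\epsilon}\rho'(k)^\top$.

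For the equivariance of $q_\omega$, I would decompose it as $q_\omega=\textnormal{combine}\circ(\textnormal{post}_H\times\textnormal{post}_K)\circ\textnormal{net}_\omega$ and track an arbitrary $g'=(h',k')\in H\times K$ through each stage. Acting on the input sends $(\mathbf{x},\boldsymbol{\epsilon})$ to $(\rho_1(g')\mathbf{x},\rho'(g')\boldsymbol{\epsilon})$; since $\textnormal{net}_\omega$ is $H\times K$ equivariant and its outputs satisfy the additional constraints that $\mathbf{Z}_H$ is $K$ invariant and $\mathbf{Z}_K$ is $H$ invariant, the pair transforms so that $\mathbf{Z}_H$ is acted on only by the $H$-factor and $\mathbf{Z}_K$ only by the $K$-factor. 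The postprocessings $\textnormal{post}_H$ and $\textnormal{post}_K$ are $H$- and $K$-equivariant respectively --- this is precisely what was established for $\textnormal{argsort}$, Gram--Schmidt, and $\textnormal{scale}$ in the preceding propositions --- so $\rho(h)=\textnormal{post}_H(\mathbf{Z}_H)\mapsto\rho(h')\rho(h)$ and $\rho(k)=\textnormal{post}_K(\mathbf{Z}_K)\mapsto\rho(k')\rho(k)$. Finally, using that representations are homomorphisms (and, for the Kronecker form, the mixed-product property $(\rho(h')\otimes\rho(k'))(\rho(h)\otimes\rho(k))=\rho(h'h)\otimes\rho(k'k)$),
\begin{equation}
\rho(g')\,q_\omega(\mathbf{x},\boldsymbol{\epsilon})=\big(\rho(h')\rho(h),\,\rho(k')\rho(k)\big)=\big(\rho(h'h),\,\rho(k'k)\big)=\rho(g'g)=q_\omega(\rho_1(g')\mathbf{x},\rho'(g')\boldsymbol{\epsilon}),
\end{equation}
which is $H\times K$ equivariance of $q_\omega$. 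With both hypotheses of Theorem~\ref{thm:distribution_equivariance} verified, $p_\omega(g|\mathbf{x})$ is $H\times K$ equivariant.

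The step I expect to require the most care is making precise, and then using, the statement that ``$\mathbf{Z}_H$ is $K$ invariant and $\mathbf{Z}_K$ is $H$ invariant'': namely, spelling out that these are the special cases of $H\times K$ equivariance in which the output representation restricted to one factor is trivial, so that each feature block carries a genuine single-factor transformation law of the kind the corresponding single-factor postprocessing expects. Once those transformation laws are fixed, the remaining work --- the determinant bookkeeping for the composite/Kronecker representation and the mixed-product identity for the combine step --- is routine.
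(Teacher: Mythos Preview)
Your proposal is correct and follows essentially the same route as the paper: reduce to Theorem~\ref{thm:distribution_equivariance}, verify $H\times K$ invariance of the Gaussian noise together with the determinant condition via the Kronecker form of $\rho'$, and then track a generic $(h',k')$ through the equivariant network (using $K$-invariance of $\mathbf{Z}_H$ and $H$-invariance of $\mathbf{Z}_K$) and the factorwise postprocessings to obtain $(\rho(h),\rho(k))\mapsto(\rho(h')\rho(h),\rho(k')\rho(k))=\rho(g')\rho(g)$. Your write-up is in fact slightly more careful than the paper's about the determinant bookkeeping and the mixed-product identity, but the underlying argument is the same.
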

\begin{proof}
    By assumption, $p(\boldsymbol{\epsilon})$ is invariant under representations of $H$ and $K$ that satisfy $|\det{\rho'(h)}|=1$ and $|\det{\rho'(k)}|=1$, respectively.
    This implies $H\times K$ invariance as well, since $p(\boldsymbol{\epsilon}) = p(h\cdot\boldsymbol{\epsilon}) = p(k\cdot\boldsymbol{\epsilon})$ for all $\boldsymbol{\epsilon}\in\mathcal{E}, h\in H, k\in K$ gives $p(k\cdot h\cdot \boldsymbol{\epsilon}) = p(k\cdot(h\cdot \boldsymbol{\epsilon})) = p(h\cdot \boldsymbol{\epsilon}) = p(\boldsymbol{\epsilon})$, and Kronecker product of matrices of determinant $1$ gives a matrix of determinant $1$.
    Furthermore, the map $(\mathbf{x},\boldsymbol{\epsilon})\mapsto(\rho(h),\rho(k)) = \rho(g)$ is overall $H\times K$ equivariant, since an input transformed with $g' = (h', k')$ is first mapped by the equivariant neural network as $(g'\cdot\mathbf{x},g'\cdot\boldsymbol{\epsilon})\mapsto (h'\cdot\mathbf{Z}_H,k'\cdot\mathbf{Z}_K)$, then postprocessed as $(h'\cdot\mathbf{Z}_H,k'\cdot\mathbf{Z}_K)\mapsto(\rho(h')\rho(h), \rho(k')\rho(k)) = (\rho(h'), \rho(k'))\cdot(\rho(h), \rho(k)) = \rho(g')\rho(g)$.
    Combining the above, by Theorem~\ref{thm:distribution_equivariance}, the distribution $p_\omega(g|\mathbf{x})$ is $H\times K$ equivariant.
\end{proof}

\subsubsection{Proof of Proposition~\ref{proposition:frame_averaging} and Proposition~\ref{proposition:canonicalization} (Section~\ref{sec:related_work})}\label{sec:apdx_proof_proposition_frame_averaging_canonicalization}

Before proceeding to proofs, we recall that the stabilizer subgroup $G_\mathbf{x}$ of a group $G$ for $\mathbf{x}$ is defined as $\{g'\in G: g'\cdot \mathbf{x} = \mathbf{x}\}$ and acts on a given group element $g\in G$ through left multiplication $g\mapsto g'g$.
For some $g\in G$, by $G_\mathbf{x}g$ we denote its orbit under the action by $G_\mathbf{x}$, \emph{i.e.}, the set of elements in $G$ to which $g$ can be moved by the action of elements $g'\in G_\mathbf{x}$.
Importantly, we can show the following:
\begin{property}\label{property:orbit_partition}
    Any group $G$ is a union of disjoint orbits $G_\mathbf{x}g$ of equal cardinality.
\end{property}
\begin{proof}
    Let us consider the equivalence relation $\sim$ on $G$ induced by the action of the stabilizer $G_\mathbf{x}$, defined as $g\sim h\iff h\in G_\mathbf{x}g$.
    The orbits $G_\mathbf{x}g$ are the equivalence classes under this relation, and the set of all orbits of $G$ under the action of $G_\mathbf{x}$ forms a partition of $G$ (\emph{i.e.}, the quotient $G/G_\mathbf{x}$).
    Furthermore, since $G_\mathbf{x}\leq G$ and right multiplication by some $g\in G$ is a faithful action of $G$ on itself, we have $|G_\mathbf{x}g|=|G_\mathbf{x}|$ for all $g\in G$, which shows that all orbits $G_\mathbf{x}g$ have equal cardinality.
\end{proof}

\begin{figure}[!t]
    \centering
    \includegraphics[width=0.99\textwidth]{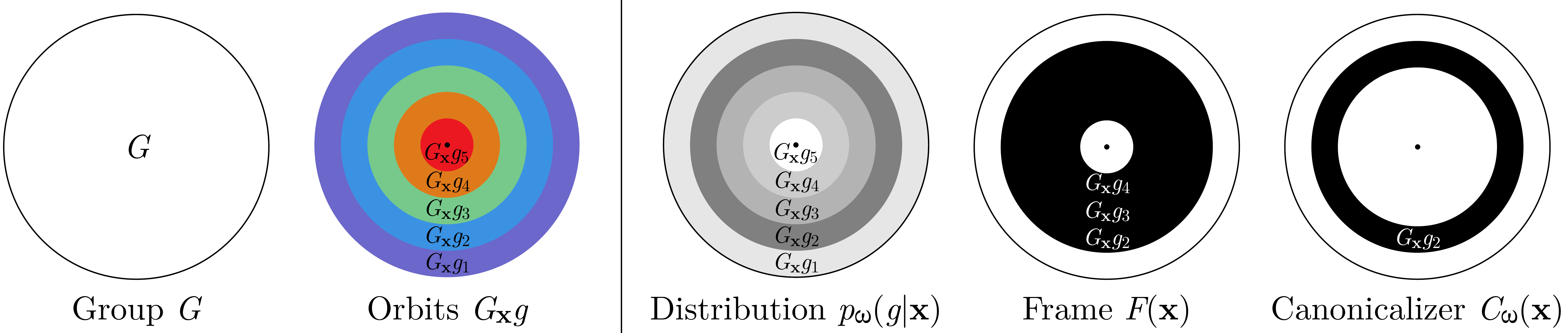}
    \caption{
    Visual illustration of the symmetrization methods based on probabilities assigned upon the partitioning of the group $G$ into orbits $G_\mathbf{x}g$.
    Note that, while we use concentric circles of different perimeters to illustrate each orbit, all orbits actually have an identical cardinality $|G_\mathbf{x}g|=|G_\mathbf{x}|$.
    }
    \label{fig:equivariant_probability_orbits}
\end{figure}
The partition of group $G$ into disjoint orbits $G_\mathbf{x}g$ is illustrated in the first and second panel of Figure~\ref{fig:equivariant_probability_orbits}.
We now show the following:
\begin{property}\label{property:orbit_equal_probability}
    $G$ equivariant $p_\omega(g|\mathbf{x})$ assigns identical probability to all elements on each orbit $G_\mathbf{x}g$.
\end{property}
\begin{proof}
    With equivariance, we have $p_\omega(g|\mathbf{x}) = p_\omega(g'g|g'\cdot\mathbf{x})$.
    Since $g'\cdot\mathbf{x} = \mathbf{x}$ for all $g'\in G_\mathbf{x}$, we have $p_\omega(g|\mathbf{x}) = p_\omega(g'g|\mathbf{x})$ for all $g'\in G_\mathbf{x}$; all elements on orbit $G_\mathbf{x}g$ have an identical probability.
\end{proof}
Property~\ref{property:orbit_equal_probability} characterizes probability distributions over $G$ that can be expressed with $p_\omega(g|\mathbf{x})$, which we illustrate in the third panel of Figure~\ref{fig:equivariant_probability_orbits}.
Intuitively, $p_\omega(g|\mathbf{x})$ assigns constant probability densities over each of the orbit $G_\mathbf{x}g$ that partitions $G$ as shown in Property~\ref{property:orbit_partition}.
We now prove Proposition~\ref{proposition:frame_averaging} and Proposition~\ref{proposition:canonicalization} by showing that $p_\omega(g|\mathbf{x})$ can become frame and canonicalizer as special cases:
\begingroup
\def\theproposition{\ref{proposition:frame_averaging}}
\begin{proposition}
    Probabilistic symmetrization with $G$ equivariant distribution $p_\omega(g|\mathbf{x})$ can become frame averaging~\cite{puny2022frame} by assigning uniform density to a set of orbits $G_\mathbf{x}g$ for some group elements $g$.
\end{proposition}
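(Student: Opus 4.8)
The plan is to realize frame averaging as a special case by taking $p_\omega(g\mid\mathbf{x})$ to be the uniform distribution supported on the frame $F(\mathbf{x})$, and then verifying that this particular distribution is $G$ equivariant so that Eq.~\eqref{eq:probabilistic_symmetrization} collapses to the frame average $\frac{1}{|F(\mathbf{x})|}\sum_{g\in F(\mathbf{x})}\rho_2(g)f_\theta(\rho_1(g)^{-1}\mathbf{x})$. So the argument is essentially: show that such a uniform-on-frame distribution is a legitimate choice of $p_\omega$, and that plugging it in gives back the frame-averaging formula verbatim.

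First I would observe that a $G$ equivariant frame is automatically a union of stabilizer orbits, which is exactly the object referred to in the statement. Indeed, for every $g'\in G_\mathbf{x}$ we have $\rho_1(g')\mathbf{x}=\mathbf{x}$, so the equivariance $F(\rho_1(g')\mathbf{x})=g'F(\mathbf{x})$ gives $F(\mathbf{x})=g'F(\mathbf{x})$; hence $F(\mathbf{x})$ is invariant under left multiplication by $G_\mathbf{x}$, and by Property~\ref{property:orbit_partition} it decomposes into a disjoint union of orbits $G_\mathbf{x}g$. This is the ``set of orbits $G_\mathbf{x}g$'' in the proposition.

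Next I would define $p_\omega(g\mid\mathbf{x})$ as the uniform density $1/|F(\mathbf{x})|$ on $F(\mathbf{x})$ and zero elsewhere, and check probabilistic equivariance, Eq.~\eqref{eq:probabilistic_equivariance}. Since left multiplication by $g'$ is a measure-preserving bijection of $G$ carrying $F(\mathbf{x})$ onto $g'F(\mathbf{x})=F(\rho_1(g')\mathbf{x})$, we have $g\in F(\mathbf{x})\iff g'g\in F(\rho_1(g')\mathbf{x})$, and $|F(\rho_1(g')\mathbf{x})|=|g'F(\mathbf{x})|=|F(\mathbf{x})|$ preserves the normalization, so $p_\omega(g\mid\mathbf{x})=p_\omega(g'g\mid\rho_1(g')\mathbf{x})$ for all $g,g'\in G$. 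Thus $p_\omega$ is $G$ equivariant (consistent with Property~\ref{property:orbit_equal_probability}, which anyway forces any equivariant $p_\omega$ to be constant on each orbit of $F(\mathbf{x})$), and by Theorem~\ref{thm:probabilistic_symmetrization_equivariance} the resulting $\phi_{\theta,\omega}$ is $G$ equivariant. Substituting this $p_\omega$ into Eq.~\eqref{eq:probabilistic_symmetrization}, the expectation against the uniform law on $F(\mathbf{x})$ is exactly the frame average, which finishes the argument.

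The main obstacle I anticipate is making ``uniform density on a union of orbits'' rigorous when $G$ is infinite: a single orbit $G_\mathbf{x}g$ is a left coset of the stabilizer and generically carries Haar measure zero in $G$, so ``uniform'' has to be read as the normalized pushforward of the Haar measure of $G_\mathbf{x}$ along $g'\mapsto g'g$, summed with the frame's weights over the chosen orbits. In the finite-frame setting of \cite{puny2022frame} this subtlety disappears and the construction is just the normalized counting measure; I would therefore phrase the general statement with the appropriate coset measure and then note the finite-frame specialization where everything is elementary.
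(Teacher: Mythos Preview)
Your proposal is correct and follows essentially the same route as the paper: write frame averaging as an expectation over $\mathrm{Unif}(F(\mathbf{x}))$, observe that $F(\mathbf{x})$ decomposes into stabilizer orbits $G_\mathbf{x}g$, and conclude that the uniform-on-frame law is a valid $G$ equivariant $p_\omega(g\mid\mathbf{x})$. The paper cites the orbit decomposition from \cite{puny2022frame} and asserts the equivariance of $\mathrm{Unif}(F(\mathbf{x}))$ without spelling it out, whereas you prove the decomposition directly and verify Eq.~\eqref{eq:probabilistic_equivariance} explicitly; you also flag the measure-theoretic subtlety for infinite $G$, which the paper does not discuss.
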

\addtocounter{proposition}{-1}
\endgroup
\begin{proof}
    A frame is defined as a set-valued function $F:\mathcal{X}\to 2^G\setminus\emptyset$ that satisfies $G$ equivariance $F(g\cdot\mathbf{x}) = gF(\mathbf{x})$~\cite{puny2022frame}.
    For some frame $F$, frame averaging is defined as follows:
    \begin{align}
        \frac{1}{|F(\mathbf{x})|}\sum_{g\in F(\mathbf{x})}\left[g\cdot f_\theta(g^{-1}\cdot \mathbf{x})\right],
    \end{align}
    which can be equivalently written as the below expectation:
    \begin{align}
        \mathbb{E}_{g\sim\textnormal{Unif}(F(\mathbf{x}))}\left[g\cdot f_\theta(g^{-1}\cdot \mathbf{x})\right].
    \end{align}
    From Theorem 3 of \cite{puny2022frame}, we have that $F(\mathbf{x})$ is a disjoint union of equal size orbits $G_\mathbf{x}g$.
    Therefore, $\textnormal{Unif}(F(\mathbf{x}))$ is a uniform probability distribution over the union of the orbits.
    This can be expressed by a $G$ equivariant distribution $p_\omega(g|\mathbf{x})$ by assigning identical probability over all orbits in the frame~$F$ and zero probability to all orbits not in the frame (illustrated in the fourth panel of Figure~\ref{fig:equivariant_probability_orbits}).
    Therefore, probabilistic symmetrization can become frame averaging.
\end{proof}

\begingroup
\def\theproposition{\ref{proposition:canonicalization}}
\begin{proposition}
    Probabilistic symmetrization with $G$ equivariant distribution $p_\omega(g|\mathbf{x})$ can become canonicalization~\cite{kaba2022equivariance} by assigning uniform density to a single orbit $G_\mathbf{x}g$ of some group element $g$.
\end{proposition}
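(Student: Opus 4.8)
\addtocounter{proposition}{-1}
\endgroup
\begin{proof}
The plan is to realize the canonicalization model of \cite{kaba2022equivariance} as the special case of probabilistic symmetrization in which $p_\omega(g|\mathbf{x})$ is the uniform distribution over the \emph{single} orbit $G_\mathbf{x}C_\omega(\mathbf{x})$, where $C_\omega:\mathcal{X}\to G$ is the learned canonicalizer. Concretely, given any $C_\omega$ satisfying the relaxed $G$ equivariance $C_\omega(\rho_1(g')\mathbf{x})=g'sC_\omega(\mathbf{x})$ with $s\in G_\mathbf{x}$, I would set $p_\omega(g|\mathbf{x})=\textnormal{Unif}(G_\mathbf{x}C_\omega(\mathbf{x}))$ and then verify two things: (i) this $p_\omega$ is a valid $G$ equivariant distribution in the sense of Eq.~\eqref{eq:probabilistic_equivariance}, and (ii) plugging it into Eq.~\eqref{eq:probabilistic_symmetrization} reproduces the canonicalized prediction $\rho_2(C_\omega(\mathbf{x}))f_\theta(\rho_1(C_\omega(\mathbf{x}))^{-1}\mathbf{x})$.

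For (i), I would check $p_\omega(g|\mathbf{x})=p_\omega(g'g|\rho_1(g')\mathbf{x})$ at the level of supports and densities. The support of $p_\omega(\cdot|\rho_1(g')\mathbf{x})$ is $G_{\rho_1(g')\mathbf{x}}\,C_\omega(\rho_1(g')\mathbf{x})$; applying the conjugation identity $G_{\rho_1(g')\mathbf{x}}=g'G_\mathbf{x}(g')^{-1}$ and the relaxed equivariance $C_\omega(\rho_1(g')\mathbf{x})=g'sC_\omega(\mathbf{x})$ for some $s\in G_\mathbf{x}$, this support simplifies to $g'G_\mathbf{x}(g')^{-1}\cdot g'sC_\omega(\mathbf{x})=g'G_\mathbf{x}sC_\omega(\mathbf{x})=g'G_\mathbf{x}C_\omega(\mathbf{x})$, using $G_\mathbf{x}s=G_\mathbf{x}$. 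Hence $g'g$ lies in this support exactly when $g\in G_\mathbf{x}C_\omega(\mathbf{x})$, i.e. exactly when $g$ is in the support of $p_\omega(\cdot|\mathbf{x})$; and since all orbits have equal cardinality (Property~\ref{property:orbit_partition}, with counting replaced by normalized Haar measure when $G$ is infinite and compact), the two uniform densities coincide there. This is consistent with Property~\ref{property:orbit_equal_probability} and corresponds to the last panel of Figure~\ref{fig:equivariant_probability_orbits}.

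For (ii), I would expand the expectation by writing $g=sC_\omega(\mathbf{x})$ with $s\sim\textnormal{Unif}(G_\mathbf{x})$. Since $\rho_1(s)^{-1}\mathbf{x}=\mathbf{x}$ for $s\in G_\mathbf{x}$, the argument of $f_\theta$ is the constant $\rho_1(C_\omega(\mathbf{x}))^{-1}\mathbf{x}$, while $\rho_2(g)=\rho_2(s)\rho_2(C_\omega(\mathbf{x}))$; pulling the constant out gives $\phi_{\theta,\omega}(\mathbf{x})=\left(\mathbb{E}_{s\sim\textnormal{Unif}(G_\mathbf{x})}[\rho_2(s)]\right)\rho_2(C_\omega(\mathbf{x}))f_\theta(\rho_1(C_\omega(\mathbf{x}))^{-1}\mathbf{x})$. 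When the stabilizer $G_\mathbf{x}$ is trivial the leading factor is the identity, so $\phi_{\theta,\omega}(\mathbf{x})$ is exactly the canonicalization output of \cite{kaba2022equivariance}; moreover the orbit $G_\mathbf{x}C_\omega(\mathbf{x})$ is then the singleton $\{C_\omega(\mathbf{x})\}$, so $p_\omega$ collapses to the deterministic map $\rho(g)=q_\omega(\mathbf{x})$ obtained by dropping the noise $\boldsymbol{\epsilon}$ of Eq.~\eqref{eq:reparameterization}, matching the remark following the proposition. For a general stabilizer the leading factor is the stabilizer-averaged version of the canonical prediction, which is the sense in which canonicalization is well defined when $G_\mathbf{x}$ is non-trivial.

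The step I expect to be the main obstacle is the non-trivial-stabilizer bookkeeping: making the conjugation and relaxed-equivariance manipulations in (i) fully rigorous (including the passage from counting to Haar measure for compact $G$), and pinning down the exact canonicalized prediction of \cite{kaba2022equivariance} when $G_\mathbf{x}\neq\{e\}$, so that the proposition's claimed identification is a genuine equality rather than merely an analogy. Once the orbit-support identity of (i) is in hand, both the equivariance check and the computation in (ii) are short and routine.
\end{proof}
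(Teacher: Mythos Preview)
Your proposal is correct and in fact more careful than the paper's own argument, but the two differ in how they identify probabilistic symmetrization with canonicalization. The paper's proof is brief: it observes that a valid (relaxed-equivariant) canonicalizer can itself be viewed as a \emph{stochastic} map sampling from $\textnormal{Unif}(G_\mathbf{x}g)$ for a single orbit, so that canonicalization is a \emph{1-sample estimate} of $\mathbb{E}_{g\sim\textnormal{Unif}(G_\mathbf{x}g)}[g\cdot f_\theta(g^{-1}\cdot\mathbf{x})]$; it then simply notes (invoking Property~\ref{property:orbit_equal_probability}) that this single-orbit uniform is a $G$ equivariant $p_\omega$. By contrast, you take the full expectation and compute it explicitly, arriving at the stabilizer-averaging factor $\mathbb{E}_{s\sim\textnormal{Unif}(G_\mathbf{x})}[\rho_2(s)]$ in front of the canonical prediction, which collapses to the identity only when $G_\mathbf{x}$ is trivial.

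What each route buys: the paper's 1-sample framing sidesteps the stabilizer bookkeeping entirely and makes the identification immediate, at the cost of leaving the equivariance check of the single-orbit uniform implicit. Your route is more explicit---your conjugation argument $G_{\rho_1(g')\mathbf{x}}=g'G_\mathbf{x}(g')^{-1}$ combined with relaxed equivariance is exactly what is needed to verify Eq.~\eqref{eq:probabilistic_equivariance} rigorously---and it surfaces the genuine distinction between the expectation and a single canonical sample when the stabilizer is non-trivial, which the paper's proof glosses over. The ``main obstacle'' you flag (non-trivial-stabilizer bookkeeping) is therefore a real feature of your more demanding formulation rather than a gap; the paper avoids it by design.
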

\addtocounter{proposition}{-1}
\endgroup
\begin{proof}
    A canonicalizer is defined as a (possibly stochastic) parameterized map $C_\omega:\mathcal{X}\to G$ that satisfies relaxed $G$ equivariance $C_\omega(g\cdot\mathbf{x}) = gg'C_\omega(\mathbf{x})$ for some $g'\in G_\mathbf{x}$~\cite{kaba2022equivariance}.
    For some canonicalizer $C_\omega$, canonicalization is defined as follows:
    \begin{align}
        g\cdot f_\theta(g^{-1}\cdot\mathbf{x}),
        \ \ \ \ 
        g = C_\omega(\mathbf{x}).
    \end{align}
    From relaxed $G$ equivariance, we have $C_\omega(\mathbf{x}) = g'C_\omega(\mathbf{x})$ for some $g'\in G_\mathbf{x}$.
    A valid choice for the canonicalizer $C_\omega$ is a stochastic map that samples from the uniform distribution over a frame $C_\omega(\mathbf{x}) \sim \textnormal{Unif}(F_\omega(\mathbf{x}))$ where the frame is assumed to always provide a single orbit $F_\omega(\mathbf{x}) = G_\mathbf{x}g$.
    In this case, canonicalization is equivalent to a 1-sample estimation of the below expectation:
    \begin{align}
        \mathbb{E}_{g\sim\textnormal{Unif}(F_\omega(\mathbf{x}))}\left[g\cdot f_\theta(g^{-1}\cdot \mathbf{x})\right].
    \end{align}
    Furthermore, uniform distribution over the single-orbit frame $\textnormal{Unif}(F_\omega(\mathbf{x}))$ can be expressed by a $G$ equivariant distribution $p_\omega(g|\mathbf{x})$ by assigning nonzero probability to the single orbit $G_\mathbf{x}g$ and assigning zero probability to the rest (illustrated in the last panel of Figure~\ref{fig:equivariant_probability_orbits}).
    Therefore, probabilistic symmetrization can become canonicalization.
\end{proof}

\subsection{Extended Related Work (Continued from Section~\ref{sec:related_work})}\label{sec:apdx_related_work}\label{sec:apdx_extended_related_work}

Our work draws inspiration from an extensive array of prior research, ranging from equivariant architectures and symmetrization to general-purpose deep learning with transformers. This section outlines a comprehensive review of these fields, spotlighting ideas specifically relevant to our work.

\paragraph{Equivariant Architectures}
Equivariant architectures, defined by the group equivariance of their building blocks, have been a prominent approach for equivariant deep learning~\cite{bronstein2021geometric, bogatskiy2022symmetry}.
These architectures have been primarily developed for data types associated with permutation and Euclidean group symmetries, including images~\cite{cohen2016group, cohen2017steerable}, sets, graphs, and hypergraphs~\cite{battaglia2018relational, maron2019invariant, bevilacqua2022equivariant}, and geometric graphs~\cite{deng2021vector, satorras2021en, thomas2018tensor}.
Additionally, they have been extended to more general data types under arbitrary finite group~\cite{ravanbakhsh2017equivariance} and matrix group symmetries~\cite{finzi2021a}.
However, they face challenges such as limited expressive power~\cite{xu2019how, maron2019provably, morris2019weisfeiler, zhang2021rethinking, joshi2023on} and architectural issues like over-smoothing~\cite{oono2020graph, cai2020a, nt2019revisiting} and over-squashing~\cite{topping2022understanding} in graph neural networks.
Our work aims to develop an equivariant deep learning approach that relies less on equivariant architectures, to circumvent these limitations and enhance parameter sharing and transfer across varying group symmetries.

\paragraph{Symmetrization}
Our approach is an instance of symmetrization for equivariant deep learning which aims to achieve group equivariance using base models with unconstrained architectures.
This is in general accomplished by averaging over specific group transformations of the input and output such that the averaged output exhibits equivariance.
This allows us to leverage the expressive power of the base model \emph{e.g.}, achieve universal approximation using an MLP~\cite{hornik1989multilayer, cybenko1989approximation} or a transformer~\cite{yun2020are}, and potentially share or transfer parameters across different group symmetries.
Existing literature has explored the choices of group transformations and base models for symmetrization.
A straightforward approach is to average over the entire group~\cite{yarotsky2018universal}, which is suitable for small, finite groups~\cite{basu2022equi, mouli2021neural, kicki2021a, vanderpol2020mdp} and requires sampling-based estimation for large groups such as permutations~\cite{murphy2019janossy, murphy2019relational, srinivasan2020on, dasoulas2019coloring}.
Recent studies have attempted to identify smaller, input-dependent subsets of the group for averaging.
Frame averaging~\cite{puny2022frame} employs manually discovered set-values functions called frames, which still demand sampling-based estimation for certain worst-case inputs.
Canonicalization~\cite{kaba2022equivariance} utilizes a single group transformation predicted by a neural network, but sacrifices strict equivariance.
Our approach jointly achieves equivariance and end-to-end learning by utilizing parameterized, input-conditional equivariant distributions.
Furthermore, our approach is one of the first demonstrations of symmetrization for the permutation group in real-world graph recognition task.
Concerning the base model, previous work mostly examined small base models like an MLP or partial symmetrization of already equivariant models like GNNs.
Few studies have explored symmetrizing pre-trained models for small finite groups~\cite{basu2022equi, basu2023equivariant}, and to our knowledge, we are the first to investigate symmetrization of a pre-trained standard transformer for permutation groups or any large group generally.

\paragraph{Transformer Architectures}
A significant motivation of our work is to combine the powerful scaling and transfer capabilities of the standard transformer architecture~\cite{vaswani2017attention} with equivariant deep learning. 
The transformer architecture has driven major breakthroughs in language and vision domains~\cite{vaswani2017attention, devlin2019bert, brown2020language, raffel2020exploring}, and proven its ability to learn diverse modalities~\cite{jaegle2021perceiver, jaegle2022perceiver} or transfer knowledge across them~\cite{shen2023cross, lu2022pretrained, rothermel2021dont, dinh2022lift, paischer2022history, li2023blip2}.
Although transformer-style architectures have been developed for symmetric data modalities like sets~\cite{lee2019set}, graphs~\cite{ying2021do, kim2021transformers, kim2022pure, kreuzer2021rethinking, muller2023attending, min2022transformer}, hypergraphs~\cite{chien2022you, kim2022equivariant}, and geometric graphs~\cite{fuchs2020se3, luo2022one}, they often require specific architectural modifications to achieve equivariance to the given symmetry group, compromising full compatibility with transformer architectures used in language and vision domains.
Apart from a few studies on linguistic graph encoding with language models~\cite{ribeiro2020investigating}, we believe we are the first to propose a general framework that facilitates full compatibility of the standard transformer architecture for learning symmetric data.
For example, we have shown that a pre-trained vision transformer could be repurposed to encode graphs.

\paragraph{Learning Distribution of Data Augmentations}
Since our approach parameterizes a distribution $p_\omega(g|\mathbf{x})$ on a group for symmetrization of form $\phi_{\theta,\omega}(\mathbf{x}) = \mathbb{E}_g[g\cdot f_\theta(g^{-1}\cdot\mathbf{x})]$ and learns it from data, one may find similarity to Augerino~\cite{benton2020learning} and related approaches~\cite{rath2020boosting, rommel2022deep, vanderwilk2018learning, vanderouderaa2022learning, romero2022learning, immer2022invariance} that learn distributions over data augmentations (\emph{e.g.}, $p_\omega(g)$) for a similar symmetrization.
The key difference is that, while these approaches aim to discover underlying (approximate) symmetry constraint from data and searches over a space of different group symmetries, our objective aims to obtain an exact $G$ equivariant symmetrization $\phi_{\theta,\omega}(\mathbf{x})$ given the known symmetry group $G$ of data (\emph{e.g.}, $G=\textnormal{S}_n$ for graphs).
Because of this, the symmetrizing distribution has to be designed differently.
In our case, we parameterize the distribution $p_\omega(g|\mathbf{x})$ itself to be equivariant to a specific given group $G$, while for augmentation learning approaches, the distribution $p_\omega(g)$ is parameterized for a different purpose of covering a range of different group symmetry constraints and their approximations (\emph{e.g.}, a set of 2D affine transformations~\cite{benton2020learning}).
This leads to advantages of our approach if the symmetry group $G$ is known, as \textbf{(1)} our approach can learn non-trivial and useful distribution $p_\omega(g|\mathbf{x})$ per input data $\mathbf{x}$ while keeping the symmetrized function $\phi_{\theta,\omega}(\mathbf{x})$ exactly $G$ equivariant, while augmentation learning does not guarantee equivariance for a given group in general and often has to reduce to trivial group averaging $p_\omega = \textnormal{Unif}(G)$ to be exactly $G$ equivariant, and \textbf{(2)} while augmentation learning has to employ regularization~\cite{benton2020learning} or model selection~\cite{immer2022invariance} to prevent collapse to trivial symmetry that is the least constrained and would fit the training data most easily~\cite{immer2022invariance}, 
our approach fixes and enforces equivariance for the given symmetry group $G$ by construction, which allows us to use regular maximum likelihood objective for training without the need to address symmetry collapse.

\subsection{Experimental Details (Section~\ref{sec:experiments})}\label{sec:apdx_experimental_details}
\begin{table}[!t]
\caption{Overview of the datasets.}
\label{table:dataset_overview}
\centering
\begin{adjustbox}{width=0.99\textwidth}
\begin{tabular}{lccccc}
                  \\\toprule[0.2pt]
    Dataset & Symmetry & Domain & Task & Feat. (dim)
    \\\toprule[0.2pt]
    GRAPH8c & \multirow{3}{*}{$\textnormal{S}_n$ Invariant} & \multirow{3}{*}{Graph Isomorphism} & \multirow{2}{*}{Graph Separation} & \multirow{3}{*}{Adj. (1)} \\
    EXP & & & & \\
    EXP-classify & & & Graph Classification & \\ \midrule[0.2pt]
    $n$-body & $\textnormal{S}_n\times\textnormal{E}(3)$ Equivariant & Physics & Position Regression & Pos. (3) + Vel. (3) + Charge (1) \\ \midrule[0.2pt]
    PATTERN & $\textnormal{S}_n$ Equivariant & Mathematical Modeling & Node Classification & Rand. Node Attr. (3) + Adj. (1) \\ \midrule[0.2pt]
    Peptides-func & \multirow{2}{*}{$\textnormal{S}_n$ Invariant} & \multirow{2}{*}{Chemistry} & Graph Classification & \multirow{2}{*}{Atom (9) + Bond (3) + Adj. (1)} \\
    Peptides-struct & & & Graph Regression & \\ \midrule[0.2pt]
    PCQM-Contact & $\textnormal{S}_n$ Equivariant & Quantum Chemistry & Link Prediction & Atom (9) + Bond (3) + Adj. (1)
    \\\bottomrule[0.2pt]
\end{tabular}
\end{adjustbox}
\end{table}
\begin{table}[!t]
\caption{Statistics of the datasets.}
\label{table:dataset_statistics}
\centering
\begin{adjustbox}{width=0.65\textwidth}
\begin{tabular}{lcccccc}
    \\\toprule[0.2pt]
    Dataset & Size & Max \# Nodes & Average \# Nodes & Average \# Edges \\\toprule[0.2pt]
    GRAPH8c & 11,117 & 8 & 8 & 28.82 \\
    EXP & \multirow{2}{*}{1,200} & \multirow{2}{*}{64} & \multirow{2}{*}{44.44} & \multirow{2}{*}{110.21} \\
    EXP-classify & \\ \midrule[0.2pt]
    $n$-body & 7,000 & 5 & 5 & Fully Connected \\ \midrule[0.2pt]
    PATTERN & 14,000 & 188 & 117.47 & 4749.15 \\ \midrule[0.2pt]
    Peptides-func & \multirow{2}{*}{15,535} & \multirow{2}{*}{444} & \multirow{2}{*}{150.94} & \multirow{2}{*}{307.30} \\
    Peptides-struct & \\ \midrule[0.2pt]
    PCQM-Contact & 529,434 & 53 & 30.14 & 61.09 \\ \bottomrule[0.2pt]
\end{tabular}
\end{adjustbox}
\end{table}
We provide details of the datasets and models used in our experiments in Section~\ref{sec:experiments}.
The details of the datasets from the original papers~\cite{balcilar2021breaking, abboud2021the, dwivedi2020benchmarking, dwivedi2022long, fuchs2020se3, satorras2021en} can be found in Table~\ref{table:dataset_overview} and Table~\ref{table:dataset_statistics}.

\subsubsection{Implementation Details of \texorpdfstring{$p_\omega$}{p\_ω} for Symmetric Group \texorpdfstring{$\textnormal{S}_n$}{S\_n} (Section~\ref{sec:exp},~\ref{sec:pattern},~\ref{sec:lrgb})}\label{sec:apdx_implementation_symmetric_group}
In all experiments regarding the symmetric group $\textnormal{S}_n$, we implement the $\textnormal{S}_n$ equivariant distribution $p_\omega(g|\mathbf{x})$, \emph{i.e.}, $q_\omega:(\mathbf{x},\boldsymbol{\epsilon})\mapsto\mathbf{P}_g$ as a 3-layer GIN with 64 hidden dimensions~\cite{xu2019how} that has around 25k parameters.
Specifically, given a graph $\mathbf{x}$ with node features $\mathbf{X}\in\mathbb{R}^{n\times d_\textnormal{in}}$ and adjacency matrix $\mathbf{A}\in\mathbb{R}^{n\times n}$,\footnote{We do not utilize edge attributes in equivariant distribution $p_\omega$, while we utilize them in base model $f_\theta$.} we first augment a virtual node~\cite{gilmer2017neural} which is connected to all nodes to facilitate global interaction while retaining $\textnormal{S}_n$ equivariance, as follows:
\begin{align}
    \mathbf{X}' = \left[\mathbf{X}; \mathbf{v}\right],
    \ \ \ \ 
    \mathbf{A}' = \left[
    \begin{array}{cc}
        \mathbf{A} & \mathbf{1} \\
        \mathbf{1}^\top & 0 
    \end{array}
    \right],
\end{align}
where the feature of the virtual node $\mathbf{v}\in\mathbb{R}^{d_\textnormal{in}}$ is a trainable parameter.
Then, we prepared the input node features $\mathbf{H}\in\mathbb{R}^{(n+1)\times d_\textnormal{in}}$ to the GIN as $\mathbf{H} = \mathbf{X}' + \boldsymbol{\epsilon}$ where the noise $\boldsymbol{\epsilon}\in\mathbb{R}^{(n+1)\times d_\textnormal{in}}$ is i.i.d. sampled from $\textnormal{Unif}[0, \eta]$ with scale hyperparameter $\eta$.
Then, we employ following 3-layer GIN with 64 hidden dimensions to obtain processed node features $\mathbf{H}'\in\mathbb{R}^{(n+1)\times 1}$:
\begin{align}
    \mathbf{H}' = \textnormal{GINConv}_{64, 64, 1}\circ\textnormal{GINConv}_{64, 64, 64}\circ\textnormal{GINConv}_{d_\textnormal{in}, 64, 64}(\mathbf{H}),
\end{align}
where each $\textnormal{GINConv}_{d_1, d_2, d_3}$ computes below with a two-layer elementwise $\textnormal{MLP}:\mathbb{R}^{n\times d_1}\to\mathbb{R}^{n\times d_3}$ with hidden dimension $d_2$, $\textnormal{ReLU}$ activation, batch normalization, and trained scalar $e$:
\begin{align}
    \mathbf{H} \mapsto \textnormal{MLP}((\mathbf{A}' + (1+e)\mathbf{I})\mathbf{H}).
\end{align}

Then, from the processed node features $\mathbf{H}'\in\mathbb{R}^{(n+1)\times 1}$, we finally obtain the features $\mathbf{Z}\in\mathbb{R}^n$ for postprocessing by discarding the feature of the virtual node.
Then, postprocessing into a permutation matrix is done with $\textnormal{argsort}:\mathbf{Z}\mapsto\mathbf{P}_g\in\mathbb{R}^{n\times n}$ as in Eq.~\eqref{eq:argsort}.

\paragraph{Training}
To backpropagate through $\mathbf{P}_g$ for end-to-end training of $p_\omega(g|\mathbf{x})$, we use straight-through gradient estimator~\cite{bengio2013estimating} with an approximate permutation matrix $\hat{\mathbf{P}}_g\approx\mathbf{P}_g$.\footnote{In PyTorch~\cite{adam2019pytorch}, one can simply replace $\mathbf{P}_g$ with $\texttt{(}\mathbf{P}_g - \hat{\mathbf{P}}_g\texttt{).detach()} + \hat{\mathbf{P}}_g$ during forward passes.}
For this, we first apply L2 normalization $\mathbf{Z}\mapsto\bar{\mathbf{Z}}$ and use the below differentiable relaxation of the $\textnormal{argsort}$ operator~\cite{mena2018learning, grover2019stochastic, winter2021permutation}:
\begin{align}\label{eq:softsort}
    \hat{\mathbf{P}}_g = S({-|\bar{\mathbf{Z}}\boldsymbol{1}^\top-\boldsymbol{1}\textnormal{sort}(\bar{\mathbf{Z}})^\top|}/{\tau}),
\end{align}
where $S(\cdot/\tau)$ is Sinkhorn operator~\cite{mena2018learning} with temperature hyperparameter $\tau\in\mathbb{R}_+$ that performs elementwise exponential followed by iterative normalization of rows and columns.
Following \cite{mena2018learning}, we use 20 Sinkhorn iterations which worked robustly in all our experiments.
For the correctness of straight-through gradients, it is desired that $\hat{\mathbf{P}}_g$ closely approximates the real permutation matrix $\mathbf{P}_g$ during training.
For this, we choose the temperature $\tau$ to be small, $0.01$ in general, and following prior work~\cite{winter2021permutation}, employ a regularizer on the mean of row- and column-wise entropy of $\hat{\mathbf{P}}_g$ with a strength of $0.1$ in all experiments.
The $\textnormal{S}_n$ equivariance of the relaxed $\textnormal{argsort}$ $\mathbf{Z}\mapsto\hat{\mathbf{P}}_g$ can be shown in a similar way to Proposition~\ref{proposition:symmetric_group_equivariance} from the fact that elementwise subtraction, absolute, scaling by $-1/\tau$, exponential, and iterative normalization of rows and columns all commute with $\mathbf{P}_{g'}\in\textnormal{S}_n$.

\subsubsection{Implementation Details of \texorpdfstring{$p_\omega$}{p\_ω} for Product Group \texorpdfstring{$\textnormal{S}_n\times\textnormal{E}(3)$}{S\_n × E(3)} (Section~\ref{sec:nbody})}\label{sec:apdx_implementation_product_group}
In our $n$-body experiment on the product group $\textnormal{S}_n\times\textnormal{E}(3)$, we implement the $\textnormal{S}_n\times\textnormal{O}(3)$ equivariant distribution $p_\omega(g|\mathbf{x}-\bar{\mathbf{x}}\boldsymbol{1}^\top)$, \emph{i.e.}, $q_\omega:(\mathbf{x}-\bar{\mathbf{x}}\boldsymbol{1}^\top, \boldsymbol{\epsilon})\mapsto(\mathbf{P}_g, \mathbf{Q}_g)$ based on a 2-layer Vector Neurons version of DGCNN with 96 hidden dimensions~\cite{deng2021vector} that has around 7k parameters.
Due to the architecture's complexity, we focus on describing input and output of the network and postprocessing, and guide the readers to the original paper \cite{deng2021vector} for further architectural details.
In a high-level, the Vector Neurons receives position $\mathbf{P}\in\mathbb{R}^{n\times 3}$ and velocity $\mathbf{V}\in\mathbb{R}^{n\times 3}$ of the zero-centered input $\mathbf{x}-\bar{\mathbf{x}}\boldsymbol{1}^\top$ with noises $\boldsymbol{\epsilon}_1, \boldsymbol{\epsilon}_2\in\mathbb{R}^{n\times 3}$ i.i.d. sampled from normal $\mathcal{N}(0,\eta^2)$ with scale hyperparameter~$\eta$, and produces features $\mathbf{H}_{\textnormal{S}_n}\in\mathbb{R}^{n\times 3\times d_1}$ and $\mathbf{H}_{\textnormal{O}(3)}\in\mathbb{R}^{n\times 3\times d_2}$ with $d_1=1$ and $d_2=3$ as follows:
\begin{align}
    \mathbf{H}_{\textnormal{S}_n}, \mathbf{H}_{\textnormal{O}(3)} = \textnormal{VN}\textnormal{-}\textnormal{DGCNN}(\mathbf{P}+\boldsymbol{\epsilon}_1, \mathbf{V} + \boldsymbol{\epsilon}_2).
\end{align}
Then, we apply $\textnormal{O}(3)$ invariant pooling on $\mathbf{H}_{\textnormal{S}_n}$ and $\textnormal{S}_n$ invariant pooling on $\mathbf{H}_{\textnormal{O}(3)}$, both supported as a part of \cite{deng2021vector}, to obtain features for postprocessing $\mathbf{Z}_{\textnormal{S}_n}\in\mathbb{R}^{n\times 1}$ and $\mathbf{Z}_{\textnormal{O}(3)}\in\mathbb{R}^{3\times 3}$, respectively:
\begin{align}
    \mathbf{Z}_{\textnormal{S}_n} = \textnormal{Pool}_{\textnormal{O}(3)}(\mathbf{H}_{\textnormal{S}_n}),
    \ \ \ \ 
    \mathbf{Z}_{\textnormal{O}(3)} = \textnormal{Pool}_{\textnormal{S}_n}(\mathbf{H}_{\textnormal{O}(3)}).
\end{align}
Then, postprocessing with $\textnormal{argsort}:\mathbf{Z}_{\textnormal{S}_n}\mapsto\mathbf{P}_g\in\mathbb{R}^{n\times n}$ and Gram-Schmidt orthogonalization $\mathbf{Z}_{\textnormal{O}(3)}\mapsto\mathbf{Q}_g\in\mathbb{R}^{3\times 3}$ is performed identically as described in the main text (Section~\ref{sec:equivariant_distribution}).
For the straight-through gradient estimation of the $\textnormal{argsort}$ operator, we use relaxed $\textnormal{argsort}$ described in Appendix~\ref{sec:apdx_implementation_symmetric_group}, with the only difference of using the temperature $\tau = 0.1$.

\subsubsection{Graph Isomorphism Learning with MLP (Section~\ref{sec:exp})}\label{sec:apdx_exp}
\paragraph{Base Model $f_\theta$}
For EXP and EXP-classify, the model is given adjacency matrix $\mathbf{A}\in\mathbb{R}^{64\times64}$ and binary node features $\mathbf{X}\in\mathbb{R}^{64}$ which are zero-padded to maximal number of nodes $64$.
For GRAPH8c, the input graphs are all of size $8$ without node features, and the model is given adjacency matrix $\mathbf{A}\in\mathbb{R}^{8\times8}$.
For EXP-classify, the prediction target is a scalar binary classification logit.

For the base model for EXP-classify, we use a 5-layer $\textnormal{MLP}$ $f_\theta:\mathbb{R}^{64\times64+64}\to \mathbb{R}$ on flattened and concatenated adjacency matrix and node features, with an identical architecture to other symmetrization baselines (MLP-GA and MLP-FA~\cite{puny2022frame}) as in below:
\begin{align}
    f_\theta = \textnormal{FC}_{1, 10}\circ\textnormal{FC}_{10, 2048}\circ\textnormal{FC}_{2048, 4096}\circ\textnormal{FC}_{4096, 2048}\circ\textnormal{FC}_{2048, 4160},
\end{align}
where $\textnormal{FC}_{d_2,d_1}:\mathbb{R}^{d_1}\to\mathbb{R}^{d_2}$ denotes a fully-connected layer and ReLU activation is omitted.
For EXP, we drop the last layer to obtain 10-dimensional output.
For GRAPH8c, we use the following architecture $f_\theta:\mathbb{R}^{8\times8}\to \mathbb{R}^{10}$ that takes flattened adjacency to produce 10-dimensional output~\cite{puny2022frame}:
\begin{align}
    f_\theta = \textnormal{FC}_{10, 64}\circ\textnormal{FC}_{64, 128}\circ\textnormal{FC}_{128, 64}.
\end{align}

\paragraph{Training}
For EXP-classify, we train our models with binary cross-entropy loss using Adam optimizer~\cite{kingma2015adam} with batch size 100 and learning rate 1e-3 for 2,000 epochs, which takes around 30 minutes on a single RTX 3090 GPU with 24GB using PyTorch~\cite{adam2019pytorch}.
We additionally apply 200 epochs of linear learning rate warm-up and gradient norm clipping at 0.1, which we found helpful for stabilizing the training.
For the equivariant distribution $p_\omega$, we use noise scale $\eta=1$.
Since EXP and GRAPH8c concern randomly initialized models, we do not train the models for these tasks.

\subsubsection{Particle Dynamics Learning with Transformer (Section~\ref{sec:nbody})}\label{sec:apdx_nbody}
\paragraph{Base Model $f_\theta$}
The model is given zero-centered position $\mathbf{P}\in\mathbb{R}^{5\times 3}$ and velocity $\mathbf{V}\in\mathbb{R}^{5\times 3}$ of 5 particles at a time point with pairwise charge difference $\mathbf{C}\in\mathbb{R}^{5\times 5}$ and squared distance $\mathbf{D}\in\mathbb{R}^{5\times 5}$.
We set the prediction target as difference of position $\Delta\mathbf{P}\in\mathbb{R}^{5\times 3}$ after a certain time.

For the base model, we use a 8-layer transformer encoder $f_\theta:\mathbb{R}^{25\times 8}\to\mathbb{R}^{25\times 3}$ that operates on sequences of length $25$ with dimension $8$.
At each prediction, we first organize the input into a single tensor $\in\mathbb{R}^{5\times 5\times 8}$ by placing $\mathbf{P}$ and $\mathbf{V}$ on the diagonals of $\mathbf{C}$ and $\mathbf{D}$, and then turn the tensor into a sequence of $25$ tokens $\in\mathbb{R}^{25\times 8}$ by flattening the first two axes.
Analogously, we organize the output of the model into a tensor $\in\mathbb{R}^{5\times 5\times 3}$ and take the diagonal entries as the predictions.
For the transformer, we use the standard implementation provided in PyTorch~\cite{adam2019pytorch, vaswani2017attention}, with $64$ hidden dimensions, $4$ attention heads, $\textnormal{GELU}$ activation~\cite{hendrycks2018gaussian} in feedforward networks, PreLN~\cite{xiong2020on}, learnable 1D positional encoding, and an MLP prediction head with 1 hidden layer.
The model has around 208k trainable parameters, around 2.3$\times$ compared to the GNN backbones of $\textnormal{E}(3)$ symmetrization baselines in the benchmark (GNN-FA and GNN-Canonical.) with 92k parameters.

\paragraph{Training}
We train our models with MSE loss using Adam optimizer~\cite{kingma2015adam} with batch size 100 and learning rate 1e-3 for 10,000 epochs, which takes around 8.5 hours on a single RTX 3090 GPU with 24GB using PyTorch~\cite{adam2019pytorch}.
We use weight decay with strength 1e-12 and dropout on the distribution $p_\omega$ with probability 0.08.
For the equivariant distribution $p_\omega$, we use noise scale $\eta=1$.

\subsubsection{Graph Pattern Recognition with Vision Transformer (Section~\ref{sec:pattern})}\label{sec:apdx_pattern}
\paragraph{Base Model $f_\theta$}
The model is given adjacency matrix $\mathbf{A}\in\mathbb{R}^{188\times 188}$ and node features $\mathbf{X}\in\mathbb{R}^{188\times 3}$ zero-padded to maximal 188 nodes.
The prediction target is node classification logits $\mathbf{Y}\in\mathbb{R}^{188\times 2}$.

For the base model, we use a transformer with an identical architecture to ViT-Base~\cite{dosovitskiy2021an} that operates on 224$\times$224 images with 16$\times$16 patch, using configuration from HuggingFace~\cite{wolf2019huggingface} model~hub.
We first remove the input patch projection and output head layers, which gives us a backbone $\textnormal{transformer}:\mathbb{R}^{(14\times14)\times768}\to\mathbb{R}^{(14\times14)\times768}$ on sequences of $(224/16)\times(224/16)=14\times14$ tokens.
Then, we use the following as the base model $f_\theta:(\mathbf{A},\mathbf{X})\mapsto\mathbf{Y}$:
\begin{align}
    f_\theta(\mathbf{A},\mathbf{X}) = \textnormal{detokenize}\left(\textnormal{transformer}\left(\textnormal{tokenize}(\mathbf{A}, \mathbf{X})\right)\right),
\end{align}
where, for $\textnormal{tokenize}:\mathbb{R}^{188\times188\times1}\times\mathbb{R}^{188\times3} \to\mathbb{R}^{(14\times14)\times768}$ we organize the input into a single tensor $\in\mathbb{R}^{188\times188\times4}$ by placing $\mathbf{X}$ on the diagonals of $\mathbf{A}$ and apply 2D convolution with kernel size and stride 14, and for $\textnormal{detokenize}:\mathbb{R}^{(14\times14)\times768}\to\mathbb{R}^{188\times2}$ we apply transposed 2D convolution with kernel size and stride 14 to obtain a tensor $\in\mathbb{R}^{188\times 188\times2}$ and take its diagonal entries as output.

\paragraph{Training}
We train our models with binary cross-entropy loss weighted inversely by class size~\cite{dwivedi2020benchmarking} using AdamW~\cite{loshchilov2019decoupled} optimizer with batch size 128, learning rate 1e-5, and weight decay 0.01.
We train the models for 25k steps under learning rate warm-up for 5k steps then linear decay to 0 with early stopping based on validation loss, which usually takes less than 5 hours on 8 RTX 3090 GPUs with 24GB using PyTorch Lightning~\cite{falcon2019lightning}.
For the equivariant distribution $p_\omega$ we use noise scale $\eta=1$ and dropout with probability 0.1.
For probabilistic symmetrization that involves sampling-based estimation, we use sample size 1 for training.
For group averaging, sample size 1 for training led to optimization challenges, and therefore we use sample size 10 for training which yielded better results.

\subsubsection{Real-World Graph Learning with Vision Transformer (Section~\ref{sec:lrgb})}\label{sec:apdx_lrgb}
\paragraph{Base Model $f_\theta$}
For Peptides-func/struct, the model is given adjacency matrix
$\mathbf{A}\in\mathbb{R}^{444\times 444}$, node features $\mathbf{X}\in\mathbb{R}^{444\times 64}$, and edge features $\mathbf{E}\in\mathbb{R}^{444\times 444\times 7}$, padded to maximal 444 nodes.
The prediction target is binary classification logits $\mathbf{Y}\in\mathbb{R}^{10}$ for Peptides-func, and regression targets $\mathbf{Y}\in\mathbb{R}^{11}$ for Peptides-struct.
For PCQM-Contact, the model is given adjacency matrix
$\mathbf{A}\in\mathbb{R}^{53\times 53}$, node features $\mathbf{X}\in\mathbb{R}^{53\times 68}$, and edge features $\mathbf{E}\in\mathbb{R}^{53\times 53\times 6}$, zero-padded to maximal 53 nodes.
The prediction target is binary edge classification logit $\mathbf{Y}\in\mathbb{R}^{53\times 53\times 1}$.

For the base model, we use a transformer with an identical architecture to ViT-Base that operates on $14\times14$ tokens, same as in Appendix~\ref{sec:apdx_pattern}.
For Peptides-func and Peptides-struct, we use the following as the base model $f_\theta:(\mathbf{A},\mathbf{X},\mathbf{E})\mapsto\mathbf{Y}$:
\begin{align}
    f_\theta(\mathbf{A},\mathbf{X},\mathbf{E}) = \textnormal{detokenize}_\texttt{[cls]}\left(\textnormal{transformer}\left(\textnormal{tokenize}_\textnormal{2D}(\mathbf{A},\mathbf{E}) + \textnormal{tokenize}_\textnormal{1D}(\mathbf{X})\right)\right),
\end{align}
where $\textnormal{tokenize}_\textnormal{2D}:\mathbb{R}^{444\times 444 \times(1+7)} \to\mathbb{R}^{(14\times14)\times768}$ is 2D convolution with kernel size and stride 32, $\textnormal{tokenize}_\textnormal{1D}:\mathbb{R}^{444\times 64}\to\mathbb{R}^{196\times768}$ is 1D convolution with kernel size and stride 3, and $\textnormal{detokenize}_\texttt{[cls]}$ performs linear projection of the global $\texttt{[cls]}$ token~\cite{dosovitskiy2021an} to the target dimensionality.
For PCQM-Contact, we use the following as base model $f_\theta:(\mathbf{A},\mathbf{X},\mathbf{E})\mapsto\mathbf{Y}$:
\begin{align}
    f_\theta(\mathbf{A},\mathbf{X},\mathbf{E}) = \textnormal{detokenize}_\textnormal{2D}\left(\textnormal{transformer}\left(\textnormal{tokenize}_\textnormal{2D}(\mathbf{A},\mathbf{E}) + \textnormal{tokenize}_\textnormal{1D}(\mathbf{X})\right)\right),
\end{align}
where $\textnormal{tokenize}_\textnormal{2D}:\mathbb{R}^{53\times 53 \times(1+6)} \to\mathbb{R}^{(14\times14)\times768}$ is 2D convolution with kernel size and stride 4, $\textnormal{tokenize}_\textnormal{1D}:\mathbb{R}^{53\times 64}\to\mathbb{R}^{196\times768}$ is 1D convolution with kernel size and stride 1, and $\textnormal{detokenize}_\textnormal{2D}:\mathbb{R}^{(14\times14)\times768}\to\mathbb{R}^{53\times 53 \times1}$ is transposed 2D convolution with kernel size and stride 4.

\paragraph{Training}
We train our models with cross-entropy for classification and L1 loss for regression using AdamW~\cite{loshchilov2019decoupled} optimizer with batch size 128, learning rate 1e-5 except for PCQM-Contact where we use 5e-5, and weight decay 0.1 except for PCQM-Contact where we use 0.01.
We train the models for 50k steps under learning rate warm-up for 5k steps then linear decay to 0 with early stopping based on validation performance, which usually takes less than 12 hours on 8 RTX 3090 GPUs with 24GB using PyTorch Lightning~\cite{falcon2019lightning}.
For the equivariant distribution $p_\omega$, we use noise scale $\eta=1$, and use dropout with probability 0.1 except for PCQM-Contact where we do not use dropout.
We further employ patch dropout regularization~\cite{liu2023patchdropout, li2023scaling} for Peptides-func/struct, with fixed probability 0.5 for Peptides-func and probability sampled uniformly from $(0, 0.5)$ at each step for Peptides-struct.
For the sampling-based estimation, we use 1 sample for training, 10 samples for validation, and 100 samples for testing, except for PCQM-Contact where we always use 10 samples.

\subsection{Supplementary Experiments (Continued from Section~\ref{sec:experiments})}\label{sec:apdx_more_experiments}

In this section, we present additional experimental results that supplement the experiments in Section~\ref{sec:experiments} but could not be included in the main text due to space constraints.

\subsubsection{Graph Isomorphism Learning (Section~\ref{sec:exp})}\label{sec:apdx_exp_more}

\begin{table}[t]
\vspace{-0.2cm}
\caption{
Supplementary results for $\textnormal{S}_n$ invariant graph separation with $\textnormal{S}_n$ symmetrized GIN-ID base function.
Baseline scores for GIN-ID-GA and GIN-ID-FA are taken from \cite{puny2022frame}.
}
\centering
\begin{adjustbox}{width=0.65\textwidth}
    \begin{tabular}{lccccc}\label{table:exp_more}
        \\\Xhline{2\arrayrulewidth}\\[-0.9em]
        method & arch. & sym. & GRAPH8c $\downarrow$ & EXP $\downarrow$ & EXP-classify $\uparrow$
        \\\Xhline{2\arrayrulewidth}\\[-0.9em]
        GIN-ID-GA & - & $\textnormal{S}_n$ & 0 & 0 & 50\% \\
        GIN-ID-FA & - & $\textnormal{S}_n$ & 0 & 0 & \textbf{100\%} \\
        GIN-ID-Canonical. & - & $\textnormal{S}_n$ & 0 & 0 & 84\% \\
        GIN-ID-PS (Ours) & - & $\textnormal{S}_n$ & 0 & 0 & \textbf{100\%}\\ \midrule[0.2pt]
    \end{tabular}
\end{adjustbox}
\caption{
Supplementary results for $\textnormal{S}_n\times\textnormal{E}(3)$ equivariant $n$-body with $\textnormal{E}(3)$ symmetrized GNN base function.
Baseline scores for GNN-FA and GNN-Canonical. are from \cite{puny2022frame} and \cite{kaba2022equivariance}, respectively.
}
\vspace{-0.2cm}
\centering
\begin{adjustbox}{width=0.65\textwidth}
    \begin{tabular}{lccc}\label{table:nbody_more}
        \\\Xhline{2\arrayrulewidth}\\[-0.9em]
        method & arch. & sym. & Position MSE $\downarrow$
        \\\Xhline{2\arrayrulewidth}\\[-0.9em]
        GNN-FA& $\textnormal{S}_n$ & $\textnormal{E}(3)$ & 0.0057 \\
        GNN-Canonical. & $\textnormal{S}_n$ & $\textnormal{E}(3)$ & 0.0043 \\
        GNN-Canonical. (Reproduced) & $\textnormal{S}_n$ & $\textnormal{E}(3)$ & 0.00457 \\
        GNN-GA & $\textnormal{S}_n$ & $\textnormal{E}(3)$ & 0.00408 $\pm$ 0.00002 \\
        GNN-PS (Ours) & $\textnormal{S}_n$ & $\textnormal{E}(3)$ & \textbf{0.00386 $\pm$ 0.00001} \\ \midrule[0.2pt]
    \end{tabular}
\end{adjustbox}
\end{table}

In our experiments on graph isomorphism learning in Section~\ref{sec:exp}, we mainly experimented for $\textnormal{S}_n$ symmetrization of an MLP.
Here, we provide supplementary results on $\textnormal{S}_n$ symmetrization of a GIN base model with node identifiers, following \cite{puny2022frame}.
The results can be found in Table~\ref{table:exp_more}.
In accordance with Section~\ref{sec:exp}, our approach successfully performs $\textnormal{S}_n$ symmetrization of GIN-ID.

\subsubsection{Particle Dynamics Learning (Section~\ref{sec:nbody})}\label{sec:apdx_nbody_more}

In our experiments on $n$-body dataset in Section~\ref{sec:nbody}, we experimented for $\textnormal{S}_n\times\textnormal{E}(3)$ symmetrization using a 1D sequence transformer architecture which has $2.3\times$ parameters compared to baselines.
To provide parameter-matched comparison against baselines in literature, we apply our approach for $\textnormal{E}(3)$ symmetrization of $\textnormal{S}_n$ equivariant GNN base model that is widely used in literature~\cite{puny2022frame, kaba2022equivariance}.
We faithfully follow \cite{puny2022frame, kaba2022equivariance} on the experimental setups including training hyperparameters and the configuration of GNN base model, and only add $\textnormal{E}(3)$ equivariant distribution $p_\omega(g|\mathbf{x}-\bar{\mathbf{x}}\boldsymbol{1}^\top)$, \emph{i.e.}, $q_\omega:(\mathbf{x}-\bar{\mathbf{x}}\boldsymbol{1}^\top, \boldsymbol{\epsilon})\mapsto\mathbf{Q}_g$ by utilizing the 2-layer Vector Neurons architecture described in Appendix~\ref{sec:apdx_implementation_product_group} using only its $\textnormal{O}(3)$ prediction head.
We use 20 samples for training and testing.
The results can be found in Table~\ref{table:nbody_more}.
In accordance with the results in Section~\ref{sec:nbody}, our approach outperforms other symmetrization approaches and achieves a new state-of-the-art of 0.00386 MSE.

\subsubsection{Effect of Sample Size on Training and Inference}

\begin{figure}[t]
    \centering
    \includegraphics[width=0.35\textwidth]{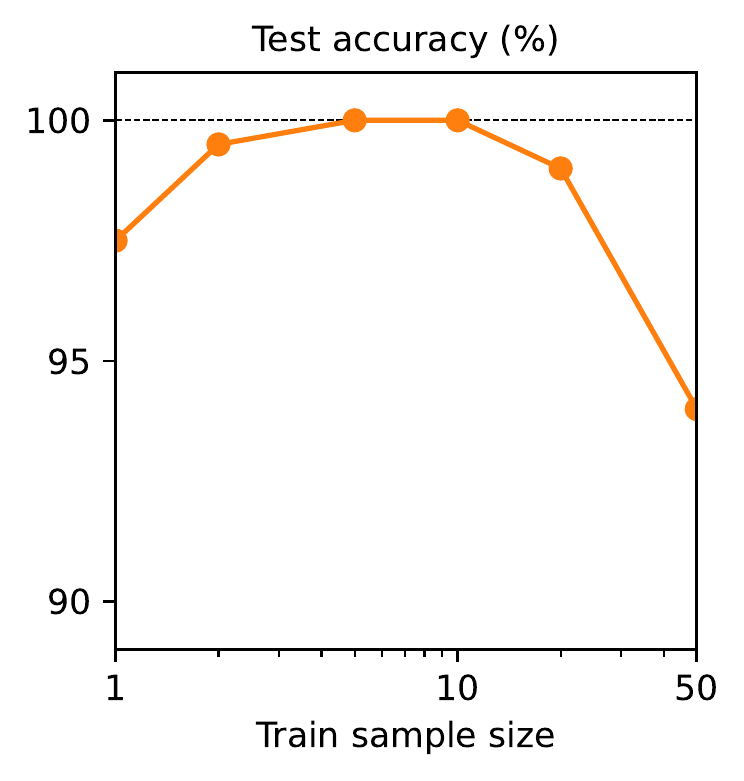}
    \caption{Test accuracy of MLP $f_\theta$ symmetrized by equivariant distribution $p_\omega(g|\mathbf{x})$ trained on EXP-classify dataset across a range of training sample sizes.
    Inference sample size is set to 10.}\label{fig:train_sample_size_test_accuracy}
    \includegraphics[width=0.35\textwidth]{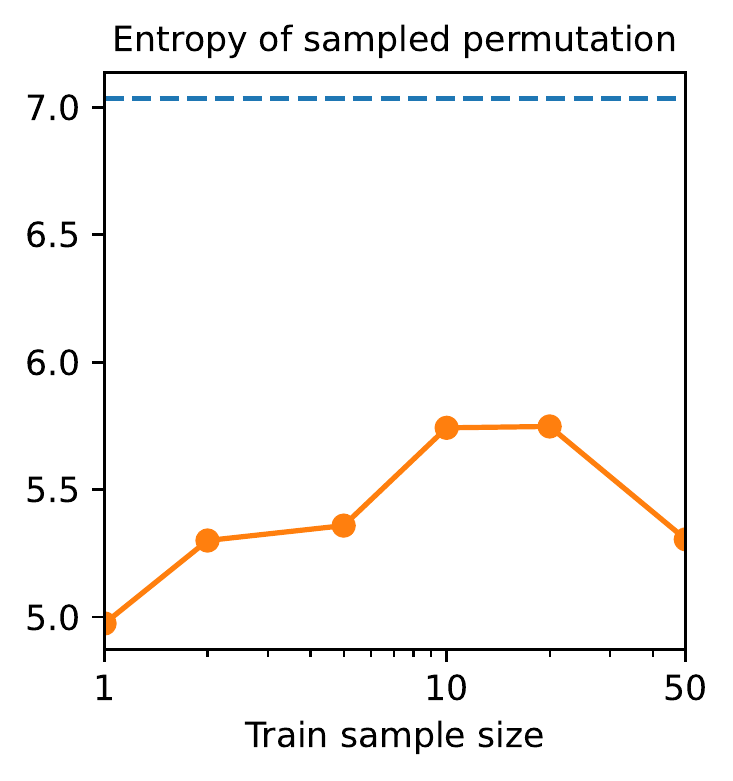}
    \vspace{0.3cm}
    \caption{Row- and column-wise entropy of aggregated permutation matrices $\mathbf{P}_g \sim p_\omega(g|\mathbf{x})$ after trained on EXP-classify dataset across a range of training sample sizes.
    Dashed line indicates entropy measured with random permutation matrices from $\textnormal{Unif}(G)$.}\label{fig:train_sample_size_entropy}
\end{figure}
\begin{figure}[t]
    \includegraphics[width=\textwidth]{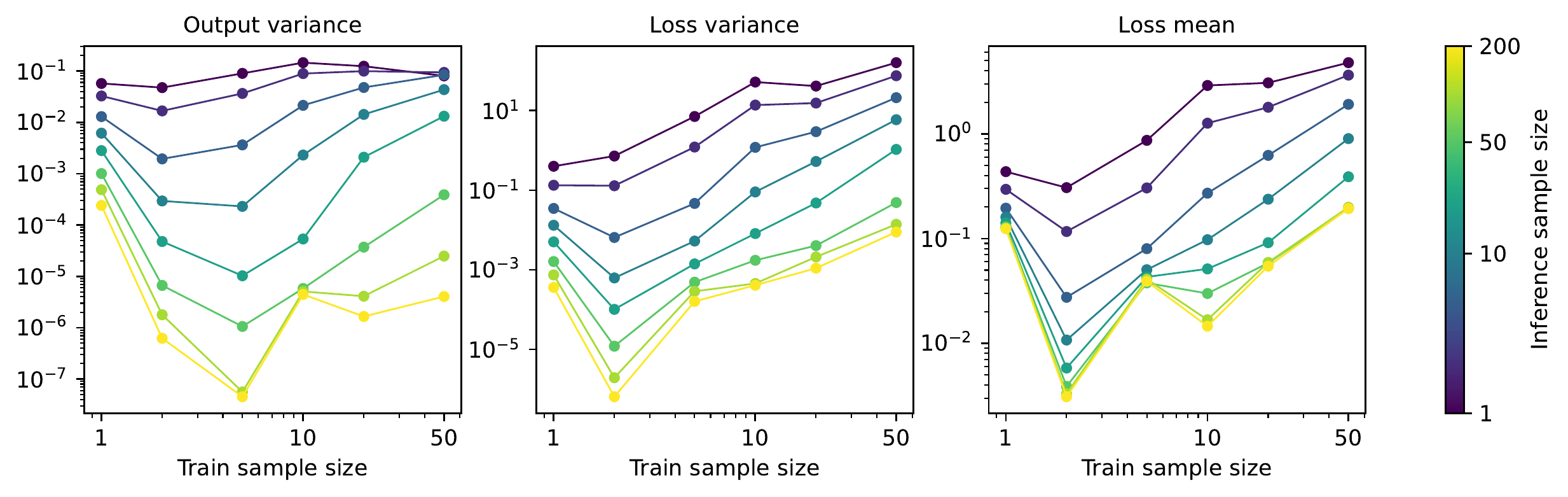}
    \caption{Variance of estimation of MLP $f_\theta$ symmetrized by equivariant distribution $p_\omega(g|\mathbf{x})$ and trained on EXP-classify dataset for a range of training and inference sample sizes.}\label{fig:train_sample_size}
\end{figure}

In this section, we provide additional analysis on how the sample size for estimation of symmetrized function (Eq.~\eqref{eq:probabilistic_symmetrization}) affects training and inference.
We use the experimental setup of EXP-classify (Section~\ref{sec:exp}; $\textnormal{S}_n$ invariance) and analyze the behavior of MLP-PS with identical initialization and hyperparameters, only controlling sample sizes $\in\{1, 2, 5, 10, 20, 50\}$ for training.
Specifically, we analyze \textbf{(1)} variance of permutation matrices $\mathbf{P}_g\sim p_\omega(g|\mathbf{x})$ measured indirectly by the entropy of their aggregation $\bar{\mathbf{P}}=\sum{\mathbf{P}_g}/N$ as in Section~\ref{sec:exp}, \textbf{(2)} sample variance of the unbiased estimator $g\cdot f_\theta(g^{-1}\cdot\mathbf{x})$ of the symmetrized function $\phi_{\theta,\omega}(\mathbf{x})$ as in Eq.~\eqref{eq:probabilistic_symmetrization}, and \textbf{(3)} sample mean and variance of the estimated task loss $\mathcal{L}(\mathbf{y}, g\cdot f_\theta(g^{-1}\cdot\mathbf{x}))$ where $\mathcal{L}$ is binary cross entropy.
All measurements are repeated 100 times and averaged over the inputs and labels $(\mathbf{x},\mathbf{y})$ of the validation dataset.

Observations are as follows.
First, models trained with smaller sample sizes need more iterations to converge, but after sufficient training (2000 epochs), all achieve $>95\%$ test accuracy when evaluated with 10 samples (Figure~\ref{fig:train_sample_size_test_accuracy}).
Second, models trained with smaller sample sizes tend to be more sample efficient, \emph{i.e.}, tend to perform a lower variance estimation.
Their distribution $p_\omega(g|\mathbf{x})$ tend to learn more low-variance permutations (Figure~\ref{fig:train_sample_size_entropy}), and the models tend to learn low-variance estimation of output and loss (left and center panels of Figure~\ref{fig:train_sample_size}).
This indicates that small sample size may serve as a regularizer that encourages lower variance of the estimator.
However, this regularization effect is not always beneficial in terms of task loss (right panel of Figure~\ref{fig:train_sample_size}), as training sample size 1 achieves a poor task loss for all sample sizes presumably due to the optimization challenge caused by over-regularization.
In other words, the sample size for training introduces a tradeoff; a small sample size takes more training iterations to converge, but serves as a regularizer that encourages lower variance of the estimator and thus a better inference time sample efficiency.
On the other hand, larger sample sizes for inference consistently benefits all models (Figure~\ref{fig:train_sample_size}).

Interestingly, this observed tendency is consistent with the theoretical claims in literature \cite{murphy2019janossy, murphy2019relational} on the sampling based training of symmetrized models, which we reprise here.
When training the symmetrized model $\phi_{\theta,\omega}(\mathbf{x})$ in Eq.~\eqref{eq:probabilistic_symmetrization}, we cannot directly observe $\phi_{\theta,\omega}(\mathbf{x})$, but observe samples of its unbiased estimator $g\cdot f_\theta(g^{-1}\cdot\mathbf{x})$.
Thus, it can be questionable what objective we are actually optimizing during the sampling-based training.
Based on \cite{murphy2019janossy, murphy2019relational}, it turns out that minimizing a convex loss function $\mathcal{L}$ on the estimated output $g\cdot f_\theta(g^{-1}\cdot\mathbf{x})$ is equivalent to minimizing an upper bound to the true objective on the symmetrized output $\phi_{\theta,\omega}(\mathbf{x})$.
This is because our estimation is no longer unbiased when computing loss, as we have the following from Jensen’s inequality:
\begin{align}
    \mathbb{E}_{p_\omega(g|\mathbf{x})}[\mathcal{L}(\mathbf{y}, g\cdot f_\theta(g^{-1}\cdot\mathbf{x}))] \geq \mathcal{L}(\mathbf{y}, \mathbb{E}_{p_\omega(g|\mathbf{x})}[g\cdot f_\theta(g^{-1}\cdot\mathbf{x})]) = \mathcal{L}(\mathbf{y},\phi_{\theta,\omega}(\mathbf{x})).
\end{align}
That is, minimizing the sampling-based loss is minimizing an upper-bound surrogate to the true objective.
It has been claimed that optimizing this upper bound has an implicit low-variance regularization effect~\cite{murphy2019janossy, murphy2019relational}, which is consistent with our observations.
This also roughly explains why our distribution $p_\omega(g|\mathbf{x})$ does not collapse to uniform distribution although we do not impose any low-variance regularization explicitly; training to directly minimize the task loss with samples implicitly nudges the distribution towards low-variance solutions.

\subsubsection{Additional Comparison to Group Averaging}

\begin{figure}[t]
    \centering
    \includegraphics[width=0.7\textwidth]{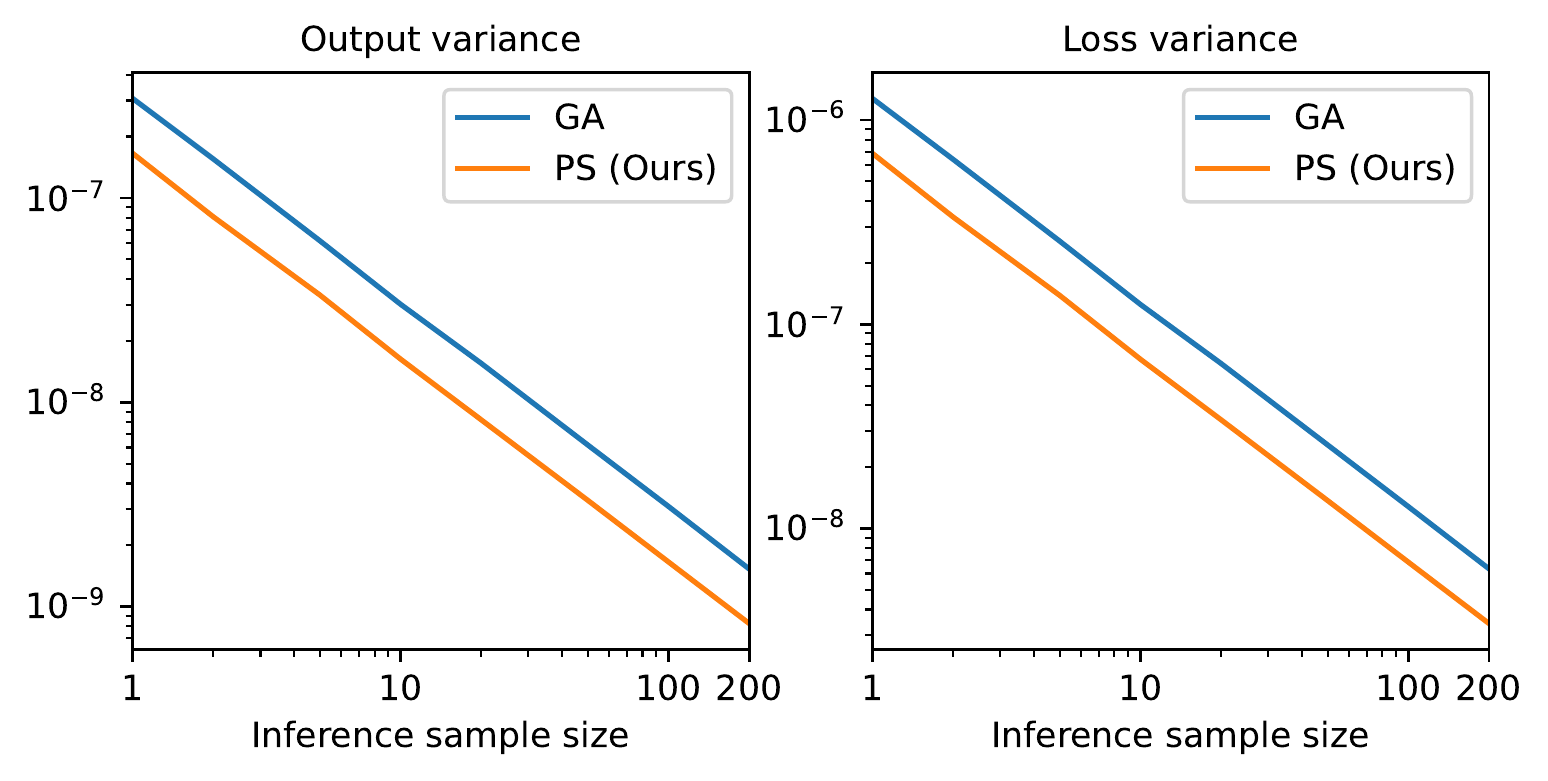}
    \caption{Sample variance of output $g\cdot f_\theta(g^{-1}\cdot\mathbf{x})$ (left) and loss $\mathcal{L}(\mathbf{y}, g\cdot f_\theta(g^{-1}\cdot\mathbf{x}))$ (right) of an identical MLP $f_\theta$ symmetrized by equivariant distribution (PS) and uniform distribution (GA).}\label{fig:sample_variance}
    \vspace{0.5cm}
    \includegraphics[width=0.35\textwidth]{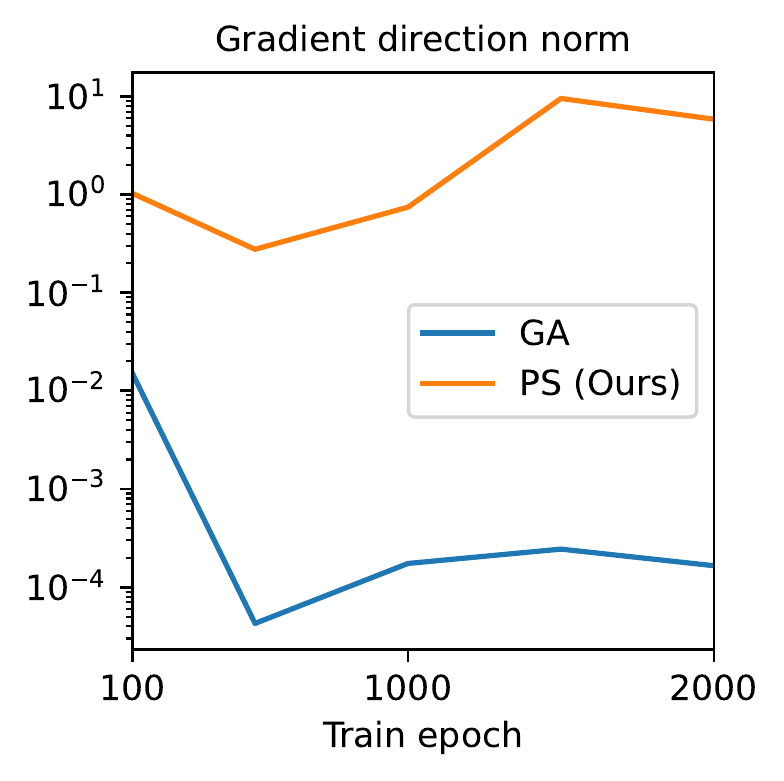}
    \caption{Norm of full gradient over training epochs with respect to the parameters of an identically initialized MLP symmetrized by equivariant distribution (PS) and uniform distribution (GA).}\label{fig:grad_norm}
\end{figure}

In this section, we provide additional analysis of our approach in comparison to sampling-based group averaging in terms of sample variance and convergence.
We use the experimental setup of EXP-classify (Section~\ref{sec:exp}; $\textnormal{S}_n$ invariance) and experiment with MLP-PS and MLP-GA.

We first analyze whether using the equivariant distribution $p_\omega(g|\mathbf{x})$ for symmetrization offers a lower variance estimation, \emph{i.e.}, a better sample efficiency, compared to group averaging with $\textnormal{Unif}(G)$.
This supplements the results in Figure~\ref{fig:entropy_over_time} that $p_\omega(g|\mathbf{x})$ learns to produce low-variance permutations compared to $\textnormal{Unif}(G)$.
Specifically, we fix a randomly initialized MLP $f_\theta$ and symmetrize it using our approach and group averaging.
We then measure \textbf{(1)} the sample variance of the unbiased estimator $g\cdot f_\theta(g^{-1}\cdot\mathbf{x})$ of the symmetrized function (Eq.~\eqref{eq:group_averaging} and Eq.~\eqref{eq:probabilistic_symmetrization}), and \textbf{(2)} the sample variance of the estimated task loss $\mathcal{L}(\mathbf{y}, g\cdot f_\theta(g^{-1}\cdot\mathbf{x}))$ where $\mathcal{L}$ is binary cross entropy.
All measurements are repeated 100 times and averaged over the inputs and labels $(\mathbf{x},\mathbf{y})$ of the validation dataset.
The results are in Figure~\ref{fig:sample_variance}, showing that symmetrization with equivariant distribution $p_\omega(g|\mathbf{x})$ consistently offers a lower variance estimation than group averaging across inference sample sizes.

In addition, we analyze whether the equivariant distribution $p_\omega(g|\mathbf{x})$ for symmetrization offers more stable gradients for the base function $f_\theta$ during training compared to group averaging, as conjectured in Section~\ref{sec:method}.
For this, we fix a randomly initialized MLP $f_\theta$ and symmetrize it using our approach and group averaging.
For every few training epochs, we measure the full gradient of the task loss over the entire training dataset with respect to the parameters of the base MLP $f_\theta$.
This averages out the variance from individual data points and provides the net direction of the gradient on the base function offered by $p_\omega(g|\mathbf{x})$ or $\textnormal{Unif}(G)$.
The results are in Figure~\ref{fig:grad_norm}, showing that that symmetrization with equivariant distribution $p_\omega(g|\mathbf{x})$ offers a consistently larger magnitude of the net gradient, while group averaging with $\textnormal{Unif}(G)$ leads to near-zero net gradients.
This indicates, for $\textnormal{Unif}(G)$, the gradients from each training data instances are oriented in a largely divergent manner and therefore the training signal is collectively not very informative, while using $p_\omega(g|\mathbf{x})$ for symmetrization leads to more consistent gradient across training data instances, \emph{i.e.}, it offers a more stable training signal.

\subsubsection{Additional Comparison to Canonicalization}

\begin{figure}[t]
     \centering
     \begin{subfigure}[b]{0.28\linewidth}
         \centering
         \includegraphics[width=\textwidth]{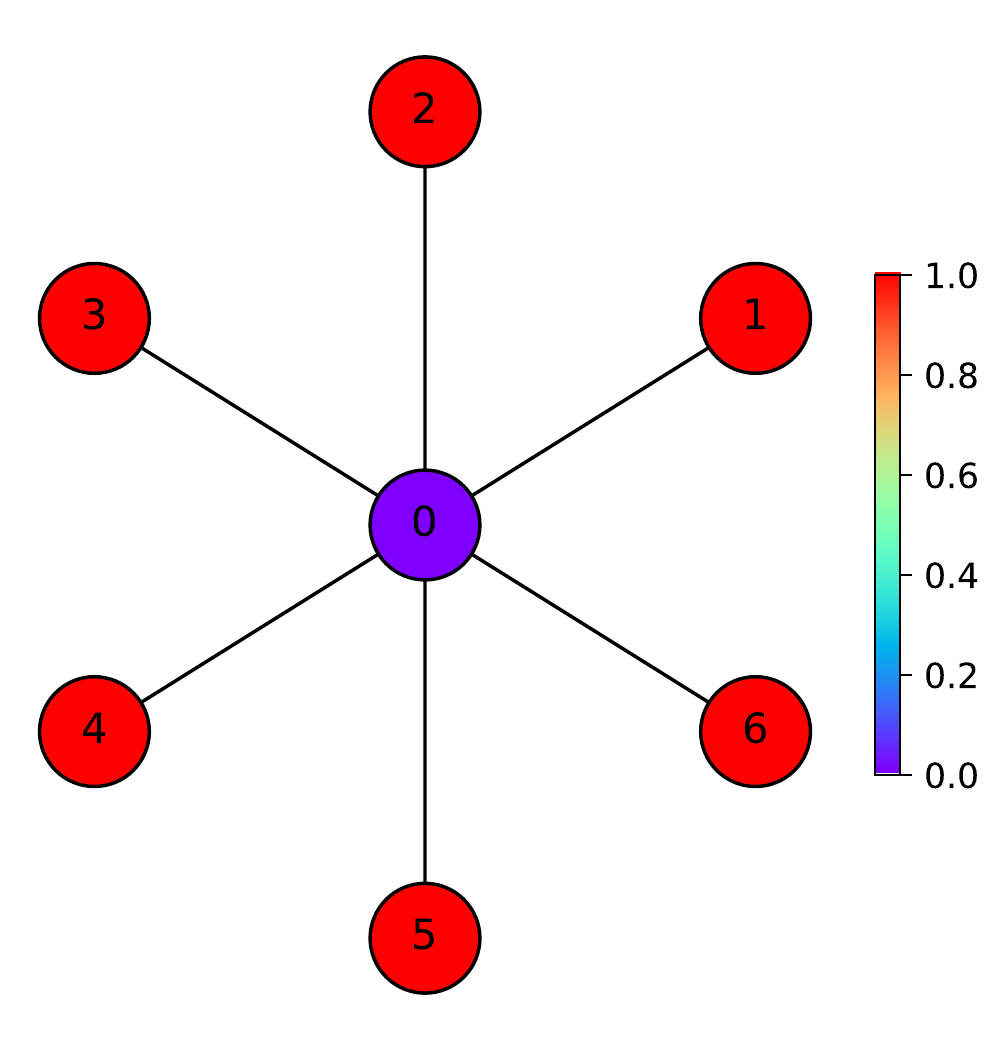}
         \caption{Inner-symmetric nodes}
     \end{subfigure}
     \hfill
     \begin{subfigure}[b]{0.28\linewidth}
         \centering
         \includegraphics[width=\textwidth]{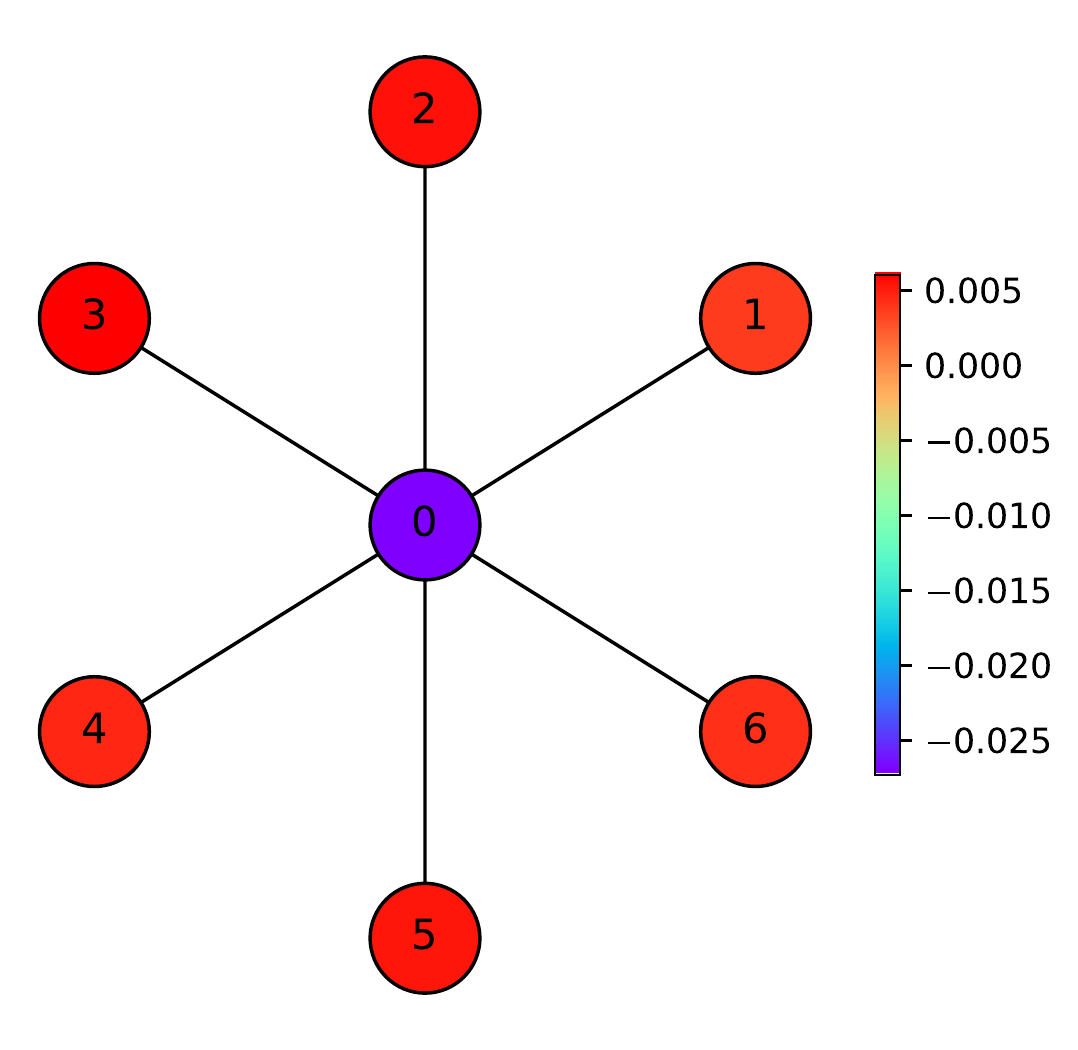}
         \caption{MLP-PS (Ours)}
     \end{subfigure}
     \hfill
     \begin{subfigure}[b]{0.28\linewidth}
         \centering
         \includegraphics[width=\textwidth]{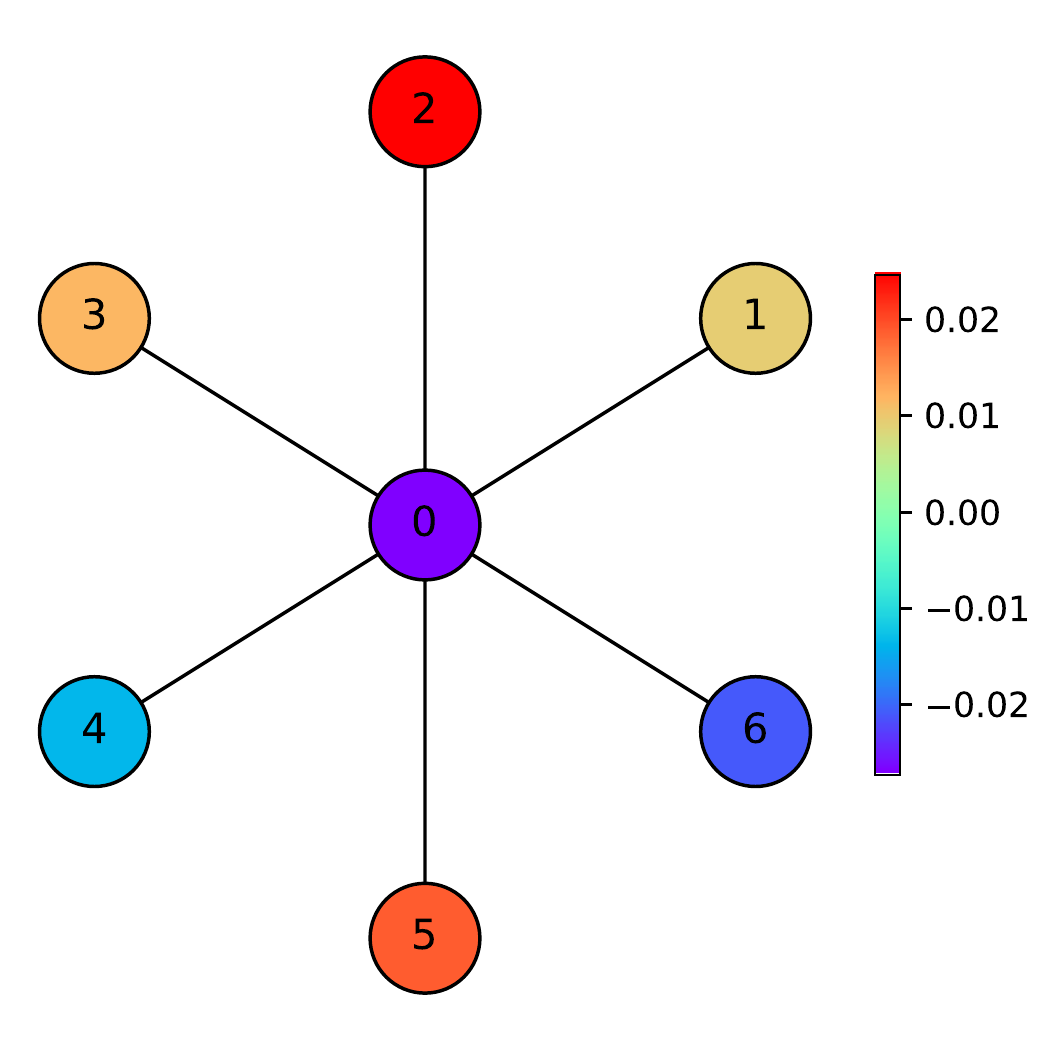}
         \caption{MLP-Canonical.}
     \end{subfigure}
    \caption{A graph $\mathbf{x}$ with stabilizer subgroup $G_\mathbf{x}\cong\textnormal{S}_6$.}\label{fig:inner_symmetry_1}
     \centering
     \begin{subfigure}[b]{0.28\linewidth}
         \centering
         \includegraphics[width=\textwidth]{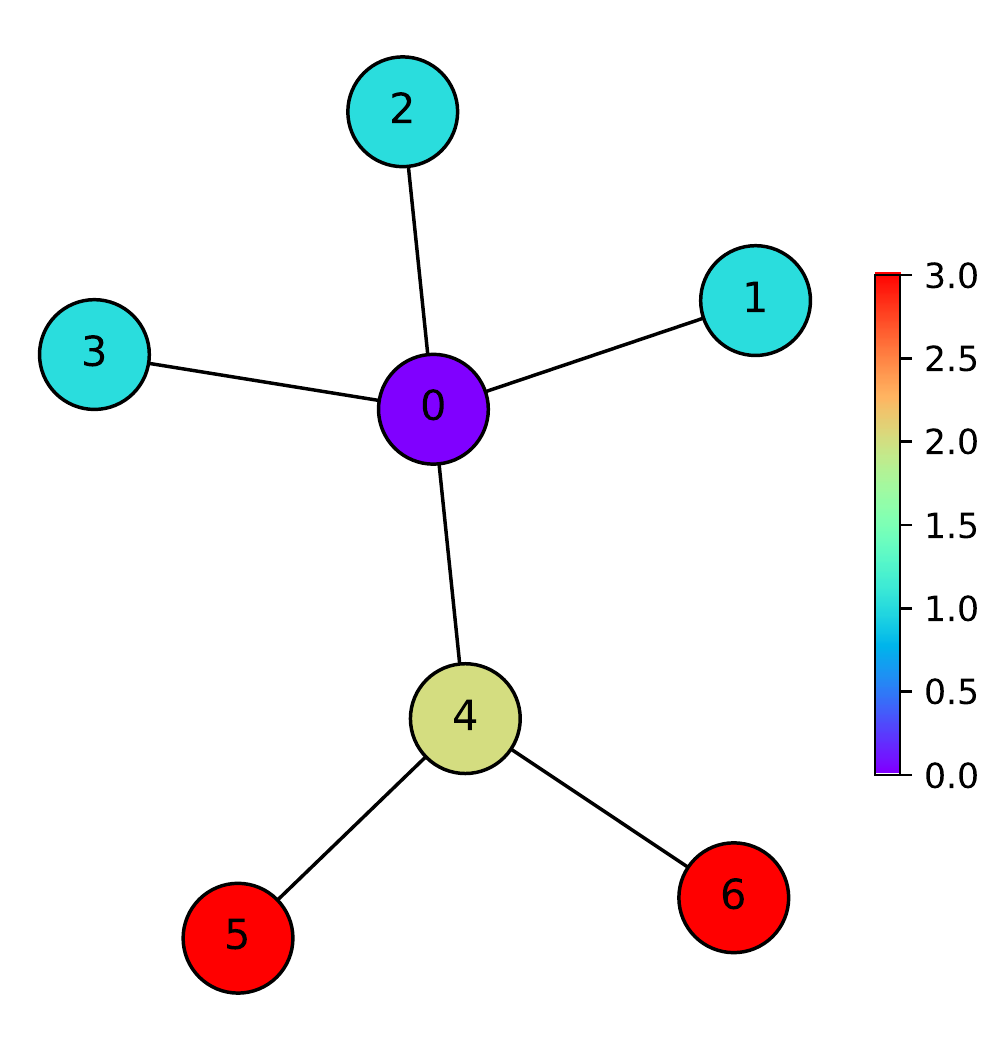}
         \caption{Inner-symmetric nodes}
     \end{subfigure}
     \hfill
     \begin{subfigure}[b]{0.28\linewidth}
         \centering
         \includegraphics[width=\textwidth]{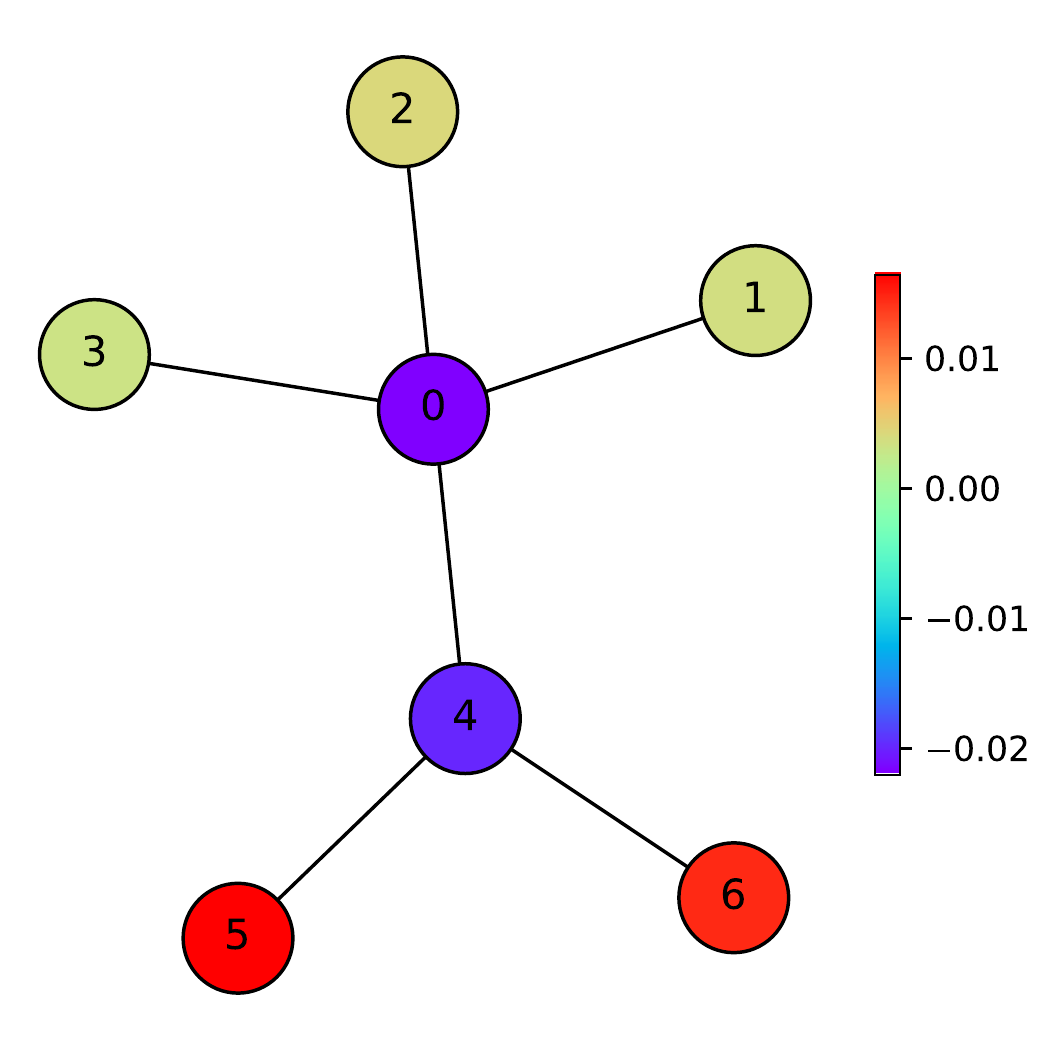}
         \caption{MLP-PS (Ours)}
     \end{subfigure}
     \hfill
     \begin{subfigure}[b]{0.28\linewidth}
         \centering
         \includegraphics[width=\textwidth]{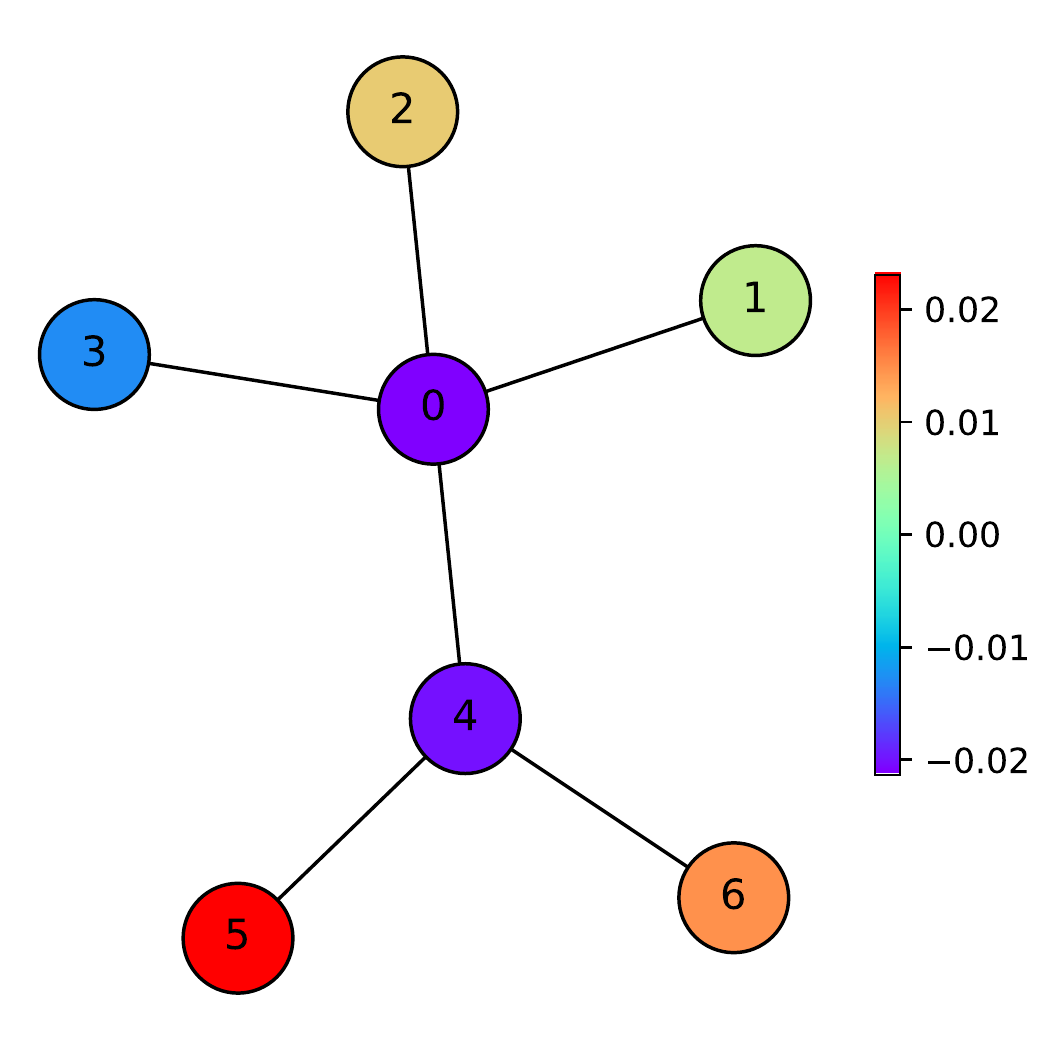}
         \caption{MLP-Canonical.}
     \end{subfigure}
    \caption{A graph $\mathbf{x}$ with stabilizer subgroup $G_\mathbf{x}\cong\textnormal{S}_3\times\textnormal{S}_2$.}\label{fig:inner_symmetry_2}
     \begin{subfigure}[b]{0.28\linewidth}
         \centering
         \includegraphics[width=\textwidth]{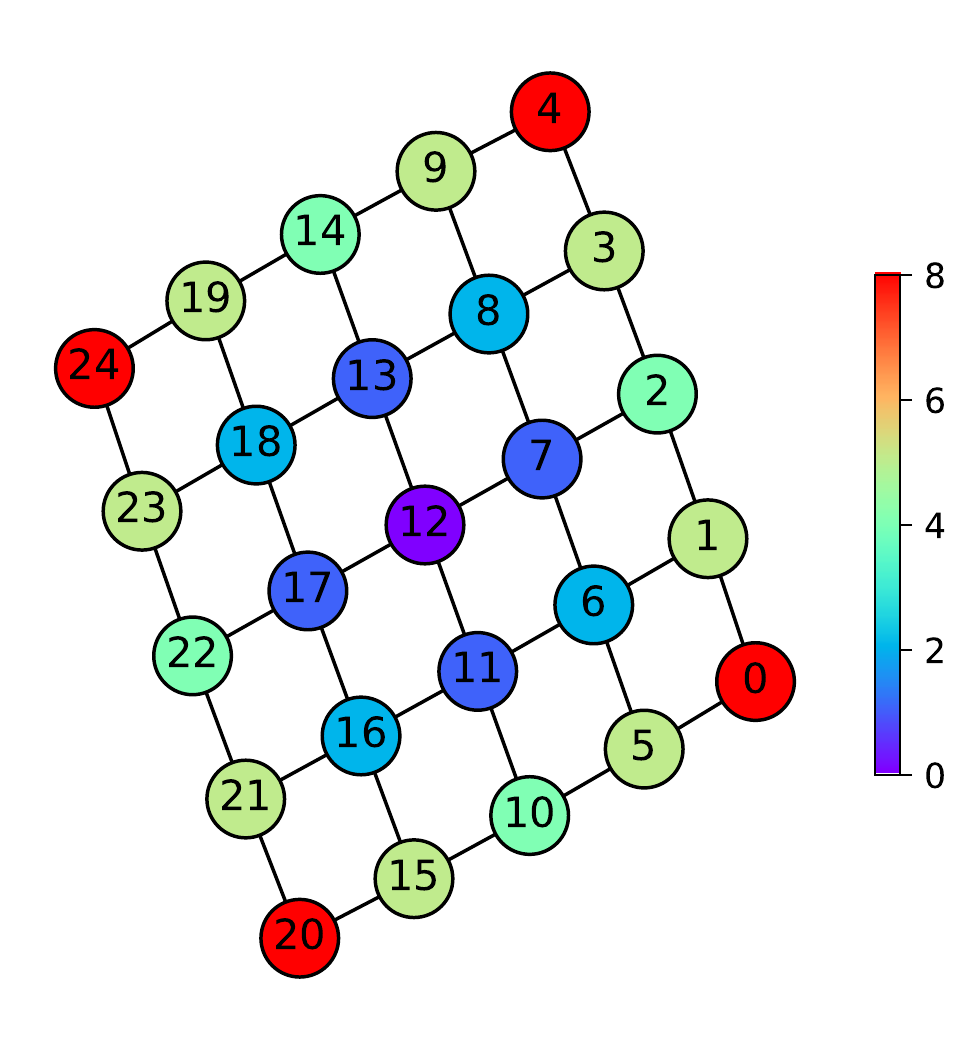}
         \caption{Inner-symmetric nodes}
     \end{subfigure}
     \hfill
     \begin{subfigure}[b]{0.28\linewidth}
         \centering
         \includegraphics[width=\textwidth]{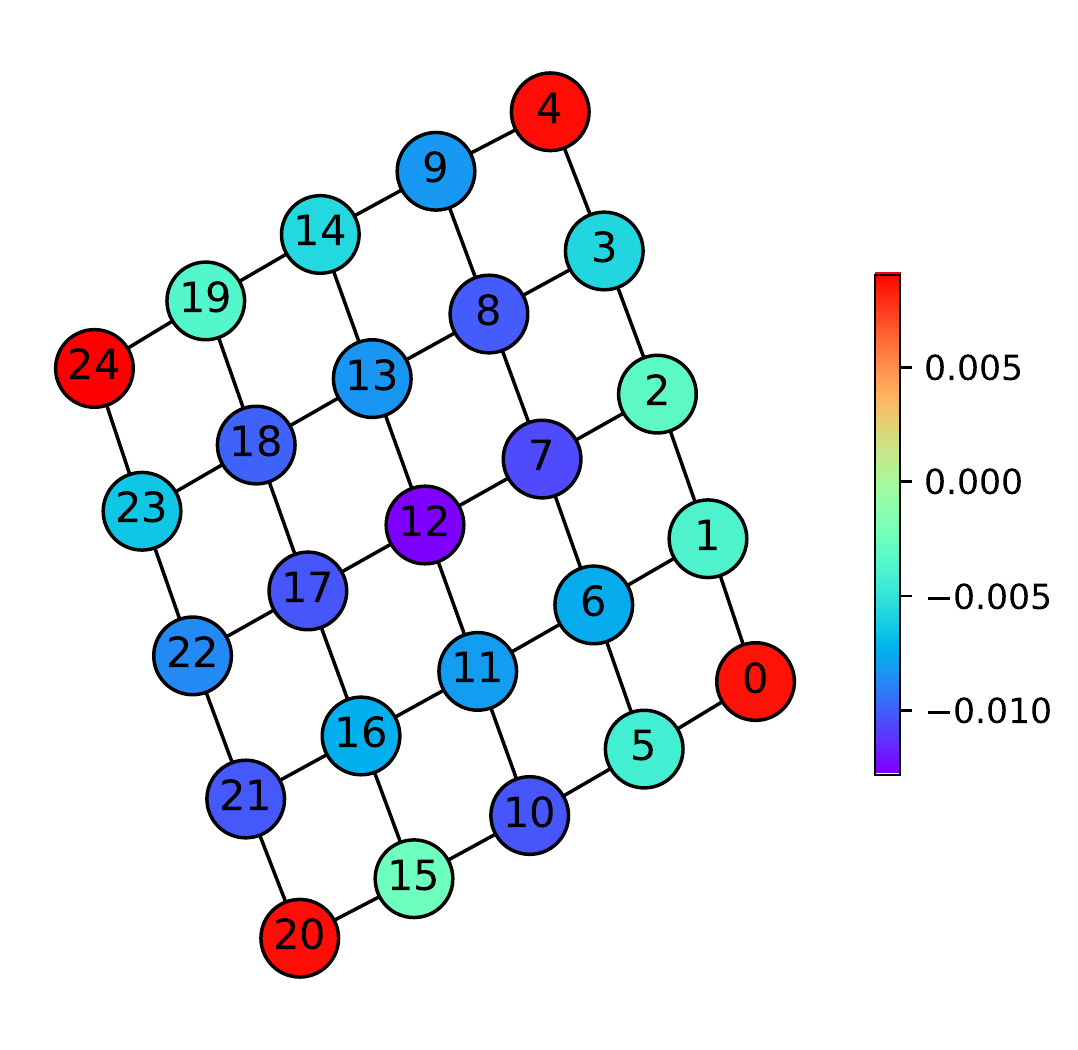}
         \caption{MLP-PS (Ours)}
     \end{subfigure}
     \hfill
     \begin{subfigure}[b]{0.28\linewidth}
         \centering
         \includegraphics[width=\textwidth]{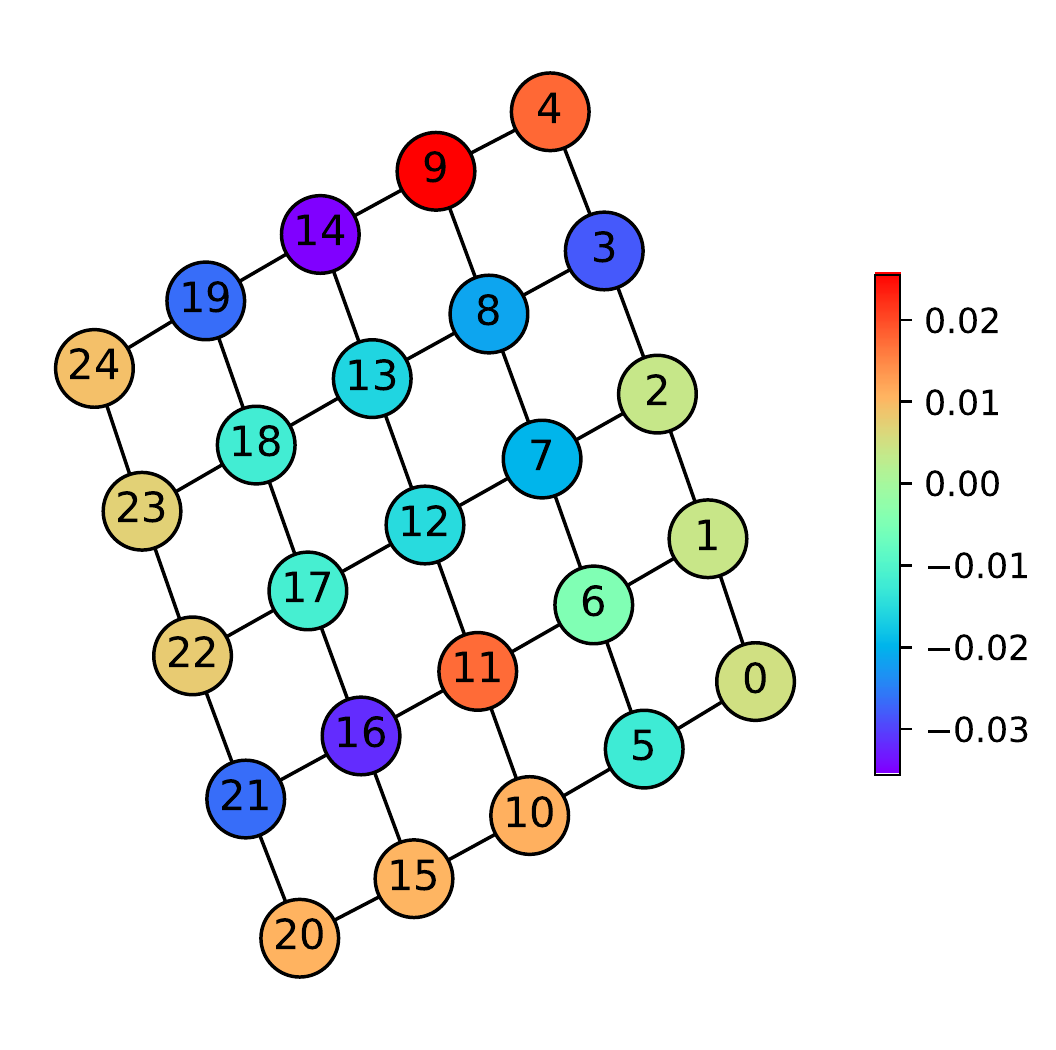}
         \caption{MLP-Canonical.}
     \end{subfigure}
    \caption{A graph $\mathbf{x}$ with stabilizer subgroup $G_\mathbf{x}\cong\textnormal{D}_4$.}\label{fig:inner_symmetry_3}
\end{figure}

In this section, we provide additional analysis of our approach in comparison to canonicalization~\cite{kaba2022equivariance} that uses a single group element $g$ from an equivariant canonicalizer $C_\omega:\mathbf{x}\mapsto \rho(g)$.
The main claim is that there always exist certain inputs that canonicalization fails to guarantee exact $G$ equivariance, while our approach guarantees equivariance for all inputs in expectation as in Theorem~\ref{thm:probabilistic_symmetrization_equivariance}.

More specifically, let us recall the definition of $G$ equivariant canonicalizer from \cite{kaba2022equivariance}.
A canonicalizer $C_\omega$ is $G$ equivariant if $C_\omega(g\cdot\mathbf{x}) = \rho(g)C_\omega(\mathbf{x})$ for all $g\in G$ and $\mathbf{x}\in\mathcal{X}$.
Consider an input $\mathbf{x}$ which has a non-trivial stabilizer $G_\mathbf{x} = {h \in G | h\cdot\mathbf{x} = \mathbf{x}}$, \emph{i.e.}, has inner symmetries.
It can be shown that equivariant canonicalizers are ill-defined for these inputs.
To see this, let $g_1 = gh_1$ and $g_2 = gh_2$ for some $g\in G$ and any $h_1, h_2 \in G_\mathbf{x}$ where $h_1 \neq h_2$.
Then we have $C_\omega(g_1\cdot\mathbf{x}) = C_\omega(gh_1\cdot\mathbf{x}) = C_\omega(g\cdot\mathbf{x}) = C_\omega(gh_2\cdot\mathbf{x}) = C_\omega(g_2\cdot\mathbf{x})$, implying that $\rho(g_1)C_\omega(\mathbf{x}) = \rho(g_2)C_\omega(\mathbf{x})$.
Since $g_1\neq g_2$, this contradicts the group axiom, and thus an equivariant canonicalizer cannot exist for inputs with non-trivial inner-symmetries.
To handle all inputs, canonicalization~\cite{kaba2022equivariance} adopts relaxed equivariance: a canonicalizer $C_\omega$ satisfies relaxed equivariance if $C_\omega(g\cdot\mathbf{x}) = \rho(gh)C_\omega(\mathbf{x})$ up to arbitrary action from the stabilizer $h\in G_\mathbf{x}$.
As a result, the symmetrization $\phi_{\theta,\omega}(\mathbf{x}) = g\cdot f_\theta(g^{-1}\cdot\mathbf{x})$ performed using a relaxed canonicalizer $C_\omega$ only guarantees relaxed equivariance $\phi_{\theta,\omega}(g\cdot\mathbf{x}) = gh\cdot\phi_{\theta,\omega}(\mathbf{x})$ up to arbitrary action from the stabilizer $h \in G_\mathbf{x}$ (Theorem A.2 of \cite{kaba2022equivariance}).
In other words, canonicalization does not guarantee equivariance for inputs with inner symmetries.

To visually demonstrate this, we perform a minimal experiment using several graphs $\mathbf{x}$ with non-trivial stabilizers $G_\mathbf{x}$, \emph{i.e.}, inner symmetries, taken from~\cite{thiede2021autobahn}.
We fix a randomly initialized MLP $f_\theta:\mathbb{R}^{n\times n}\to\mathbb{R}^n$ and symmetrize it using our approach and canonicalization.
When symmetrized, the MLP is expected to provide scalar embedding of each node, which we color-code for visualization.
The results are in Figures~\ref{fig:inner_symmetry_1}, \ref{fig:inner_symmetry_2}, and \ref{fig:inner_symmetry_3}.
For each graph, we illustrate three panels: the leftmost one illustrates the color-coding of the inner symmetry of nodes (automorphism), the middle one illustrates node embedding from MLP-PS, and the rightmost one illustrates embedding from MLP-Canonical. If a method is $G$ equivariant, it is expected to give identical embeddings for automorphic nodes, since an equivariant model cannot distinguish them in principle~\cite{srinivasan2020on}.
As in the Figures~\ref{fig:inner_symmetry_1}, \ref{fig:inner_symmetry_2}, and \ref{fig:inner_symmetry_3}, in the presence of inner symmetry (left panels), MLP with probabilistic symmetrization (middle panels) achieves $G$ equivariance and produces close embeddings for automorphic nodes.
However, the same MLP with canonicalization produces relatively unstructured embeddings (right panels).
The result illustrates a potential advantage of probabilistic symmetrization over canonicalization when learning data with inner symmetries, which is often found in applications such as molecular graphs~\cite{mckey2022surge}.

\subsection{Limitations and Broader Impacts (Continued from Section~\ref{sec:conclusion})}\label{sec:apdx_limitations}
While the equivariance, universality, simplicity, and scalability of our approach offers a potential for positive impact for deep learning for chemistry, biology, physics, and mathematics, it also has limitations and potential negative impacts.
The main limitation of our work is that it trades off certain desirable traits in equivariant deep learning in favor of achieving architecture agnostic equivariance.
For example, \textbf{(1)} our approach is less interpretable compared to equivariant architectures due to less structured computations in the base model, \textbf{(2)} our approach is presumably less parameter and data efficient compared to equivariant architectures due to less imposed prior knowledge on parameterization, and \textbf{(3)} our approach is expected to be challenged when input size generalization is required, partially because the maximum input size has to be specified in advance.
Another genuine weakness compared to canonicalization is that, our method is stochastic and therefore incurs $\mathcal{O}(N)$ cost when using $N$ samples for estimation.
These limitations might lead to potential negative environmental impacts, since less interpretability and lower efficiency implies higher reliance on larger models with more computation cost.
We acknowledge the aforementioned limitations and impacts of our work, and will make effort to address them in follow-up research.
For example, we believe data efficiency of our approach could improve with pretrained knowledge transfer from other domains, as it would impose a strong prior on the hypothesis space and may work similarly to architectural priors that benefit data efficiency.
Also, for the sampling cost, since the sampling is completely parallelizable and analogous to using a larger batch size, we believe it can be overcome to some degree by leveraging parallel computing techniques developed for scaling batch size.



\end{document}